%% file: main.tex
\documentclass[12pt]{colt2026} 

\def\arxivbuild{}  % comment this line out for the camera-ready build

\ifdefined\arxivbuild
  \jmlrproceedings{}{}
  \jmlrvolume{}
  \jmlryear{}
  \jmlrworkshop{}
\fi
\title[Strongly Polynomial Time Complexity of Policy Iteration for $L_\infty$ Robust MDPs]{Strongly Polynomial Time Complexity of Policy Iteration\\ for $L_\infty$ Robust MDPs}
\usepackage{times}
\usepackage{amsfonts}
\usepackage{tikz}
\usepackage{listings}
\usepackage{xcolor}
\usepackage{algorithm}
\usepackage{enumerate,enumitem}
\usepackage{multicol,multirow}
\usepackage{hyperref}
\usepackage{paralist}
\usepackage{mathtools}
\usepackage{verbatim}
\usepackage[T1]{fontenc}
\usetikzlibrary{patterns, decorations.pathreplacing, arrows.meta, positioning}

\input{marcos}

\newtheorem*{mainthm}{Theorem}
\coltauthor{%
 \Name{Ali Asadi} \Email{ali.asadi@ist.ac.at}\\
 \Name{Krishnendu Chatterjee} \Email{krishnendu.chatterjee@ist.ac.at}\\
 \Name{Ehsan Goharshady} \Email{ehsan.goharshady@ist.ac.at}\\
 \Name{Mehrdad Karrabi} \Email{mehrdad.karrabi@ist.ac.at}\\
 \Name{Alipasha Montaseri} \Email{alipasha.montaseri@ist.ac.at}\\
 \Name{Carlo Pagano} \Email{carlo.pagano@concordia.ca}
}

\begin{document}

\maketitle

\begin{abstract}
Markov decision processes (MDPs) are a fundamental model in sequential decision making. Robust MDPs (RMDPs) extend this framework by allowing uncertainty in transition probabilities and optimizing against the worst-case realization of that uncertainty. In particular, $(s, a)$-rectangular RMDPs with $L_\infty$ uncertainty sets form a fundamental and expressive model: they subsume classical MDPs and turn-based stochastic games. We consider this model with discounted payoffs. The existence of polynomial and strongly-polynomial time algorithms is a fundamental problem for these optimization models. For MDPs, linear programming yields polynomial-time algorithms for any arbitrary discount factor, and the seminal work of Ye established strongly-polynomial time for a fixed discount factor. The generalization of such results to RMDPs has remained an important open problem. In this work, we show that a robust policy iteration algorithm runs in strongly-polynomial time for $(s, a)$-rectangular $L_\infty$ RMDPs with a constant (fixed) discount factor, resolving an important algorithmic question.

\end{abstract}

\begin{keywords}%
    Robust Markov decision processes (RMDPs), Robust policy iteration, Strongly polynomial algorithms, Discounted-sum objectives
\end{keywords}

\input{intro}

\input{prelims}
\input{algorithm}

\input{rmc}

\input{rmdp}
\input{conclu}
\newpage
\pagebreak
% Acknowledgments---Will not appear in anonymized version
\acks{We thank Morteza Saghafian for his helpful suggestions when preparing the paper. This work was supported by Vienna Science and Technology Fund (WWTF), State of Lower Austria [Grant ID 10.47379/ICT25017], ERC CoG 863818
(ForM-SMArt) and Austrian Science Fund (FWF) 10.55776/COE12.}

\bibliography{bibliography}
\newpage
\pagebreak 
\appendix
\input{appendix}

\end{document}

%% file: marcos.tex
\newcommand{\states}{\mathcal{S}}
\newcommand{\trans}{P}
\newcommand{\cost}{c}
\newcommand{\R}{\mathbb{R}}
\newcommand{\E}{\mathbb{E}}

\newcommand{\val}{\boldsymbol{v}}
\newcommand{\discount}{\gamma}
\newcommand{\discountfactor}{\discount}
\newcommand{\mc}{M}
\newcommand{\rmc}{\mathcal{M}}
\newcommand{\uncert}{\mathcal{P}}
\newcommand{\uncertaintyset}{\uncert}
\newcommand{\envpol}{\tau}
\newcommand{\actions}{\mathcal{A}}
\newcommand{\Actions}{\actions}
\newcommand{\rmdp}{\mathcal{R}}
\newcommand{\agentpol}{\sigma}
\newcommand{\radius}{\delta}
\newcommand{\envpolopt}{\envpol^*}
\newcommand{\agentpolopt}{\agentpol^*}
\newcommand{\valopt}{\val^*}
\newcommand{\I}{\mathbb{I}}
\newcommand{\identitymatrix}{\I}
\newcommand{\Bellman}{\mathcal{T}}
\newcommand{\valvector}{\val}
\newcommand{\costvector}{\mathbf{c}}
\newcommand{\pvector}{\mathbf{p}}
\DeclareMathOperator*{\argmin}{argmin}
\DeclareMathOperator*{\argmax}{argmax}
\newcommand{\nominal}{\hat{P}}

\newcommand{\uvector}{\boldsymbol{u}}
\newcommand{\vvector}{\boldsymbol{v}}
\newcommand{\infnorm}[1]{\left\| #1 \right\|_\infty}

\newcommand{\States}{\states}
\newcommand{\lone}{\mathbf{L_1}}
\newcommand{\linf}{\mathbf{L_\infty}}
\renewcommand{\succ}{\textit{Succ}}
\newcommand{\receiver}{R}
\newcommand{\donor}{D}
\newcommand{\incomplete}{I}
\newcommand{\zero}{Z}

\renewcommand{\paragraph}[1]{{\smallskip\noindent\textbf{#1}}}
\newenvironment{proofsketch}{\paragraph{Proof Sketch.}}{\hfill$\blacksquare$}

%% file: intro.tex
\section{Introduction}

\paragraph{Robust Markov Decision Processes.}
Markov decision processes (MDPs) are a fundamental model in sequential decision making, where an agent interacts with a finite-state stochastic environment~\citep{DBLP:books/wi/Puterman94}. Solving MDPs focuses on minimizing the expected payoff with respect to some given cost function. In the context of MDPs, a common assumption is that the transition function is known. Although this assumption is plausible from a theoretical perspective, it is not always justified in practice since MDPs are constructed from data, and the transition functions are estimated with a level of uncertainty. This issue has motivated the study of \emph{Robust Markov decision processes} (RMDPs)~\citep{nilim2005robust,iyengar2005robust}. RMDPs weaken the assumption by only assuming knowledge of some uncertainty set containing the true transition function. The goal of solving RMDPs is to minimize the worst-case expected payoff with respect to all possible choices of transition functions belonging to the uncertainty set.

\paragraph{Discounted-sum payoff.} The agent aims to minimize a \emph{payoff function} which formally captures the desired behavior of the model. Discounted-sum payoffs are the most fundamental payoffs, i.e., given a cost function and discount factor, the payoff of a run (infinite trajectory of states and actions) is the sum of discounted costs of the run.

\paragraph{Policies and values.} Policies are recipes that define the choice of actions of the agent and the environment. They are functions that, given a finite sequence of states and actions (also known as history), return a distribution over actions. Given an RMDP and a discount factor, the payoff minimization is achieved over all possible choices of agent policies. The value of the agent at a state is the minimal payoff that the agent can guarantee against all policies of the environment.

\paragraph{$\linf$ uncertainty sets.} A central modeling choice in RMDPs is how we describe the uncertainty of transition probabilities. The literature considers several classes of uncertainty, including $(s, a)$-rectangular, $s$-rectangular, and more general non-rectangular models. In $(s, a)$-rectangular RMDPs, the environment selects transition probabilities separately for each state-action pair. In this paper, we adopt this rectangularity assumption for two reasons: (a)~it matches a common data-driven setting where each state-action pair $(s,a)$ has transition probabilities estimated from local samples, so uncertainty decomposes across state-action pairs; and (b) it preserves a dynamic-programming structure, yielding robust Bellman operators~\citep{nilim2005robust,iyengar2005robust}. We further assume $\linf$ uncertainty sets, i.e., for every state-action pair $(s, a)$, the true transition vector lies in an $\linf$ ball of radius $\delta_{s,a}$ around a nominal estimate $\nominal_{s,a}$.

\paragraph{Motivation.} The study of $(s, a)$-rectangular RMDPs with $\linf$ uncertainty sets is motivated by three reasons: (a)~they generalize several classical formalisms, including MDPs, and turn-based zero-sum stochastic games, where two adversarial players interact over a finite-state graph (see reduction in Appendix~\ref{app:stochastic-games}); (b) they provide a simple and interpretable bound on the \emph{maximum coordinate-wise} estimation error for the transition probabilities, which arises naturally from sample-based estimation~\citep{nilim2005robust,DBLP:journals/ai/GivanLD00,DBLP:journals/ai/DelgadoBDS16}; and (c) they keep the robust Bellman update computationally efficient~\citep{DBLP:conf/nips/BehzadianPH21}. Hence, $(s, a)$-rectangular RMDPs with $\linf$ uncertainty sets are a fundamental model.

\paragraph{Exact vs.~approximate value.} While approximation algorithms for RMDPs
are well-studied, computing the exact value is important for the following reasons.
First, the exact value is a fundamental theoretical question: in MDPs with
discounted-sum payoffs, strongly-polynomial additive approximation is easy via
value iteration, and the seminal work of~\cite{DBLP:journals/mor/Ye11a}
establishes a strongly-polynomial algorithm for the exact value. Second, for
MDPs and RMDPs with discounted-sum objectives, additive approximation guarantees
are easy to obtain via value iteration, however, multiplicative-factor
approximation (guaranteeing a high fraction of the optimal value) is not easy,
and solving for the exact value gives a stronger guarantee.

\paragraph{Previous computational results.} In the study of MDPs and related optimization problems, the study of efficient algorithms is a central problem. For example, MDPs with discounted-sum objectives can be solved in polynomial-time via linear-programming~\citep{d1963probabilistic,Derman1972FiniteStateMDP}. 
It has been a long-standing problem to obtain a strongly polynomial-time algorithm where the number of arithmetic operations is polynomial independent of the bit length.
For the special case of fixed discount factor, the seminal work of~\cite{DBLP:journals/mor/Ye11a} obtains the first strongly-polynomial time algorithm, which has also been extended to
stochastic games by~\cite{hansen2013strategy}. Efficient (i.e., polynomial and strongly-polynomial time) algorithms for RMDPs with $\linf$ uncertainty sets is a fundamental algorithmic problem. The only claimed result in the literature is a polynomial-time bound from~\cite{DBLP:conf/nips/BehzadianPH21} (which claims without proof that this follows from~\cite{hansen2013strategy}).
However, this claim was an oversight as the mentioned technique does not yield the desired result for two reasons. First, \cite{hansen2013strategy} studies turn-based stochastic games, and the only known reduction, presented in~\cite{DBLP:conf/ijcai/ChatterjeeGK0Z24}, is from \emph{$(s,a)$-rectangular} RMDPs (not $s$-rectangular RMDPs) to turn-based stochastic games. Second, even for $(s,a)$-rectangular RMDPs, the reduction is not of polynomial size in general. For each uncertainty set, it introduces an action for each corner of the corresponding polytope, and for $\linf$ uncertainty sets, the number of actions can be exponential. 

\paragraph{Main open problem.} Hence, the existence of polynomial and more importantly strongly-polynomial time algorithms for 
RMDPs with $\linf$ uncertainty sets and a fixed discount factor is an important open question in the study of sequential decision making under uncertainty.

% \paragraph{Open problem.} Existing analyses of robust value and policy iteration for $(s, a)$-rectangular RMDPs with discounted-sum payoffs are developed for approximation schemes. Moreover, the best known time complexity for robust policy iteration is only polynomial-time (in the input encoding), and does not yield a strongly polynomial bound. Informally, an algorithm is strongly polynomial if its number of arithmetic operations is bounded by a polynomial in the problem dimensions, e.g., numbers of states, actions, and discount factor, and does not depend on the numerical magnitudes or bit-length of the input data, e.g., rewards and uncertainty sets. This leaves open whether robust policy iteration admits a strongly polynomial runtime guarantee for $\linf$ uncertainty sets.

\paragraph{Our contributions.} In this work, we affirmatively answer the above open problems. In particular, we show that a robust policy iteration algorithm 
terminates in strongly polynomial time, which is a generalization of the result of~\cite{hansen2013strategy}.

\paragraph{Technical contributions.} 
\begin{compactitem}
    \item We first consider the robust policy iteration algorithm on robust Markov chains (\texttt{RMC-PI}, Algorithm~\ref{algo:RMCPI}). These are RMDPs where the agent has a single action in every state.
    \begin{compactitem}
        \item Firstly, we introduce a novel potential function that tracks the effects of changing the optimal policy within the uncertainty set with respect to probability mass transfers allowed by the homotopy algorithm (Algorithm~\ref{algo:homotopy}).
        \item We then show bounds to relate the policy values and the defined potential function. Moreover, we prove Lemma~\ref{lemma:combinatorial_lemma}, which is a novel combinatorial result over the number of most significant bits in the binary representation of unitary signed subset sums of a finite set of real numbers. We derive Lemma~\ref{lemma:combinatorial_lemma} as a consequence of the more general Theorem \ref{thm: main1} (See Appendix~\ref{app:lemma:combinatorial_lemma}). 
        We present here also the effective strengthening given by Theorem \ref{thm: main2}, which was subsequently and independently obtained, shortly after we established our Theorem \ref{thm: main1}, by a custom mathematics research agent, named \emph{Aletheia} \footnote{\emph{Aletheia} is a custom mathematics research agent built upon Gemini Deep Think, details at https://github.com/google-deepmind/superhuman/tree/main/aletheia.}, and built upon Gemini Deep Think at Google DeepMind under the lead of Tony Feng. Although our own Theorem \ref{thm: main1} is sufficient for Lemma \ref{lemma:combinatorial_lemma}, we decided to report the stronger version as in Theorem \ref{thm: main2}, both for its own intrinsic mathematical interest as well as to keep track of a contribution of AI to mathematics.  For more details on these generalizations, we refer to Appendix \ref{app:lemma:combinatorial_lemma}. For more details on \emph{Aletheia}, we refer to \cite{Aletheia}.
        \item Finally, by the end of Section~\ref{sec:rmc}, we use the combinatorial result together with proved bounds to show that \texttt{RMC-PI} runs in strongly polynomial time given an RMC with $n$ states and a constant discount factor.
    \end{compactitem}
    \item Secondly, we focus on the robust policy iteration algorithm on RMDPs (\texttt{RMDP-PI}) and deploy a similar pipeline as above to show that $\texttt{RMDP-PI}$ runs in strongly polynomial time given an RMDP with $n$ states, $m$ actions, and a constant discount factor.
\end{compactitem}

\paragraph{Related works.} In this work we focus on the exact value computation for RMDPs with $\linf$ uncertainty sets. There are many related works that consider a more general model, but for the value approximation problem. $(s, a)$-rectangular RMDPs with discounted-sum payoffs have been extensively studied in the literature (see \cite{DBLP:conf/birthday/SuilenBB0025}), starting with the seminal works of~\cite{nilim2005robust,iyengar2005robust}. In the general setting of $(s, a)$-rectangular uncertainty, both~\cite{nilim2005robust,iyengar2005robust} presented \emph{robust value iteration} methods to compute the approximate value. Moreover, \cite{iyengar2005robust} introduced a \emph{robust policy iteration} algorithm to approximate the value of the same $(s, a)$-rectangular model. Later, \cite{DBLP:journals/informs/KaufmanS13} studied another variant of robust policy iteration for discounted-sum $(s, a)$-rectangular RMDPs and showed that robust policy iteration converges to an approximate value. However, these works do not present polynomial or strongly-polynomial bounds for the exact value computation, which is the focus of this work.

%% file: prelims.tex
\section{Preliminaries and Model Description} \label{sec:prelims} 

\paragraph{Notation.} For arbitrary set $S$, we use $\Delta(S)$ to show the set of all probability distributions over $S$, and use $2^S$ to show the set of all subsets of $S$. Moreover, we use $size(n)= \lceil \log_2 n \rceil$ and $size(\frac{p}{q})=size(p)+size(q)$ as the bit-size of integer $n$ and rational number $\frac{p}{q}$.

\paragraph{Markov Chains.} A Markov chain (MC) is a tuple $\mc =(\states,\cost,\trans)$ where $\states = \{1,\dots,|\states|\}$ is the set of states, $\cost\colon \states \to \R$ specifies the cost of visiting each state, and $\trans\colon \states \to \Delta(\states)$ denotes the transition probabilities. The transition probability function $\trans$ can be seen as an $n \times n$ matrix where $\trans_{s,t} = \trans(s)[t]$. Similarly, the cost function $\cost$ can be presented as a vector in $\R^n$. The semantics of MCs are defined over sequences $\pi = \langle \pi_0, \pi_1, \dots\rangle$ of states. Using $\Pr[\pi_{i+1}|\pi_i] = \trans_{\pi_{i},\pi_{i+1}}$, the cylinder construction of~\cite{DBLP:books/daglib/0020348} yields a probability distribution over the set of all infinite sequences of states. Given a discount factor $\discount$, the value function $\val_\mc(s)$ for each state $s \in \states$ is defined as the expected discounted total cost of sequences that start at $s$:

\begin{equation} \label{Eq:RMCValue}
    \val_\mc(s) = \E_\pi\big[\sum_i \discount^i \cost(\pi_i)|\pi_0=s\big] = \big[(\I - \discount \trans)^{-1} \cost\big](s)
\end{equation}

\paragraph{Robust MCs.} Robust MCs (RMCs) are an extension of MCs by considering a particular uncertainty in the transition probabilities. Formally, an RMC is a tuple $\rmc = (\states,\cost,\succ,\uncert)$ where $\states$ and $\cost$ are the same as in MCs, $\succ\colon \states \to 2^\states$ indicates the set of possible successors of each state, and $\uncert \colon \states \to 2^{\Delta(\states)}$ is the uncertain transition function satisfying $\uncert(s) \subseteq {\Delta(\succ(s))}$. The semantics of RMCs are defined with respect to environment policies. A stationary environment policy is a function $\envpol\colon \states \to \Delta(\states)$ mapping each state to an element of its uncertainty set. Formally, $\envpol(s) \in \uncert(s)$ for each $s \in \states$. By fixing an environment policy $\envpol$, the uncertainty of the RMC is resolved, yielding an MC $\rmc^\envpol = (\states,\cost,\uncert^\envpol)$. The value of a policy $\envpol$ is then defined as the value of its induced MC, i.e. $\val^\envpol_\rmc(s) = \val_{\rmc^\envpol}(s)$. The goal of the environment is to choose a policy that maximizes the expected discounted cost. Formally, an optimal policy $\envpol^*$ is a policy that satisfies $\val^{\envpolopt}_\rmc(s) = \max_{\envpol} \val^{\envpol}_\rmc(s)$. We drop the subscript $\rmc$ whenever it is clear from the context and use $\valopt$ to denote $\val^{\envpolopt}$.

\paragraph{Robust Markov Decision Processes.}  It is possible to further extend RMCs into robust Markov Decision Processes (RMDPs) by considering a finite action space for the agent. Formally, an RMDP is a tuple $\rmdp = (\states,\cost,\actions,\succ,\uncert)$ where $\states$ and $\cost$ are defined similarly as before, $\actions$ is a finite set of actions, $\succ\colon \states\times \actions \to 2^\states$, and $\uncert\colon \states \times \actions \to 2^{\Delta(\states)}$ is the uncertain transition function where $\uncert(s,a) \subseteq {\Delta(\succ(s,a))}$ for each state-action pair $(s,a)$. The semantics of RMDPs are defined with respect to an agent policy $\agentpol\colon \states \to \Delta(\actions)$ that maps each state to a distribution over the action-set $\actions$. By fixing such an agent policy, the agent's choices are resolved, leaving an RMC $\rmdp^\agentpol = (\states, \cost,\succ^\agentpol, \uncert^\agentpol)$. The value of an agent policy is defined as the value of its induced RMC, i.e. $\val^\agentpol_\rmdp(s) = \val_{\rmdp^\agentpol}(s)$. The agent's objective is to find a policy that minimizes the worst-case expected cost. The optimal agent policy is then denoted by $\agentpolopt$ and satisfies: 
\begin{equation} \label{eq:value_definition_rmdp}
    \val^*_\rmdp(s) = \val_\rmdp^{\agentpolopt}(s) = \min_\agentpol \valopt_{\rmdp^\agentpol}(s) = \min_\agentpol \max_\envpol \val_{\rmdp^{\agentpol,\envpol}}(s) =  \min_\agentpol \max_\envpol \big[(\I - \discount \uncert^{\agentpol,\envpol})^{-1} \cost\big] (s)    
\end{equation}

When $\rmdp$ is clear from the context, we use $\val^{\agentpol}(s)$ and $\val^{\agentpol,\envpol}(s)$ to denote $\val_{\rmdp^\agentpol}(s)$ and $\val_{\rmdp^{\agentpol,\envpol}}(s)$, respectively, for agent policy $\agentpol$ and environment policy $\envpol$. We use $\agentpolopt$ and $\envpolopt$ for the optimal agent policy and the best environment response, respectively, satisfying $\val^*_\rmdp(s) = \val^{\agentpolopt}(s) = \val^{\agentpolopt,\envpolopt}(s)$.   %Note that RMCs are a special class of RMDPs where the action set $\actions$ is a singleton.%, hence the agent has only a trivial policy.

\paragraph{Assumption 1. (Rectangularity)} The uncertainty considered in this paper is also known as $(s,a)$-rectangular uncertainty since the environment can observe the agent's action before choosing a transition function from the uncertainty set. Other classes of uncertainty, such as $s$-rectangularity, have been studied in the literature (\cite{DBLP:conf/birthday/SuilenBB0025}) but are not the focus of our work.

\paragraph{Assumption 2. ($\mathbf{L_\infty}$ Uncertainty Sets)} In this work, we focus on RMCs and RMDPs whose uncertainty sets are defined as $\linf$ balls around a nominal transition function. Formally, in an $L_\infty$ RMC $\rmc$, for each $s \in \states$, there exists non-negative real number $\radius_s$ and a nominal probabilistic transition function $\nominal_s \in \Delta(\states)$ such that $\uncert(s) = \{\trans \in \Delta(\states)| \Vert \trans - \nominal_s \Vert_\infty \leq \radius_s \}$. Similarly, in an $L_\infty$ RMDP, for each $s \in \states$ and $a \in \actions$, there exists a non-negative real number $\radius_{s,a}$ and nominal transition function $\nominal_{s,a}$ such that $\uncert(s,a) = \{\trans \in \Delta(\states)| \Vert \trans - \nominal_{s,a} \Vert_\infty \leq \radius_{s,a} \}$. For the rest of the paper, we assume all the uncertainty sets are of this type.

\paragraph{Policy Types.} The policies considered in the above definitions are positional (their value depends only on the current state of the model). In general, policies can also depend on the history of visited states and chosen actions. However, it is a classical result that in discounted-sum RMDPs, positional (memoryless) deterministic optimal policies exist~(\cite{iyengar2005robust,nilim2005robust}).  Hence, in the rest of the paper, all the policies are assumed to be positional and deterministic, i.e. an agent policy in an RMDP is a map $\agentpol\colon \states \to \actions$ and an environment policy is a map $\envpol \colon \states \times \actions \to \Delta(\states)$ where $\envpol(s,a) \in \uncert(s,a)$.

\paragraph{Strongly Polynomial Algorithms.} An algorithm is strongly polynomial if the following two conditions hold: (i) The number of arithmetic operations is bounded by a polynomial in the number of input variables and constraints, and (ii) the size of all numbers during the computation is bounded by a polynomial in the input size.

Recall that an algorithm runs in polynomial time if its runtime is polynomial in the input (bit-length) size. Hence, in polynomial time algorithms, the runtime of the algorithm might depend on the bit-size of the input coefficients. In contrast, in strongly polynomial algorithms, the number of arithmetic operations is independent of the numerical values of the coefficients, and any dependence on the input coefficients is limited to their unavoidable contribution to the input encoding length.

\paragraph{Problem Statement.} Given an RMC $\rmc$ or an RMDP $\rmdp = (\states,\cost,\actions,\succ,\uncert)$ that satisfies Assumptions~1~and~2, algorithms such as {\em value iteration} and {\em policy iteration} are designed for computing the value function $\val$ together with the optimal agent policy $\agentpol^*$ and environment policy $\envpol^*$ that correspond to $\val$. Our goal in this work is to study the policy iteration method (explained in Section~\ref{sec:homotopyoverview}) and to show that it is strongly polynomial.

% \paragraph{Remark.} It is possible to consider a set of {\em active} successors $\succ(s,a)$ for each state-action pair $(s,a)$ and allow uncertainty only over the probability of transitioning to the active successors, i.e. $\uncert(s,a) = \{\trans \in \Delta(\states)| \Vert \trans - \nominal_{s,a} \Vert_\infty \leq \radius_{s,a} \wedge \forall s' \notin \succ(s,a)\colon \trans(s')=0  \}$. We note that all our theories can be easily extended to models with active successor sets, however, we focus our attention to the simpler models to simplify our notations and have better readability. Through the rest of this paper, we assume $|\states|=n$, and $|\actions|=m$.

%% file: algorithm.tex
\section{Overview: Robust Policy Iteration} \label{sec:algo}

We state our two main results; their proofs span Sections~\ref{sec:rmc} and~\ref{sec:rmdp}.

\begin{mainthm}\textnormal{\textbf{(Strongly Polynomial \texttt{RMC-PI})}}
    \texttt{RMC-PI} is strongly polynomial given an RMC with $n$ states and constant discount factor $\discountfactor$.
\end{mainthm}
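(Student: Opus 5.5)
Our plan is to adapt the Ye / Hansen--Miltersen--Zwick template for policy iteration, replacing ``actions'' by the combinatorial structure of vertices of the $L_\infty$ polytopes. First, we describe the environment's policies combinatorially. An optimal response at a state $s$ for a fixed value vector $\val$ is a vertex of $\uncert(s)=\{p\in\Delta(\states):\|p-\nominal_s\|_\infty\le\radius_s\}$. Such a vertex is obtained greedily: order successors by $\val$, push mass toward high-value states up to $\nominal_s(t)+\radius_s$, and take mass from low-value states down to $\max(0,\nominal_s(t)-\radius_s)$. At most one ``incomplete'' coordinate remains strictly interior. Each vertex is thus labeled by a partition of successors into receivers $\receiver$, donors $\donor$, zeros $\zero$, and one incomplete state $\incomplete$. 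There are exponentially many such vertices, so counting them is useless. Instead, we count ``elementary mass transfers'' between pairs $(t,t')$: there are $O(n^3)$ of these triples $(s,t,t')$. An improvement step of \texttt{RMC-PI} (one homotopy path, Algorithm~\ref{algo:homotopy}) is a sequence of such transfers.

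Second, we set up a potential. Let $\envpolopt$ be optimal with value $\valopt$, and let $r(s,p)=\valopt(s)-\cost(s)-\discount\, p\cdot\valopt\ge 0$ be the reduced cost (slack) of a distribution $p$ at $s$. For policy $\envpol_k$ we define $\Phi_k=\sum_s r(s,\envpol_k(s))$, which is a sum of slacks of the individual transfers that separate $\envpol_k(s)$ from $\envpolopt(s)$. The standard arguments then go through with this potential:
\begin{itemize}
\item $\|\valopt-\val^{\envpol_k}\|_\infty\le \Phi_k/(1-\discount)$ and $\Phi_k\le n\|\valopt-\val^{\envpol_k}\|_\infty$.
\item Contraction of policy iteration (dominating value iteration) gives $\|\valopt-\val^{\envpol_{k}}\|\le\discount^k\|\valopt-\val^{\envpol_0}\|$.
\end{itemize}
Hence after $L=O(\frac{1}{1-\discount}\log\frac{n}{1-\discount})$ iterations, the current slack of every piece is at most a factor $1/2$ of the largest slack present initially. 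Following Hansen et al., we would show that some particular ``bad'' component, namely a transfer $(s,t,t')$ used by $\envpol_k$ but not by $\envpolopt$, carries a large share of the initial slack. Such a component can then never reappear, which eliminates it, and at most $O(n^3)$ eliminations can occur.

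The main obstacle is this elimination step. A transfer's slack is $\discount\,\delta\,(\valopt(t')-\valopt(t))$ times an amount, and the amounts are themselves sums with signs $\pm$ of the quantities $\nominal_s(\cdot)$ and $\radius_s$. Shrinking the potential geometrically does not directly force a \emph{single} combinatorial object to disappear: the same vertex can be reached with many different nonzero amounts. Here we invoke Lemma~\ref{lemma:combinatorial_lemma}. Signed subset sums $\sum\epsilon_i x_i$ with $\epsilon_i\in\{-1,0,1\}$ take only polynomially many distinct ``orders of magnitude'' (leading-bit positions) relevant to the $n$ inputs. So once a gap has shrunk by a factor $2^{\mathrm{poly}(n)}$ relative to its starting scale, the configuration producing it is excluded forever. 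Each block of $O(\frac{1}{1-\discount}\mathrm{poly}(n))$ iterations therefore permanently removes one option from a polynomial-size family, which gives a polynomial iteration count for constant $\discount$.

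Finally, each iteration evaluates an MC by solving a linear system, and computes greedy vertices by sorting. Both take polynomially many operations on numbers of polynomial bit-size (Gaussian elimination, with entries coming from sums and differences of input numbers). This gives the strongly polynomial claim.
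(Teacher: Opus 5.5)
\textbf{There is a genuine gap, and it is in the elimination step you flag as the main obstacle.} The Hansen--Miltersen--Zwick argument works because an action's reduced cost is a fixed number, so if the action reappears, the state's slack is large again. A transfer $(s,t,t')$ has no fixed slack. Its contribution is $\gamma\cdot(\text{amount})\cdot(v^*_t-v^*_{t'})$, and the amount changes from policy to policy. So a transfer is never eliminated; it simply comes back with a smaller amount, and the aggregated potential $\Phi_k$ cannot track it.

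What you need is a canonical per-triple quantity, such as $\epsilon_\tau(s,t,t')=\min(P^*_{s,t}-P^\tau_{s,t},\,P^\tau_{s,t'}-P^*_{s,t'})$, with potential $f_\tau=\epsilon_\tau\,(v^*_t-v^*_{t'})$. You also need two inequalities that your proposal never establishes:
\begin{itemize}
\item $v^*_s-v^\tau_s\ge\gamma f_\tau(s,t,t')$ for every triple. To prove it, move exactly $\epsilon_\tau$ mass from $t'$ to $t$ in $P^\tau_s$. Both coordinates move toward $P^*_s$, so the perturbed row stays in the $L_\infty$ ball, and optimality of $P^*_s$ caps the gain.
\item $\|v^*-v^\tau\|_\infty\le\frac{n^2\gamma}{1-\gamma}\max f_\tau$. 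This comes from decomposing $P^*_s-P^\tau_s$ into at most $n^2$ uphill transfers.
\end{itemize}
Only with both inequalities plus contraction do you get the actual elimination statement (Lemma~\ref{lemma:subaction_not_repeat}): if $(s,t,t')$ maximizes $f_{\tau_k}$, then $\epsilon_{\tau_l}(s,t,t')\le\frac12\epsilon_{\tau_k}(s,t,t')$ for every $l>k+L$. Your sentence ``the current slack of every piece is at most half the largest initial slack'' already presupposes the first inequality. Moreover, amounts taken from an unspecified flow decomposition are neither canonical nor obviously bounded-coefficient signed sums, so they cannot be fed to the MSB lemma.

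\textbf{Second, you misread what Lemma~\ref{lemma:combinatorial_lemma} gives.} Having $O(|X|\log|X|)$ distinct MSBs does not mean that nonzero signed sums lie within a ratio $2^{\mathrm{poly}(n)}$ of each other. Consecutive MSB positions can be separated by a distance that depends on the bit-length of $\hat P$ and $\delta$; think of $X=\{1,2^{-M}\}$. So ``once a gap has shrunk by a factor $2^{\mathrm{poly}(n)}$ the configuration is excluded forever'' is false. Any count based on total shrinkage would make the iteration bound depend on bit-size, i.e., it would be only weakly polynomial.

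The correct count is by halvings of one fixed quantity. Each time the same triple is again the maximizer more than $L$ iterations later, the MSB of its $\epsilon$ has strictly dropped. Here $\epsilon$ ranges over $A(X_s)\setminus\{0\}$ with $X_s=\{\hat P_{s,s'}\}_{s'}\cup\{1\}\cup\{\delta_s,2\delta_s,\dots,n\delta_s\}$ and $c=2$. Coefficients in $\{-1,0,1\}$ on the raw inputs, as you wrote, are not enough: the incomplete coordinate equals $\hat P_{s,I}+\sum_{u\in Z}\hat P_{s,u}-(|R|-|D|)\delta_s$. Hence each of the $n^3$ triples is the maximizer in at most $O(n\log n)$ windows of length $L$, giving $O(n^4\log n\cdot L)$ iterations overall.

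Your closing claim that ``each block removes one option from a polynomial-size family'' is correct only if the options are (triple, MSB level) pairs and blocks have length $L$. That is this argument, not the one you wrote.
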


\begin{mainthm}\textnormal{\textbf{(Strongly Polynomial \texttt{RMDP-PI})}}
    \texttt{RMDP-PI} is strongly polynomial given an RMDP $\rmdp$ with $n$ states, $m$ actions and constant discount factor $\discountfactor$.
\end{mainthm}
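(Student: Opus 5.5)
We need a plan for both theorems, RMC-PI and RMDP-PI, following the Ye / Hansen–Miltersen–Zwick template. For \texttt{RMC-PI} we treat the environment as the single optimizer. Each iteration evaluates the current environment policy $\envpol_k$ exactly, by solving $(\I-\discount \trans^{\envpol_k})\val=\cost$. It then improves by a greedy robust Bellman step: at each state $s$, choose $\envpol_{k+1}(s)\in\argmax_{p\in\uncert(s)} p^\top \val^{\envpol_k}$. For an $\linf$ ball intersected with the simplex, this is a sort-and-greedy (water-filling) transfer of mass from low-value to high-value successors.

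The argument would proceed in three steps.

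\textbf{Step 1 (contraction of the gap).} Let $\Delta_k=\valopt-\val^{\envpol_k}$. The standard policy-iteration/value-iteration domination argument gives $\infnorm{\Delta_{k+1}}\le\discount\infnorm{\Delta_k}$. Hence after $L=O(\frac{1}{1-\discount}\log\frac{n}{1-\discount})$ iterations the gap has shrunk by a factor of order $(1-\discount)/n$.

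\textbf{Step 2 (a per-state potential that does not scale with bits).} In Ye/HMZ one uses the slack (reduced cost) $\valopt-(\cost+\discount\,\trans^{a}\valopt)$ of a nonoptimal action. If its value at a state is $\ge\frac{1-\discount}{n}\cdot$(initial gap), then after $L$ more iterations that action is never played again. This gives $O(nm)\cdot L$ iterations. Here the action set is the infinite polytope $\uncert(s)$, so we cannot eliminate actions. Instead I would eliminate \emph{combinatorial structure}. An optimal response at $s$ is determined by the ordering of successors by $\valopt$ plus the identity of the at most one partially filled coordinate. I would define a potential $\Phi_s(p)=(\valopt)^\top(\envpolopt(s)-p)\ge 0$, the slack of choosing $p$ instead of $\envpolopt(s)$. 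I would then decompose it along the sequence of pairwise mass transfers (donor $\to$ receiver, each of size bounded by $\radius_s$ or by nominal masses) that the homotopy path from $\nominal_s$ to $p$ uses. Each transfer contributes a term of the form $\mu\,(\valopt(t)-\valopt(t'))$.

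The hard part is that mass amounts are real inputs, and one transfer may be tiny while another is huge. So "the potential is large" does not directly pin down one discrete mistake.

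\textbf{Step 3 (main obstacle: the combinatorial lemma).} I would invoke Lemma~\ref{lemma:combinatorial_lemma}. For a finite set of reals, the signed subset sums with $\pm1$ coefficients take only polynomially many distinct leading binary magnitudes. Equivalently, the values $\log_2$ of such sums fall into polynomially many classes separated by gaps that are bounded independently of the numbers. The plan is to apply it to the numbers $\radius_s$ and the nominal masses $\nominal_s(t)$, because the transferred amounts in any vertex of $\uncert(s)$ are exactly such signed subset sums.

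Then, at each iteration where $\envpol_k(s)$ differs from $\envpolopt(s)$ at the state maximizing $\Phi$, some transfer term has magnitude within a factor $\mathrm{poly}(n)/(1-\discount)$ of the gap. After $L$ more iterations, the gap is below that magnitude. Hence the corresponding (donor, receiver, magnitude-class) configuration is permanently excluded.

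The number of such configurations is $\mathrm{poly}(n)$ per state by the lemma. So the total iteration count is $\mathrm{poly}(n)\cdot L$, which is strongly polynomial for constant $\discount$. Each iteration is a linear solve plus sorting, so the operation count is strongly polynomial, and bit sizes stay polynomial by Cramer's rule.

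For \texttt{RMDP-PI} I would reuse the same pipeline, with the agent minimizing over actions and an inner \texttt{RMC-PI} (or exact robust evaluation) for the environment. Following HMZ for two-player games, the outer agent switches are handled by Ye's action-elimination argument, giving $O(nm L)$ agent improvements. The environment's per-pair $(s,a)$ configurations are handled by the Step 3 potential. This gives a bound polynomial in $n,m$ times $L$.

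I expect Step 3 to be the crux: linking the real-valued potential to a discrete, polynomially bounded set of eliminable events via Lemma~\ref{lemma:combinatorial_lemma}.
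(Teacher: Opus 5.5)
Your plan is correct and follows the paper's proof. The outer loop is bounded by a Ye-style elimination argument using the robust advantage $f(s,a)=\max_{\pvector\in\uncert_{s,a}}\pvector^\top\valopt-\max_{\pvector\in\uncert_{s,\agentpolopt(s)}}\pvector^\top\valopt$: a critical pair cannot recur after $L=\log_\discount(1-\discount)$ iterations, which gives $O(nmL)$ outer iterations. Each evaluation is an \texttt{RMC-PI} call that is strongly polynomial via a donor/receiver mass-discrepancy potential measured against $\envpolopt(s)$, a halving-every-$L$-steps lemma, and the MSB-counting Lemma~\ref{lemma:combinatorial_lemma}.

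One detail in your Step~3 needs tightening: the discrepancies $\uncert^*_{s,s'}-\uncert^{\envpol}_{s,s'}$ are not $\pm1$ combinations of $\{\radius_s\}\cup\{\nominal_{s,t}\}$ alone, because the incomplete coordinate and the capped receivers contribute $1$ and $k\radius_s$ with $|k|\le n$. As in the paper, add $1,\radius_s,2\radius_s,\dots,n\radius_s$ to the generating set and allow coefficients up to $2$.
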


\paragraph{Policy Iteration.} Policy iteration is a classical method for computing the value function in Markovian models (\cite{DBLP:books/wi/Puterman94}). Algorithm~\ref{algo:RMCPI} shows the policy iteration algorithm for RMCs. Intuitively, given an RMC $\rmc$ and discount factor $\discount$, the algorithm starts from an arbitrary initial environment policy $\envpol^0$. Then, in each iteration, it first evaluates the current policy by computing $\val^{\envpol^t}$ and then greedily improves $\envpol^t$ into $\envpol^{t+1}$ in order to maximize the expected cost based on the computed value function.

\input{algo-PIRMC}

The main difference between \texttt{RMC-PI} in Algorithm~\ref{algo:RMCPI} and the classical policy iteration on Markov Decision Processes (\cite{DBLP:books/wi/Puterman94}) lies in the policy improvement step (also known as computing the Bellman operator), where in MDPs, the $\argmax$ is taken over the finite set of actions, but in RMCs it is taken over the possibly {\em infinite} uncertainty set $\uncert$.

\paragraph{Homotopy Algorithm.} \label{sec:homotopyoverview} Given that $\uncert$ is an $\linf$ uncertainty set, the policy improvement can be computed using linear programming. However, solving an LP instance is inefficient, its runtime being $O(n^{3.5}L)$, which depends on the bit precision $L$ of the input system. In \cite{DBLP:conf/nips/BehzadianPH21}, authors present a {\em homotopy} algorithm for computing the policy improvement, which runs in $O(n \log n)$. Algorithm~\ref{algo:homotopy} illustrates their approach adapted to our notations.

\input{algo-homotopy}

Intuitively, given the RMC $\rmc$, state $s \in \states$, and the current policy's value vector $\val$, the algorithm first sorts the successor states with respect to their $\val$ value so that $\val(s_1) \geq \val(s_2) \geq \dots \geq \val(s_{|\succ(s)|})$. It then proceeds by adapting a two-pointer technique for increasing the probability of transitioning to states with higher values while decreasing the probability of transitioning to states with lower values. This is done by keeping two indices $hi$ and $lo$. While $hi < lo$, the algorithm increases $\pvector_{s_{hi}}$ and decreases $\pvector_{s_{lo}}$ as much as possible. If $\pvector_{s_{hi}}$ could not be increased further, i.e. by reaching 1 or $\nominal_{s,s_{hi}}+\radius(s)$, then $hi$ is increased by one. Similarly, if $\pvector_{s_{lo}}$ could not be decreased further, i.e. by reaching 0 or $\nominal_{s,s_{lo}}-\radius(s)$, then $lo$ is decreased by one. See \cite{DBLP:conf/nips/BehzadianPH21} for correctness arguments.

\paragraph{Properties of Homotopic Policies.} The distribution $\pvector$ generated by Algorithm~\ref{algo:homotopy} possesses a specific structure that becomes useful later in our analysis. Essentially, there are four disjoint sets of states $\receiver_\pvector$, $\donor_\pvector$, $\zero_\pvector$ and $\incomplete_\pvector$, where: 
\begin{compactitem}
    \item for $s' \in \receiver_\pvector$, $\pvector_{s'} = \min(\nominal_{s,s'}+\radius(s),1)$,
    \item for $s' \in \donor_\pvector$, $\pvector_{s'} = \nominal_{s,s'} - \radius(s)$,
    \item for $s' \in \zero_\pvector$, $\nominal_{s,s'} \leq \radius(s) \; \wedge \; \pvector_{s'} = 0$, and
    \item $|\incomplete_\pvector| \leq 1$ and for $s' \in \incomplete_\pvector$, $\nominal_{s,s'} - \radius(s) < \pvector_{s'} = 1 - \sum_{s'' \neq s'}{\pvector_{s''}} < \nominal_{s,s'} + \radius(s)$.
\end{compactitem}
Intuitively, $\receiver_\pvector$ states are {\em receiving} $\radius(s)$ probability mass. This mass is {\em donated} by donor states in $\donor_\pvector$ and $\zero_\pvector$. Moreover, there is at most one state in $\incomplete_\pvector$ that receives/donates {\em incompletely}. These properties immediately show that there are at most exponentially $\mathcal{O}(n\cdot 3^n)$ many different homotopic policies that Algorithm~\ref{algo:homotopy} can choose from.

\paragraph{Strongly Polynomial Runtime of \texttt{RMC-PI}.} Our primary objective is to demonstrate that the policy iteration method in Algorithm~\ref{algo:RMCPI} runs in strongly polynomial time. Specifically, this means the number of policy improvement iterations is bounded by a polynomial independent of the cost function, nominal probabilities, or uncertainty set radii. We formally prove this result in Section~\ref{sec:rmc}.

\paragraph{Policy Iteration for RMDPs.} We can extend policy iteration to compute optimal policies for RMDPs, utilizing Algorithm~\ref{algo:RMCPI} as a key component. Algorithm~\ref{algo:RMDPPI} outlines this process. The method begins with an arbitrary agent policy, $\agentpol^0$. In the $t$-th iteration, it first computes the optimal environment counter-policy, $\envpol^t$, against $\agentpol^t$ (using Algorithm~\ref{algo:RMCPI}) and evaluates the value function $\val^t$ for $\rmdp^{\agentpol^t,\envpol^t}$. Finally, it uses $\val^t$ to improve the agent policy $\agentpol^t$ greedily.

\input{algo-PIRMDP}

\paragraph{Strongly Polynomial Runtime of \texttt{RMDP-PI}.} Our final objective is to establish a strongly polynomial upper bound on the runtime of the \texttt{RMDP-PI} algorithm. In this context, the bound depends on the number of states, the available actions, and the discount factor. As with the RMC case, we formally prove this result in Section~\ref{sec:rmdp}.

%% file: algo-PIRMC.tex
\begin{algorithm}[t]
\caption{Policy Iteration For Robust Markov Chains (\texttt{RMC-PI})}
\label{algo:RMCPI}
\SetKwInOut{Input}{Input}
\SetKwInOut{Output}{Output}
\SetKwRepeat{RepeatUntil}{Repeat}{Until}

\Input{RMC $\rmc = (\states,\cost,\succ,\uncert)$, discount factor $\discount$}
\Output{The optimal environment policy $\envpol^*$}

% \BlankLine % Add a blank line for separation, optional
$\envpol^0 \gets \texttt{arbitrary environment policy}$

$t \leftarrow 0$\;

\RepeatUntil{$\envpol^t = \envpol^{t-1}$}{
    \tcp{Policy Evaluation:}
    $\val^t \gets (\I - \discount \uncert^{\envpol^t})^{-1}\cost$
    
    $t \leftarrow t + 1$\;
    
    \tcp{Policy Improvement via Homotopy Algorithm~\ref{algo:homotopy}:}
    \ForAll{$s \in \states$}{
    $\envpol^{t}(s) \gets \argmax\limits_{\pvector \in \uncert(s)}{\cost(s) + \discount \pvector^\top \val^{t - 1}}$}
}

\Return $\envpol^t$\;

\end{algorithm}

%% file: algo-homotopy.tex
\begin{algorithm}[t]
\caption{Homotopy Algorithm for Policy Improvement}
\label{algo:homotopy}
\SetKwInOut{Input}{Input}
\SetKwInOut{Output}{Output}
\SetKwRepeat{RepeatUntil}{Repeat}{Until}

\Input{RMC $\rmc = (\states,\cost,\succ,\uncert)$, successor set $\succ(s)$, nominal transition matrix $\nominal_s$,   state $s\in \states$, radius $\radius(s)$ for the uncertainty set, value vector $\val$}
\Output{Distribution $\pvector \in \uncert(s)$ maximizing $\pvector ^\top \val$}
$\states' \gets \succ(s)$

Sort states $s_1, \dots, s_{|\states'|}$ of $\states'$ so that $\val(s_1) \geq \val(s_2) \geq \dots \geq \val(s_{|\states'|})$

$\pvector \gets \nominal_s$

$hi \gets 1, lo \gets |\states'|$

$b_{hi} \gets \radius(s), \quad b_{lo} \gets \radius(s)$

\While{$hi < lo$}{
$d_{hi} \gets \min(b_{hi},\, 1-\pvector_{s_{hi}})$

$d_{lo} \gets \min(b_{lo},\, \pvector_{s_{lo}})$

$t \gets \min(d_{hi},\, d_{lo})$

$\pvector_{s_{hi}} \gets \pvector_{s_{hi}} + t$

$\pvector_{s_{lo}} \gets \pvector_{s_{lo}} - t$

$b_{hi} \gets b_{hi} - t$

$b_{lo} \gets b_{lo} - t$

\eIf{$b_{hi} = 0$ \textbf{or} $\pvector_{s_{hi}} = 1$}{
    $hi \gets hi+1$\;
    $b_{hi} \gets \radius(s)$
}
{
    $lo \gets lo -1$\;
    $b_{lo} \gets \radius(s)$
}
}
\Return $\pvector$

\end{algorithm}

%% file: algo-PIRMDP.tex
\begin{algorithm}[t]
\caption{Policy Iteration For Robust Markov Decision Process \texttt{RMDP-PI}}
\label{algo:RMDPPI}
\SetKwInOut{Input}{Input}
\SetKwInOut{Output}{Output}
\SetKwRepeat{RepeatUntil}{Repeat}{Until}

\Input{RMDP $\rmdp = (\states,\cost,\actions,\succ,\uncert)$, discount factor $\discount$}
\Output{The optimal policy $\agentpol^*$}

% \BlankLine % Add a blank line for separation, optional
$\agentpol^0 \gets \texttt{arbitrary agent policy}$

$t \leftarrow 0$\;

\RepeatUntil{$\agentpol^t = \agentpol^{t-1}$}{
    \tcp{Policy Evaluation, Invoke Algorithm \ref{algo:RMCPI}:}
    $\envpol^{t} \gets \texttt{RMC-PI}(\rmdp^{\agentpol^t},\discount)$
    
    $\val^t \gets (I - \discount \uncert^{\agentpol^t,\envpol^t})^{-1}\cost$
    
    $t \leftarrow t + 1$\;
    
    \tcp{Policy Improvement:}
    \ForAll{$s \in \states$}{
    $\agentpol^{t}(s) \gets \argmin\limits_{a \in \actions}\max\limits_{\pvector \in \uncert(s,a)}{\cost(s) + \discount \pvector^\top \val^{t - 1}}$}
}

\Return $\agentpol^t$\;

\end{algorithm}

%% file: rmc.tex
\section{Robust Markov Chains} \label{sec:rmc}

% \pashacomment{Define $\valvector$ and $\costvector$ in Preliminaries.}

In this section, we focus our attention on RMCs and their corresponding policy iteration algorithm \texttt{RMC-PI} presented in Algorithm~\ref{algo:RMCPI}. Specifically, we assume \texttt{RMC-PI} is executed on an RMC $\rmc = (\states,\cost,\succ,\uncert)$ with an $\linf$ uncertainty set, $|\states|=n$ and a constant discount factor $\discountfactor$.

Let $\rmc$ be the input RMC in an execution of \texttt{RMC-PI}. The Bellman operator for computing the optimal values in $\rmc$ is defined as: 
\begin{equation} \label{Eq:BellmanRMC}
    (\Bellman \valvector)_s = \costvector_s + \discountfactor \max_{\pvector \in \uncertaintyset(s)} \pvector^\top \valvector.
\end{equation}
We first note that \texttt{RMC-PI} converges monotonically to the unique fixed-point of $\Bellman$ at an exponential rate. This is a classical result in the analysis of Markov models (\cite{DBLP:books/wi/Puterman94,iyengar2005robust}), presented in Lemma~\ref{lemma:contractive_value_iteration} (proved in Appendix~\ref{app:lemma:bellman}) and Lemma~\ref{lemma:consecutive_policy_iteration_bound} (proved in Appendix~\ref{app:lemma:convergence}).

\begin{lemma} \phantomsection\label{lemma:contractive_value_iteration} 

% \label{corollary:unique_optimal}
    The following statements hold:
    \begin{compactitem}
        \item For every $\vvector, \uvector \in \R^n$, $\infnorm{\Bellman \uvector - \Bellman \vvector} \leq \discountfactor \, \infnorm{\uvector - \vvector}.$
        \item For every environment policy $\envpol$ with value vector $\val^\envpol$, $\Bellman \val^{\envpol} \succcurlyeq \val^{\envpol}$.
        \item There exists a unique vector $\valvector^* \in \R^n$ such that $\Bellman \valvector^* = \valvector^*$.
    \end{compactitem}
\end{lemma}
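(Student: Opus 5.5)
The plan is to prove the three items in order, relying only on the definition of $\Bellman$ in \eqref{Eq:BellmanRMC} and the closed form \eqref{Eq:RMCValue} for policy values. No structure specific to $\linf$ uncertainty sets is needed beyond the fact that each $\uncertaintyset(s)$ is a nonempty compact subset of $\Delta(\states)$, so all maxima exist.

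\textbf{Contraction.} Fix a state $s$. Let $\pvector$ attain the maximum in $(\Bellman\uvector)_s$. Then
\[
(\Bellman\uvector)_s-(\Bellman\vvector)_s \le \discountfactor\,\pvector^\top(\uvector-\vvector)\le \discountfactor\infnorm{\uvector-\vvector},
\]
because $\pvector$ is a probability vector and, being feasible, gives a lower bound for the maximum defining $(\Bellman\vvector)_s$. Swapping the roles of $\uvector$ and $\vvector$ gives the bound in the other direction. Taking the maximum over $s$ yields the first item.

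\textbf{Policy values are sub-fixed points.} By \eqref{Eq:RMCValue}, $\val^\envpol$ satisfies $\val^\envpol = \costvector + \discountfactor\,\uncert^{\envpol}\val^\envpol$. Coordinatewise this reads $\val^\envpol_s = \costvector_s + \discountfactor\,\envpol(s)^\top\val^\envpol$. Since $\envpol(s)\in\uncertaintyset(s)$ is one feasible choice in the maximization defining $(\Bellman\val^\envpol)_s$, we get $(\Bellman\val^\envpol)_s\ge \val^\envpol_s$ for every $s$, which is the second item.

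\textbf{Unique fixed point.} The first item shows that $\Bellman$ is a $\discountfactor$-contraction on the complete space $(\R^n,\infnorm{\cdot})$ with $\discountfactor<1$. Banach's fixed point theorem therefore gives existence and uniqueness of $\valvector^*$. Uniqueness can also be seen directly: two fixed points $\uvector,\vvector$ would satisfy $\infnorm{\uvector-\vvector}\le\discountfactor\infnorm{\uvector-\vvector}$, forcing $\uvector=\vvector$.

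None of these steps poses a real obstacle. The only point needing care is that the maxima in $\Bellman$ are attained, which holds because each $\uncertaintyset(s)$ is the intersection of the simplex with a closed box and is nonempty since $\nominal_s\in\uncertaintyset(s)$. Alternatively, one can argue with suprema and an $\varepsilon$-optimal $\pvector$.
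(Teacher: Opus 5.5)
Your proposal is correct and follows the paper's proof essentially step by step: the contraction bound uses that the maximizer for $\uvector$ is feasible for $\vvector$, the second item uses that $\envpol(s)$ is a feasible choice in the maximization, and Banach's fixed point theorem gives existence and uniqueness. Your remark that the maxima are attained, because each $\uncertaintyset(s)$ is nonempty and compact, is a small point of rigor that the paper leaves implicit.
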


Using the improvement properties established in Lemma \ref{lemma:contractive_value_iteration}, we show that the sequence of values generated by \texttt{RMC-PI} is non-decreasing and converging at an exponential rate:

\begin{lemma}\phantomsection\label{lemma:consecutive_policy_iteration_bound} \label{lemma:policy_iteration_less_than_value_iteration} \label{lemma:policy_iteration_exponential_upperbound}
    Let $\envpol^0,\envpol^1,\dots$ be the sequence of policies generated by \texttt{RMC-PI} (Algorithm \ref{algo:RMCPI}), and let $\valvector^t$ be the value of $\envpol^t$. Then the following statements hold for each $t$:
    \begin{compactitem}
        \item $\valvector^{t+1} \succcurlyeq \valvector^{t}$
        % \item $\valvector^{t+1} \succcurlyeq \Bellman \valvector^t$
        \item $\infnorm{\valvector^t - \valvector^*} \leq \gamma^t \infnorm{\valvector^0 - \valvector^*}$
    \end{compactitem}
\end{lemma}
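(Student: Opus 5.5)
The plan is to use the standard monotonicity-plus-contraction argument for policy iteration, relying on Lemma~\ref{lemma:contractive_value_iteration}. Throughout, let $P^t$ denote the transition matrix $\uncert^{\envpol^t}$ of the MC induced by $\envpol^t$, so that $\val^t = \costvector + \discount P^t \val^t$. Let $\Bellman^{t}$ denote the affine operator $\uvector \mapsto \costvector + \discount P^{t}\uvector$. This operator is monotone, because $P^t$ has nonnegative entries, and it is a $\discount$-contraction in $\infnorm{\cdot}$ with unique fixed point $\val^t$.

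\emph{Monotonicity.} By the improvement step of \texttt{RMC-PI}, $\envpol^{t+1}(s)$ attains the maximum in \eqref{Eq:BellmanRMC} at $\val^t$. Hence $\Bellman^{t+1}\val^t = \Bellman \val^t \succcurlyeq \val^t$, where the inequality is the second item of Lemma~\ref{lemma:contractive_value_iteration}. Applying the monotone map $\Bellman^{t+1}$ repeatedly gives the chain
\[
\val^t \preccurlyeq \Bellman^{t+1}\val^t \preccurlyeq (\Bellman^{t+1})^2\val^t \preccurlyeq \cdots \to \val^{t+1},
\]
and therefore $\val^{t+1}\succcurlyeq \Bellman\val^t \succcurlyeq \val^t$. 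I will record the intermediate inequality $\val^{t+1}\succcurlyeq \Bellman \val^t$, because it is what drives the rate.

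\emph{Upper bound by $\valopt$.} Every policy value satisfies $\val^t \preccurlyeq \valopt$. To see this, note that $\Bellman^t \uvector \preccurlyeq \Bellman \uvector$ for all $\uvector$, and $\Bellman$ is monotone. Hence $\val^t=(\Bellman^t)^k\val^t \preccurlyeq \Bellman^k \val^t \to \valopt$, where convergence to the unique fixed point follows from the first and third items of Lemma~\ref{lemma:contractive_value_iteration}. Alternatively, this follows directly from $\valopt=\max_\envpol \val^\envpol$.

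\emph{Rate.} Combining the two previous steps gives
\[
0 \preccurlyeq \valopt - \val^{t+1} \preccurlyeq \valopt - \Bellman\val^t = \Bellman\valopt - \Bellman \val^t.
\]
Taking sup-norms, which is legitimate since every entry is nonnegative, and using the contraction property yields $\infnorm{\valopt-\val^{t+1}} \le \discount\infnorm{\valopt-\val^t}$. Induction on $t$ then gives the claim.

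The only delicate point is the comparison $\val^{t+1}\succcurlyeq\Bellman\val^t$, which requires monotonicity of $\Bellman^{t+1}$ and convergence of its iterates. Both follow from nonnegativity of $P^{t+1}$ and $\discount<1$.
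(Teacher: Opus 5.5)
Your proof is correct and follows essentially the same route as the paper. Both establish $\val^{t+1}\succcurlyeq\Bellman\val^t\succcurlyeq\val^t$ from the greedy choice and nonnegativity of $P^{t+1}$, then use the sandwich $\valopt\succcurlyeq\val^{t+1}\succcurlyeq\Bellman\val^t$ together with the contraction of $\Bellman$ at its fixed point. The differences are cosmetic: you obtain the first chain by iterating the monotone map $\Bellman^{t+1}$ rather than via the paper's Neumann-series identity $\val^{t+1}-\val^t=(\I-\discount P^{t+1})^{-1}(\Bellman\val^t-\val^t)$, and you explicitly justify $\val^t\preccurlyeq\valopt$, which the paper uses without proof.
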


Lemma \ref{lemma:policy_iteration_exponential_upperbound} establishes that policy iteration converges exponentially fast to the optimal policy. Given the exponential upper-bound on the number of homotopic policies that Algorithm~\ref{algo:homotopy} can return, Lemma~\ref{lemma:policy_iteration_exponential_upperbound} immediately implies that \texttt{RMC-PI} terminates in finite time and computes the optimal environment policy $\envpolopt$ together with its value function $\val^*$. 

To show that \texttt{RMC-PI} is strongly polynomial, we must show two statements: (i) it terminates within a polynomial number of arithmetic operations, and (ii) the bit-size of all intermediate values remains polynomial in the input size. Both conditions are satisfied if the total number of iterations of \texttt{RMC-PI} is bounded by a polynomial in $|\states|$. Specifically, the former is implied by the fact that both policy evaluation and policy improvement are themselves polynomial. The latter holds because each iteration recomputes $\val^t = (\identitymatrix - \discountfactor \uncert^{\envpol^t})^{-1} \costvector$ from scratch: the entries of $\uncert^{\envpol^t}$ belong to a fixed finite set determined by $\nominal$ and $\radius$ (by the structure of $\envpol^t$), so their bit-size is polynomial in the input bit-size, and matrix inversion therefore yields a $\val^t$ whose bit-size is polynomial in the input bit-size and $\mathit{size}(\discountfactor)$, uniformly in $t$. So, we focus on showing that the number of iterations of \texttt{RMC-PI} is polynomial in $|\states|$.

To this end, our goal is to bound the change in $\infnorm{\val^{t+1} - \val^*}$ compared to $\infnorm{\val^t - \val^*}$ and then use the bound to obtain bounds on the maximum number of iterations before termination. Given the structure of the policies returned by the homotopy algorithm (Algorithm~\ref{algo:homotopy}), we introduce a specific {\em potential function} $f_\envpol(s,s',s'')$ to track how much the value of $\envpol$ can be improved by donating some probability mass from $\uncert^\envpol_{s,s''}$ to $\uncert^\envpol_{s,s'}$:

\begin{definition}[\textbf{Potential Function}] \phantomsection\label{def:potential_function}
Given a policy $\envpol$, for any $s,s',s'' \in \States$ define:
\[
    f_\envpol(s,s',s'') 
    := \min\!\left(
        \uncert^*_{s,s'} - \uncert^\envpol_{s,s'}, \;
        \uncert^\envpol_{s,s''} - \uncert^*_{s,s''}
    \right) 
    \left( \valvector^*_{s'} - \valvector^*_{s''} \right),
\]
% where $s'$ represents a state where the optimal transition $\uncert^*$ places more mass than $\envpol$, and $s''$ represents a state where $\envpol$ places excess mass.
\end{definition}

Note that $f_\envpol(s,s',s'')$ can take negative values; the maximizing triple used in Lemmas~\ref{lemma:upperbound_potential} and~\ref{lemma:lower_upper_combination} is, however, always non-negative (e.g., taking $s' = s''$ yields $f_\envpol = 0$).

Intuitively, the $\min$ fragment in the definition of the potential function specifies the probability mass being donated to $\uncert^\envpol_{s,s'}$ from $\uncert^\envpol_{s,s''}$, and the $\val^*$ fragment quantifies the change in values.

We now establish a lower bound on the optimality gap in terms of the potential function. This result (proved in Appendix~\ref{app:lemma:lowerbound_potential}) connects the potential function to the actual difference in values. %Specifically, if a policy $\envpol$ allows for a feasible mass transfer that improves the value, the global value difference must be at least proportional to that improvement.

\begin{lemma} \phantomsection\label{lemma:lowerbound_potential}
    For every policy $\envpol$, and states $s,s',s'' \in \States$ it holds that
    $
        \valvector^*_s - \valvector^\envpol_s \ge \discountfactor \, f_\envpol(s,s',s'').
    $ 
\end{lemma}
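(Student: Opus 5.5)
We prove the bound by feeding a perturbed distribution into the Bellman optimality equation at $s$. Write $\valvector^* = \Bellman\valvector^*$ by Lemma~\ref{lemma:contractive_value_iteration}. Since the optimal environment policy attains the max, $\valvector^*_s = \costvector_s + \discountfactor \max_{\pvector\in\uncert(s)} \pvector^\top\valvector^*$. On the other hand, $\valvector^\envpol_s = \costvector_s + \discountfactor (\uncert^\envpol_s)^\top \valvector^\envpol$.

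Subtracting, and using $\valvector^\envpol \preccurlyeq \valvector^*$ (true because $\envpolopt$ is optimal, so $\valvector^* \succcurlyeq \valvector^\envpol$ componentwise), we get
\[
\valvector^*_s - \valvector^\envpol_s = \discountfactor\Big(\max_{\pvector\in\uncert(s)}\pvector^\top\valvector^* - (\uncert^\envpol_s)^\top\valvector^\envpol\Big) \ge \discountfactor\Big(\max_{\pvector\in\uncert(s)}\pvector^\top\valvector^* - (\uncert^\envpol_s)^\top\valvector^*\Big).
\]
It therefore suffices to exhibit some $\pvector\in\uncert(s)$ with $(\pvector - \uncert^\envpol_s)^\top \valvector^* \ge f_\envpol(s,s',s'')$.

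\textbf{Case $f_\envpol(s,s',s'')\le 0$.} Take $\pvector = \uncert^\envpol_s$. It lies in $\uncert(s)$ and gives difference $0 \ge f_\envpol$.

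\textbf{Case $f_\envpol(s,s',s'')>0$.} Set $\epsilon = \min(\uncert^*_{s,s'} - \uncert^\envpol_{s,s'},\ \uncert^\envpol_{s,s''} - \uncert^*_{s,s''})$. If $\epsilon\le 0$, then positivity of $f_\envpol$ forces $\valvector^*_{s'} < \valvector^*_{s''}$ with $\epsilon<0$. In that case I move mass $|\epsilon|$ in the opposite direction, from $s'$ to $s''$. If $\epsilon > 0$, then $s'\neq s''$ and I move mass $\epsilon$ from $s''$ to $s'$. In both cases the change in $\pvector^\top\valvector^*$ is exactly $\epsilon(\valvector^*_{s'}-\valvector^*_{s''}) = f_\envpol$.

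\textbf{Main obstacle: feasibility of the perturbed $\pvector$.} Total mass is preserved by construction, so what remains is the coordinatewise box constraints. The key observation is that each moved coordinate lands between its $\envpol$-value and its $\envpolopt$-value. Take $\epsilon>0$ first. The $s'$ coordinate goes from $\uncert^\envpol_{s,s'}$ up to a value at most $\uncert^*_{s,s'}$. The $s''$ coordinate goes down to a value at least $\uncert^*_{s,s''}$. The case $\epsilon<0$ is symmetric: there $|\epsilon|$ is bounded by $\uncert^\envpol_{s,s'}-\uncert^*_{s,s'}$ or by $\uncert^*_{s,s''}-\uncert^\envpol_{s,s''}$, whichever attains the min.

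Because each new coordinate is a convex combination of the corresponding coordinates of $\uncert^\envpol_s$ and $\uncert^*_s$, the constraints $0\le \pvector_{t}\le 1$ and $|\pvector_t - \nominal_{s,t}|\le\radius(s)$ are inherited from those two feasible distributions. Here it matters that the $\linf$ ball intersected with the simplex is a product of intervals plus one sum constraint. Hence $\pvector\in\uncert(s)$, and combining with the displayed inequality gives $\valvector^*_s - \valvector^\envpol_s \ge \discountfactor f_\envpol(s,s',s'')$.
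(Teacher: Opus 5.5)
Your reduction is correct: the Bellman equation at $s$ plus $\valvector^\envpol\preccurlyeq\valvector^*$ leaves you needing some $\pvector\in\uncert(s)$ with $(\pvector-\uncert^\envpol_s)^\top\valvector^*\ge f_\envpol(s,s',s'')$. The case $f_\envpol\le 0$ is also correct. So is the case $\epsilon>0$, where you shift $\epsilon$ from $s''$ to $s'$ and both coordinates stay between their $\envpol$- and $\envpolopt$-values. These three pieces are exactly the paper's argument.

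The gap is the branch $\epsilon<0$. Write $a=\uncert^*_{s,s'}-\uncert^\envpol_{s,s'}$ and $b=\uncert^\envpol_{s,s''}-\uncert^*_{s,s''}$. Then $|\epsilon|=-\min(a,b)=\max(-a,-b)$. So $|\epsilon|$ is the \emph{larger} of $\uncert^\envpol_{s,s'}-\uncert^*_{s,s'}$ and $\uncert^*_{s,s''}-\uncert^\envpol_{s,s''}$, not a quantity bounded by both. Shifting $|\epsilon|$ from $s'$ to $s''$ therefore pushes one of the two coordinates past its $\envpolopt$-value by $|a-b|$, and your convex-combination argument for feasibility fails.

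This cannot be repaired by choosing a different perturbation, because the inequality itself is false in that regime. Here is a counterexample.
\begin{itemize}
\item Let $s_1,s_2,s_3$ be absorbing with radius $0$ and costs $0$, $1-\discountfactor$, $\eta(1-\discountfactor)$ for some $0<\eta<1$. Every policy then gives them values $0,1,\eta$.
\item Let $s$ have successors $s_1,s_2,s_3$, with $\nominal_s=(1/3,1/3,1/3)$ and $\radius(s)=1/3$. Then $\uncert^*_s=(0,2/3,1/3)$.
\item Take $\uncert^\envpol_s=(1/3,2/3,0)$, which is feasible.
\item Now $f_\envpol(s,s_1,s_2)=\min(-1/3,0)\cdot(0-1)=1/3$, while $\valvector^*_s-\valvector^\envpol_s=\discountfactor\eta/3<\discountfactor/3$.
\end{itemize}
Your perturbation would produce $(0,1,0)$ here, which violates $\pvector_{s_2}\le 2/3$.

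The paper's proof never enters this case. It assumes $\epsilon\ge 0$ ``without loss of generality'', citing a symmetry of $f_\envpol$ in $s',s''$ and the claim that $\epsilon\le 0$ cannot produce the argmax. Neither justification holds:
\begin{itemize}
\item Swapping $s',s''$ turns $\min(a,b)(\valvector^*_{s'}-\valvector^*_{s''})$ into $\max(a,b)(\valvector^*_{s'}-\valvector^*_{s''})$, so there is no symmetry.
\item In the example above, $(s,s_1,s_2)$ is the global maximizer of $f_\envpol$, yet it has $\epsilon<0$.
\end{itemize}
What is actually proved, by the paper and by your $\epsilon>0$ branch, is the lemma for triples with $\epsilon\ge 0$ (or $f_\envpol\le 0$). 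You should state the lemma with that restriction rather than try to prove the $\epsilon<0$ case.
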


% \begin{lemma}
%     Let $\envpol$ be a policy. Let $s, s', s'' \in \States$ be states that maximize the quantity $f_\envpol(s,s',s'')$ subject to the constraint that increasing $\uncert^\envpol_{s,s'}$ by
%     \[
%         \epsilon := \min\!\left( \uncert^*_{s,s'} - \uncert^\envpol_{s,s'}, \; \uncert^\envpol_{s,s''} - \uncert^*_{s,s''} \right)
%     \]
%     and decreasing $\uncert^\envpol_{s,s''}$ by $\epsilon$ keeps the row $\uncert^\envpol_{s,\cdot}$ within the $\lone$-ball of radius $\radius_s$.
%     Then:
%     \[
%          \infnorm{(\uncert^* - \uncert^\envpol) \valvector^*} \leq poly(n) f_\envpol(s,s',s'').
%     \]
% \end{lemma}

% We complement the lower bound by deriving an upper bound on the value error. 
Next, we derive an upper-bound on the distance between $\val^\envpol$ and $\val^*$ based on the potential of $\envpol$. We view the difference between the optimal transition matrix and the current policy's matrix as a set of probability mass transfers. By bounding the impact of these transfers using the potential function, we link the global error (distance to optimal $\val^*$) back to the local slack (potential function).

\begin{lemma} \phantomsection\label{lemma:upperbound_potential}
    Let $\envpol$ be a policy. Let $s, s', s'' \in \States$ be the states that maximize $f_\envpol(s,s',s'')$. Then:
    \[
         \infnorm{\valvector^* - \valvector^\envpol} \leq \frac{n^2 \discountfactor}{1 - \discountfactor} f_\envpol(s,s',s'').
    \]
\end{lemma}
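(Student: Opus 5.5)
We bound $\valvector^* - \valvector^\envpol$ in two steps: first reduce it to the one-step Bellman gap $(\uncert^* - \uncert^\envpol)\valvector^*$, then decompose each row of that gap into pairwise mass transfers, each of which is bounded by the potential function.

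\textbf{Step 1: reduction to the one-step gap.} From $\valvector^* = \costvector + \discountfactor \uncert^* \valvector^*$ and $\valvector^\envpol = \costvector + \discountfactor \uncert^\envpol \valvector^\envpol$ we get
\[
\valvector^* - \valvector^\envpol = \discountfactor(\uncert^* - \uncert^\envpol)\valvector^* + \discountfactor \uncert^\envpol(\valvector^* - \valvector^\envpol),
\]
so that $\valvector^* - \valvector^\envpol = \discountfactor(\identitymatrix - \discountfactor\uncert^\envpol)^{-1}(\uncert^*-\uncert^\envpol)\valvector^*$. Since $\uncert^\envpol$ is stochastic, $\infnorm{(\identitymatrix - \discountfactor\uncert^\envpol)^{-1}} \le 1/(1-\discountfactor)$. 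It therefore suffices to show, for every state $s$,
\[
\bigl|\bigl((\uncert^* - \uncert^\envpol)\valvector^*\bigr)_s\bigr| \le n^2 \max_{s',s''} f_\envpol(s,s',s'').
\]
The global maximizer of $f_\envpol$ over all triples dominates each row's maximum, which gives the stated bound.

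\textbf{Step 2: transfer decomposition of a row.} Fix $s$ and let $d = \uncert^*_{s,\cdot} - \uncert^\envpol_{s,\cdot}$. Both rows are distributions, so $\sum_t d_t = 0$. Let $A=\{t: d_t>0\}$ and $B=\{t: d_t<0\}$. Decompose $d$ as a transportation flow: there are nonnegative $x_{s',s''}$ for $s'\in A$, $s''\in B$ with $\sum_{s''} x_{s',s''} = d_{s'}$ and $\sum_{s'} x_{s',s''} = -d_{s''}$. Such a flow exists (for instance by the northwest-corner rule), with at most $n^2$ pairs. Then
\[
d^\top \valvector^* = \sum_{s',s''} x_{s',s''}\,(\valvector^*_{s'} - \valvector^*_{s''}).
\]
For each pair we have $x_{s',s''} \le \min(d_{s'}, -d_{s''})$, which is exactly the $\min$ factor in $f_\envpol(s,s',s'')$.

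\textbf{Step 3: bounding the terms.} Terms with $\valvector^*_{s'} \ge \valvector^*_{s''}$ are at most $f_\envpol(s,s',s'') \le \max f_\envpol$. To control the absolute value we also note that $d^\top\valvector^* \ge 0$, because $\uncert^*_{s,\cdot}$ maximizes $\pvector^\top\valvector^*$ over $\uncert(s)$ (Bellman optimality, Lemma~\ref{lemma:contractive_value_iteration}). Hence we only need an upper bound, and we can drop the negative terms: $0 \le d^\top\valvector^* \le \sum_{\text{positive terms}} \le n^2 \max f_\envpol$.

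\textbf{Main obstacle.} The delicate point is the sign. An upper bound on $\infnorm{\valvector^*-\valvector^\envpol}$ needs $\valvector^*-\valvector^\envpol \succcurlyeq 0$, which holds because $\valvector^*$ is the maximal value. It also needs the nonnegativity of $(\identitymatrix-\discountfactor\uncert^\envpol)^{-1}$ combined with $d^\top\valvector^*\ge 0$ row-wise; otherwise negative transfer terms could not simply be discarded. Together these ensure every entry is bounded by $\frac{\discountfactor}{1-\discountfactor}\, n^2 \max f_\envpol$. The counting of $n^2$ pairs is crude but matches the statement.
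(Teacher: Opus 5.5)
Your proof is correct and follows the paper's route. You use the same identity $\valvector^*-\valvector^\envpol=\discountfactor(\identitymatrix-\discountfactor\uncert^\envpol)^{-1}(\uncert^*-\uncert^\envpol)\valvector^*$ with the Neumann-series bound, then decompose $\uncert^*-\uncert^\envpol$ row by row into at most $n^2$ mass transfers, each bounded by $f_\envpol$. The differences are minor: the paper argues by a swap that all transfers may be taken uphill, whereas you take an arbitrary transportation flow and discard the downhill terms using $d^\top\valvector^*\ge 0$; your explicit bound $x_{s',s''}\le\min(d_{s'},-d_{s''})$ also makes precise a step the paper leaves implicit, and your remark that $\valvector^*\succcurlyeq\valvector^\envpol$ is needed is superfluous once you bound via the norm.
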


\begin{proofsketch}
    We first show that 
    $\infnorm{\valvector^* - \valvector^\envpol} \leq \discountfactor \infnorm{(\identitymatrix - \discountfactor\uncert^\envpol)^{-1}} \infnorm{(\uncert^* - \uncert^\envpol) \valvector^*} $ 
    and proceed by showing an upper-bound on each factor of the RHS. (i) We show: $\infnorm{(\identitymatrix - \discountfactor\uncert^\envpol)^{-1}} \leq \frac{1}{1 - \discountfactor} $ by using the geometric series sum. (ii) We then show that $\infnorm{(\uncert^* - \uncert^\envpol) \valvector^*} \leq  n^2 f_\envpol(s, s', s'') $ by decomposing $\uncert^* - \uncert^\envpol$ into a series of at most $n^2$ {\em mass transfer} operations each contributing at most $f_\envpol(s,s',s'')$ to the LHS. The full proof is presented in Appendix~\ref{app:lemma:upperbound_potential}. 
\end{proofsketch}

As a byproduct of Lemma~\ref{lemma:lowerbound_potential} and Lemma~\ref{lemma:upperbound_potential}, we can now obtain a bound between $\infnorm{\val^* - \val^\envpol}$ and $\infnorm{\val^* - \val^{\envpol'}}$ based on the potential functions of $\envpol$ and $\envpol'$:

\begin{lemma} \phantomsection\label{lemma:lower_upper_combination}
    Assume $\envpol$ and $\envpol'$ are two policies. Let $s, s', s'' \in \States$ be the states that maximize $f_\envpol(s,s',s'')$.
    Suppose further that $\envpol'$ satisfies:
    \[
        \min\left( \uncert^*_{s,s'} - \uncert^{\envpol'}_{s,s'}, \; \uncert^{\envpol'}_{s,s''} - \uncert^*_{s,s''} \right) \geq \frac{1}{2} \min\left( \uncert^*_{s,s'} - \uncert^\envpol_{s,s'}, \; \uncert^\envpol_{s,s''} - \uncert^*_{s,s''} \right).
    \]
    Then $\infnorm{\valvector^* - \valvector^{\envpol'}} \geq \frac{1 - \discountfactor}{2n^2} \infnorm{\valvector^* - \valvector^\envpol}$.
\end{lemma}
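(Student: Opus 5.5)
Since $(s,s',s'')$ maximize $f_\envpol$, the maximum is non-negative (take $s'=s''$), so $f_\envpol(s,s',s'')\ge 0$. Lemma~\ref{lemma:upperbound_potential} applied to $\envpol$ then gives
\[
\infnorm{\valvector^*-\valvector^\envpol}\le \frac{n^2\discountfactor}{1-\discountfactor}\, f_\envpol(s,s',s'').
\]
If $f_\envpol(s,s',s'')=0$, then $\valvector^\envpol=\valvector^*$ and the claim is trivial. So I assume $f_\envpol(s,s',s'')>0$.

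Write $\epsilon=\min(\uncert^*_{s,s'}-\uncert^\envpol_{s,s'},\,\uncert^\envpol_{s,s''}-\uncert^*_{s,s''})$ and $\Delta=\valvector^*_{s'}-\valvector^*_{s''}$, so that $f_\envpol(s,s',s'')=\epsilon\Delta$. Positivity forces $\epsilon$ and $\Delta$ to share a sign. The intended case is $\epsilon>0$, $\Delta>0$: there the hypothesis gives $\epsilon'\ge \epsilon/2>0$, where $\epsilon'$ is the analogous min for $\envpol'$. Hence
\[
f_{\envpol'}(s,s',s'')=\epsilon'\Delta\ge \tfrac12\epsilon\Delta=\tfrac12 f_\envpol(s,s',s'').
\]
Applying Lemma~\ref{lemma:lowerbound_potential} to $\envpol'$ at the triple $(s,s',s'')$ yields
\[
\infnorm{\valvector^*-\valvector^{\envpol'}}\ge \valvector^*_s-\valvector^{\envpol'}_s\ge \discountfactor f_{\envpol'}(s,s',s'')\ge \tfrac{\discountfactor}{2} f_\envpol(s,s',s'')\ge \frac{1-\discountfactor}{2n^2}\infnorm{\valvector^*-\valvector^\envpol},
\]
where the last step uses the displayed upper bound. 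The factor $\discountfactor$ cancels exactly.

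The main obstacle is the sign case $\epsilon<0$, $\Delta<0$, where halving the min no longer gives a useful inequality. I would rule it out by a symmetry argument. Swapping $s'$ and $s''$ negates $\Delta$, and the min becomes $\min(\uncert^*_{s,s''}-\uncert^\envpol_{s,s''},\,\uncert^\envpol_{s,s'}-\uncert^*_{s,s'})$. Write $a=\uncert^*_{s,s'}-\uncert^\envpol_{s,s'}$ and $b=\uncert^\envpol_{s,s''}-\uncert^*_{s,s''}$. The original min is $\min(a,b)<0$, and the swapped min is $\min(-b,-a)=-\max(a,b)$.

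If both $a,b<0$, then $-\max(a,b)>0$, so the swapped value is $(-\max(a,b))(-\Delta)=\max(a,b)\Delta$, where $\Delta<0$. Since $\max(a,b)\ge\min(a,b)$ and $\Delta<0$, this is at most $\min(a,b)\Delta$, so no contradiction arises. I would therefore instead argue that I may, without loss of generality, restrict the maximization to triples with $\epsilon\ge 0$, which is how Lemma~\ref{lemma:upperbound_potential}'s proof uses it: the mass-transfer decomposition only produces transfers with nonnegative amounts.

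More concretely, I would treat the lemma's maximizer as taken over triples with $\epsilon\ge0$. I would check that Lemma~\ref{lemma:upperbound_potential} still holds for this restricted maximizer, since the decomposition of $\uncert^*-\uncert^\envpol$ into transfers from surplus entries to deficit entries only uses such triples. That restriction reduces everything to the intended case.
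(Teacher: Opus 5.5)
Your main chain is the same as the paper's: Lemma~\ref{lemma:upperbound_potential} for $\envpol$, then the hypothesis to get $f_{\envpol'}(s,s',s'')\ge\frac12 f_\envpol(s,s',s'')$, then Lemma~\ref{lemma:lowerbound_potential} for $\envpol'$, with $\discountfactor$ cancelling. You depart from the paper only on the sign of $\valvector^*_{s'}-\valvector^*_{s''}$, and there your caution is justified. The paper just asserts $\valvector^*_{s'}\ge\valvector^*_{s''}$ ``since we are considering the maximum potential.'' In the proof of Lemma~\ref{lemma:lowerbound_potential} it likewise asserts that a negative min cannot produce the argmax. As your swap computation shows, neither claim follows.

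Your restriction is therefore not without loss of generality. It is a necessary correction, because with the argmax over all triples, as literally written, the lemma is false. Here is a counterexample.
\begin{itemize}
\item Let $s_1,s_2,s_3$ be absorbing states (radius $0$) with $\valvector^*_{s_1}>\valvector^*_{s_2}>\valvector^*_{s_3}$.
\item Let $s$ have uniform nominal distribution on them and radius $\frac1{10}$, so $\uncert^*_s=(\frac{13}{30},\frac{10}{30},\frac{7}{30})$.
\item Take $\uncert^\envpol_s=(\frac{10}{30},\frac{7}{30},\frac{13}{30})$.
\end{itemize}
Then $f_\envpol(s,s_3,s_1)=\frac15(\valvector^*_{s_1}-\valvector^*_{s_3})$ with min equal to $-\frac15$. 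Every triple with nonnegative min has potential at most $\frac1{10}(\valvector^*_{s_1}-\valvector^*_{s_3})$, so every maximizer has negative min. Now $\envpol'=\envpolopt$ satisfies the hypothesis (its left side is $0$). The conclusion would then force $\valvector^\envpol=\valvector^*$, which is false since $\uncert^\envpol_s\valvector^*<\uncert^*_s\valvector^*$. The same example also breaks Lemma~\ref{lemma:lowerbound_potential} at $(s,s_3,s_1)$, since it would require $\valvector^*_{s_2}\ge\valvector^*_{s_1}$. That lemma is valid only at triples with nonnegative min, which is where you apply it.

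So build the restriction into the statement, and into the critical triple of Lemma~\ref{lemma:subaction_not_repeat}; the check you deferred then goes through. Suppose in row $i$ the state $j$ is a deficit ($\uncert^*_{i,j}>\uncert^\envpol_{i,j}$) and $k$ is a surplus ($\uncert^*_{i,k}<\uncert^\envpol_{i,k}$). Moving a little mass in $\uncert^*_i$ from $j$ to $k$ keeps every coordinate between those of $\uncert^*_i$ and $\uncert^\envpol_i$, so it stays feasible. Optimality of $\uncert^*_i$ then gives $\valvector^*_j\ge\valvector^*_k$. This has three consequences:
\begin{itemize}
\item Every term of the surplus-to-deficit decomposition is at most $f_\envpol(i,j,k)$ at a triple with positive min, so Lemma~\ref{lemma:upperbound_potential} holds for the restricted maximizer.
\item The restricted maximum is nonnegative.
\item When the restricted maximum is positive, the min is positive and hence the sign of $\valvector^*_{s'}-\valvector^*_{s''}$ is positive, so you are in your intended case.
\end{itemize}
This completes your argument.
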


\begin{proofsketch}
    We chain the upper bound of Lemma~\ref{lemma:upperbound_potential} (for $\envpol$) and the lower bound of Lemma~\ref{lemma:lowerbound_potential} (for $\envpol'$) to derive the result. Full proof is available in Appendix~\ref{app:lemma:lower_upper_combination}.
\end{proofsketch}

We now show that the algorithm cannot get ``stuck'' making small updates to the same constraints. Specifically, we prove that within a fixed window of $L$ steps, the difference between the probability at $\uncert^*$ and $\uncert^\envpol$ on the critical constraint must be at least halved. The intuition relies on a contradiction: if this difference remained large, Lemma~\ref{lemma:lower_upper_combination} would imply that the value error also remains large. However, we know from Lemma \ref{lemma:policy_iteration_exponential_upperbound} that the value error contracts exponentially. Therefore, this difference must decrease significantly to match this contraction.

\begin{lemma} \phantomsection\label{lemma:subaction_not_repeat}
    Let $\envpol^{0}, \ldots, \envpol^{T}$ be the sequence of policies generated by the policy iteration algorithm.  
    Define the step threshold $L := \log_{\discountfactor} \left( \frac{1 - \discountfactor}{2n^2} \right)$.
    For a specific policy $\envpol^{t}$, let $(s,s',s'') = \argmax_{s,s',s'' \in \States} f_{\envpol^{t}}(s,s',s'')$ be the triple with the maximum potential in $\envpol^{t}$. Then, in any subsequent policy $\envpol^{l}$ with $l > t + L$, we have:
    \begin{equation*}
        \min\!\left( \uncert^*_{s,s'} - \uncert^{\envpol^{l}}_{s,s'}, \;
                     \uncert^{\envpol^{l}}_{s,s''} - \uncert^*_{s,s''} \right)
        \leq \frac{1}{2} 
        \min\!\left( \uncert^*_{s,s'} - \uncert^{\envpol^{t}}_{s,s'}, \;
                     \uncert^{\envpol^{t}}_{s,s''} - \uncert^*_{s,s''} \right).
    \end{equation*}
\end{lemma}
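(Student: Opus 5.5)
We argue by contradiction, chaining the geometric contraction of Lemma~\ref{lemma:policy_iteration_exponential_upperbound} against the lower bound of Lemma~\ref{lemma:lower_upper_combination}. Fix $t$, the maximizing triple $(s,s',s'')$ for $\envpol^t$, and some $l > t+L$. Suppose the claimed inequality fails, i.e.
\[
\min\!\left( \uncert^*_{s,s'} - \uncert^{\envpol^{l}}_{s,s'},\; \uncert^{\envpol^{l}}_{s,s''} - \uncert^*_{s,s''} \right) > \tfrac12 \min\!\left( \uncert^*_{s,s'} - \uncert^{\envpol^{t}}_{s,s'},\; \uncert^{\envpol^{t}}_{s,s''} - \uncert^*_{s,s''} \right).
\]
Then the hypothesis of Lemma~\ref{lemma:lower_upper_combination} holds with $\envpol = \envpol^t$ and $\envpol' = \envpol^l$. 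That lemma gives
\[
\infnorm{\val^* - \val^{l}} \ge \frac{1-\discountfactor}{2n^2}\infnorm{\val^* - \val^{t}}.
\]

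For the opposite direction, we restart the contraction argument of Lemma~\ref{lemma:policy_iteration_exponential_upperbound} at time $t$. Its proof uses only that $\val^{t+1} \succcurlyeq \Bellman \val^t$ and that $\val^t \preccurlyeq \val^*$, together with the $\discountfactor$-contraction of $\Bellman$ from Lemma~\ref{lemma:contractive_value_iteration}. Applying it from index $t$ therefore yields
\[
\infnorm{\val^l - \val^*} \le \discountfactor^{\,l-t}\infnorm{\val^t - \val^*}.
\]
If $\infnorm{\val^t-\val^*}=0$, then $\envpol^t$ is already optimal and all potentials are $\le 0$. We must then check that the left-hand minimum is still at most half the right-hand one; this is the degenerate case, handled below. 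Otherwise we combine the two bounds to get $\discountfactor^{\,l-t} \ge \frac{1-\discountfactor}{2n^2}$. Since $\discountfactor<1$, this means $l-t \le \log_\discountfactor\frac{1-\discountfactor}{2n^2} = L$, contradicting $l>t+L$.

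The main obstacle is the degenerate and sign cases. Lemma~\ref{lemma:lower_upper_combination} presumably relies on the maximal potential $f_{\envpol^t}(s,s',s'')$ being positive, so that the Lemma~\ref{lemma:upperbound_potential} bound for $\envpol^t$ is meaningful and can be chained with Lemma~\ref{lemma:lowerbound_potential} for $\envpol^l$. If $\val^t=\val^*$, or if the maximum potential is $0$, the $\min$ on the right-hand side may be $\le 0$ or the value factor may vanish. In that case we would argue that policy iteration has already reached $\val^*$. Because $\val^{l}\succcurlyeq\val^t$ by monotonicity, the minimum in question is nonpositive, or else the value difference $\val^*_{s'}-\val^*_{s''}$ is zero. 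In either case the statement should hold trivially, or only with the convention that the triple is taken where $f>0$.

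I would also double-check the direction of the inequality when $L$ is not an integer. This requires only $l-t>L$ strictly, which is exactly what we assumed.
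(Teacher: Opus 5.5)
Your proof is essentially the paper's: assuming the minimum at step $l$ exceeds half of that at step $t$, you get $\infnorm{\val^*-\val^{l}} \ge \frac{1-\discountfactor}{2n^2}\infnorm{\val^*-\val^{t}}$ from Lemma~\ref{lemma:lower_upper_combination}, compare it with the contraction $\infnorm{\val^{l}-\val^*}\le \discountfactor^{\,l-t}\infnorm{\val^{t}-\val^*}$ from Lemma~\ref{lemma:policy_iteration_exponential_upperbound} restarted at $t$ (which the paper uses implicitly), and conclude $l-t\le L$. The paper also tacitly assumes $\infnorm{\val^{t}-\val^*}>0$ when it combines the two bounds, so your degenerate-case paragraph goes beyond it; as written, though, that paragraph is not a proof, since monotonicity of the values says nothing about the signs of the probability differences.
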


Lemma~\ref{lemma:subaction_not_repeat} (proved in Appendix~\ref{app:lemma:subaction_not_repeat}) shows that after every $L$ iterations of \texttt{RMC-PI}, the amount of $\min(\uncert^*_{s,s'} - \uncert^{\envpol^{l}}_{s,s'},\uncert^{\envpol^{l}}_{s,s''} - \uncert^*_{s,s''})$ reduces by at least half. This means that the {\em most significant bit (MSB)} in its binary presentation changes by at least one index. Our goal is to show that there are polynomially many possible MSBs for this value, hence it can only change a polynomial amount of times. To this end, we introduce the following combinatorial lemma (proved in Appendix~\ref{app:lemma:combinatorial_lemma}):%, which we use later on to prove that the \texttt{RMC-PI} algorithms terminate in polynomial number of iterations.

\begin{lemma} \phantomsection\label{lemma:combinatorial_lemma}
    Let $c$ be a constant positive integer. Let $X$ be a finite set of nonnegative real numbers. Define the set of all signed subset sums as: $$A(X) = \left\{ \left| \sum_{x \in X} g(x) \cdot x \right| \;\middle|\; g : X \to \{-c,\cdots, 0, \cdots, c\} \right\}.$$
    Define the \emph{degree} of a set of positive real numbers $Y \subseteq \mathbb{R}_{>0}$ as the number of distinct MSBs in $Y$:
    $\mathrm{Deg}(Y) = \left| \left\{ \lfloor \log_2 y \rfloor \;\middle|\; y \in Y \right\} \right|.$
    Then for all $X$, it holds that
    $$
        \mathrm{Deg}(A(X) \setminus \{0\}) \in \mathcal{O}(|X| \log |X|).
    $$
\end{lemma}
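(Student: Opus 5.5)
We plan to convert the statement into a question about integer linear dependencies among the coefficient vectors $g \in \{-c,\dots,c\}^X \subseteq \mathbb{Z}^n$, where $n=|X|$, and to bound their "heights" with Cramer's rule and Hadamard's inequality. For each MSB value $k$ that occurs in $A(X)\setminus\{0\}$ we fix a witness $y_k = |g_k\cdot x|$ with $\lfloor \log_2 y_k\rfloor = k$, where $x$ is the vector of elements of $X$. The coefficient vectors $g_k$ live in $\mathbb{Z}^n$, so at most $n$ of them are linearly independent. The key quantitative fact is the following. If $g=\sum_{i\in B} a_i g_i$ for a linearly independent set $B$ of such vectors, then by Cramer's rule each $a_i$ is a ratio of $|B|\times|B|$ minors of an integer matrix with entries bounded by $c$. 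By Hadamard's inequality these minors are at most $(c\sqrt{n})^{n}$. Since $y=\sum a_i (g_i\cdot x)$, we get $|y| \le n\,(c\sqrt n)^n \max_{i\in B}|g_i\cdot x|$.

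\textbf{Step 1 (greedy independent skeleton).} Process the occurring scales $k$ in increasing order. A witness is added to a set $B$ exactly when $g_k$ is linearly independent of the vectors already in $B$; this happens at most $n$ times, at scales $k_1<k_2<\dots<k_r$ with $r\le n$. Every other scale $k$ with $k_j<k<k_{j+1}$ has $g_k$ in the span of $g_{k_1},\dots,g_{k_j}$. By the fact above, $k \le k_j + \log_2\!\big(n (c\sqrt n)^n\big) + 1 = k_j + O(n\log n)$. So every occurring scale lies in a window of length $O(n\log n)$ starting at one of the $r\le n$ skeleton scales. On its own this only gives a crude $O(n^2\log n)$ bound.

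\textbf{Step 2 (telescoping the windows; the main obstacle).} To reach $O(n\log n)$ we must show that the window lengths sum to $O(n\log n)$ rather than each being $O(n\log n)$. Let $D_j$ be the largest absolute $j\times j$ minor of the matrix whose rows are $g_{k_1},\dots,g_{k_j}$. We first try to bound the window after $k_j$ by $\log_2$ of a ratio of the form $D_{j}\cdot(\text{something})/D_{j-1}$, plus $O(\log n)$. The idea is to express a dependent vector in the basis given by a Gram–Schmidt or Hermite-normal-form-reduced version of $g_{k_1},\dots,g_{k_j}$. In that basis the coefficient on the newest vector is controlled by $D_{j+1}/D_j$ while the older coefficients are controlled at the earlier stages. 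Summing over $j$ telescopes to $\log_2 D_n + O(n\log n) \le n\log_2(c\sqrt n) + O(n\log n) = O(n\log n)$.

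If this exact telescoping resists, the first fallback is a potential-function version of the same idea. Consider the lattice $\Lambda_j=\mathrm{span}_{\mathbb Z}(g_{k_1},\dots,g_{k_j})$ together with the linear form $g\mapsto g\cdot x$ restricted to it. The aim is to show that each newly occupied scale beyond a bounded number per stage forces the covolume of a suitably chosen sublattice to grow by a constant factor. Since covolumes of sublattices of $\{-c,\dots,c\}$-generated lattices are bounded by $(c\sqrt n)^n$, this would allow only $O(n\log n)$ such forcing events in total.

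\textbf{Step 3 (conclusion).} Combining Steps 1 and 2, $\mathrm{Deg}(A(X)\setminus\{0\}) \le r + \sum_j (\text{window length after } k_j) = O(n\log n)$, with the hidden constant depending only on $c$. Step 2 is where essentially all the difficulty lies: Step 1 and the Cramer/Hadamard height bound are routine, whereas making the per-stage height bounds telescope, rather than each paying the full $n\log n$, is the delicate point.
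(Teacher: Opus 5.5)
\paragraph{There is a genuine gap: Step 2 is not proved.}
Your Step 1 is sound. Cramer's rule plus Hadamard's inequality shows that every occupied scale lies within $\log_2\bigl(n(c\sqrt n)^n\bigr)+1=O(n\log n)$ of the last skeleton scale below it. That gives $\mathrm{Deg}(A(X)\setminus\{0\})=O(n^2\log n)$.

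The content of the lemma is precisely the improvement to $O(n\log n)$, and Step 2 does not supply it. The telescoping bound on the window after $k_j$ by a ratio of minors is only announced (``we first try''). No inequality linking a window length to $D_{j+1}/D_j$ is derived. The claim that the older coefficients are ``controlled at the earlier stages'' is unjustified, since a change of basis mixes all coefficients. The covolume fallback is likewise only stated as an aim.

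There is also a structural reason to doubt this route. Your only control over a dependent $g_k$ is its coefficient vector with respect to at most $n$ independent skeleton vectors. Such coefficients can in general be exponentially large in $n$, which is why each window costs $\Theta(n\log n)$ bits. Nothing in your argument stops a window of that length from recurring at every stage. So as written, the proposal proves only the quadratic bound.

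\paragraph{The missing idea: a relation with small coefficients.}
Instead of expressing one vector in a basis, use many more vectors than the dimension; by pigeonhole, i.e.\ Siegel's lemma, they admit a linear relation with \emph{small} coefficients.
\begin{itemize}
\item Take $0<z_1<\dots<z_{2n}$ in $A(X)$ with $z_i=\pm g_i\cdot x$. Apply Siegel's lemma to the $n\times 2n$ integer matrix with columns $\pm g_i$, whose entries are bounded by $c$. This gives a nonzero $u\in\mathbb{Z}^{2n}$ with $\sum_i u_i z_i=0$ and $|u_i|\le 2nc$.
\item Suppose also $z_{i+1}\ge R z_i$ with $R=2nc+1$. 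Let $J$ be the largest index with $u_J\neq0$; if $J=1$ this already contradicts $z_1>0$. Otherwise $R z_{J-1}\le |u_J| z_J\le 2nc\sum_{i<J}z_i<2nc\cdot\frac{R}{R-1}\,z_{J-1}=R z_{J-1}$, a contradiction.
\item Hence every $R$-separated chain in $A(X)\setminus\{0\}$ has length at most $2n-1$.
\item Greedily cover $A(X)\setminus\{0\}$ by at most $2n-1$ intervals $[s,Rs)$. Each meets at most $\lfloor\log_2 R\rfloor+2$ dyadic ranges, giving $O(n\log(nc))=O(n\log n)$ MSBs.
\end{itemize}
This is the argument behind the paper's Theorem~\ref{thm: main2}. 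The paper's own Theorem~\ref{thm: main1} uses the same mechanism with about $n\log n$ vectors. Then Siegel's bound becomes $C_0c$, and a constant spacing $D$ of degrees already forces $C_0c$-linear independence. Either way, the independent skeleton and the telescoping of Step 2 are unnecessary.
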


Lemma~\ref{lemma:combinatorial_lemma} shows that for a set $X$ of non-negative reals, the number of MSBs in the set $A(X)$ is a polynomial bounded by a polynomial in the size of $X$. We derive the final complexity bound by combining the geometric contraction of the discrepancy with the finite number of possible discrepancy values.

\begin{theorem} \phantomsection\label{thm:rmc:iterations}
    \texttt{RMC-PI} terminates with an optimal policy in
    $
        \mathcal{O}\left(n^4 \log n \cdot \frac{\log\left(\frac{1 - \discountfactor}{n^2}\right)}{\log{\discountfactor}} \right)
    $
    iterations where $n= |\States|$.
\end{theorem}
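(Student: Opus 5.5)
The plan is a Ye/Hansen-style counting argument built on Lemma~\ref{lemma:subaction_not_repeat} and Lemma~\ref{lemma:combinatorial_lemma}. For each iteration $t$ before termination, let $(s_t,s'_t,s''_t)$ be the maximizing triple of $f_{\envpol^t}$ and define the \emph{discrepancy}
\[
\epsilon_t := \min\left(\uncert^*_{s_t,s'_t}-\uncert^{\envpol^t}_{s_t,s'_t},\; \uncert^{\envpol^t}_{s_t,s''_t}-\uncert^*_{s_t,s''_t}\right).
\]
If $\envpol^t\neq\envpol^*$, then $\valvector^t\neq\valvector^*$. Lemma~\ref{lemma:upperbound_potential} then forces the maximal potential to be strictly positive, so $\epsilon_t>0$. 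Fix a triple $(s,s',s'')$ and consider the function $l\mapsto \min(\uncert^*_{s,s'}-\uncert^{\envpol^l}_{s,s'},\uncert^{\envpol^l}_{s,s''}-\uncert^*_{s,s''})$. By Lemma~\ref{lemma:subaction_not_repeat}, whenever this triple is the argmax at time $t$, its discrepancy at any time $l>t+L$ is at most half its value at $t$. Suppose the triple is the argmax at times $t_1<t_2<\cdots$ with $t_{i+1}>t_i+L$. Then the discrepancies $\epsilon_{t_1}>\epsilon_{t_2}>\cdots$ at least halve each time, so their values $\lfloor\log_2\epsilon_{t_i}\rfloor$ are pairwise distinct. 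I will use the positivity of the argmax values to work with these binary exponents.

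The key step is to show that all possible discrepancy values for a fixed triple lie in a set $A(X)\setminus\{0\}$ with $|X|=O(n)$. I will do this using the structural description of homotopic policies. Every entry $\uncert^{\envpol}_{s,s'}$ lies in $\{\nominal_{s,s'}\pm\radius(s),\,1,\,0\}$ or is the incomplete entry $1-\sum_{s''\neq s'}\pvector_{s''}$. The incomplete entry is itself a signed combination of the numbers $\nominal_{s,t}$, $\radius(s)$ and $1$, with coefficients bounded by $n$. The same holds for $\uncert^*$, since $\envpol^*$ is also produced by the homotopy step at the fixed point. Hence each difference $\uncert^*_{s,s'}-\uncert^{\envpol}_{s,s'}$ is a signed sum over $X_s=\{\nominal_{s,t}\}_t\cup\{\radius(s),1\}$. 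This is the main obstacle: the coefficient on $\radius(s)$, and similarly on $1$, can be as large as $n$, whereas Lemma~\ref{lemma:combinatorial_lemma} requires constant $c$.

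To handle this, I will try to absorb the large multiples into $X$. Each discrepancy has absolute value bounded by $1$ and is in the form $\sum_t g(t)\nominal_{s,t}+k\radius(s)+k'\cdot 1$ with $g(t)\in\{-2,\dots,2\}$ and $|k|,|k'|\le 2n$. I will add the multiples $2^j\radius(s)$ and $2^j$ for $j\le\log_2 n$ to $X$, so that $k$ and $k'$ can be written in binary. This keeps $|X|=O(n)$ and $c=O(1)$. Lemma~\ref{lemma:combinatorial_lemma} then gives $O(n\log n)$ distinct MSBs per state $s$. Since $\epsilon_t$ is the minimum of two such values, it is one of them, so the same bound applies to $\epsilon_t$.

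Counting then proceeds as follows. A triple can serve as the argmax, at times pairwise more than $L$ apart, at most $O(n\log n)$ times. There are $n^3$ triples. Any window of $L+1$ consecutive iterations contains some argmax time, so the total number of iterations is at most $(L+1)\cdot n^3\cdot O(n\log n)$. Substituting $L=\log_\discountfactor\frac{1-\discountfactor}{2n^2}$ gives $O\!\left(n^4\log n\cdot\frac{\log((1-\discountfactor)/n^2)}{\log\discountfactor}\right)$. Optimality at termination follows because $\envpol^t=\envpol^{t-1}$ implies $\Bellman\valvector^t=\valvector^t$, and Lemma~\ref{lemma:contractive_value_iteration} then gives $\valvector^t=\valvector^*$.
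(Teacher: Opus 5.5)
Your proposal is correct and follows the paper's proof essentially step for step. You place the discrepancy values in $A(X_s)$ with $|X_s|=O(n)$, so that Lemma~\ref{lemma:combinatorial_lemma} yields $O(n\log n)$ possible MSBs per state. Lemma~\ref{lemma:subaction_not_repeat} then forces the critical triple's discrepancy to halve every $L$ iterations, and you multiply by the $n^3$ triples. The only variation is how you absorb the $O(n)$ coefficient on $\radius(s)$: you use the dyadic multiples $2^j\radius(s)$ instead of the paper's explicit multiples $\radius(s),2\radius(s),\dots,n\radius(s)$. Both choices keep $|X_s|=O(n)$ with constant $c$, and your positivity and window-counting details are spelled out more carefully than in the paper.
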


\begin{proofsketch}
The proof is done in three steps:
\begin{compactitem}
    \item \textbf{Geometric Contraction:} Lemma~\ref{lemma:subaction_not_repeat} shows that for the maximizing triple, the discrepancy between the current and optimal transition function decreases by a factor of at least two every $L$ steps.
    \item \textbf{Combinatorial Finiteness:} Lemma~\ref{lemma:combinatorial_lemma} restricts the possible values of the transition probabilities. These values are generated by a finite set of parameters (the nominal probabilities and uncertainty radii). Specifically, by invoking Lemma~\ref{lemma:combinatorial_lemma} on a set with $O(n)$ elements, we show that there are $O(n \log n)$ possible MSBs for the discrepancy. 
    %Consequently, the discrepancy can take only a bounded number of distinct logarithmic scales.
    \item There are at most $n^3$ possibilities for the triple with the highest discrepancy. Hence, after every $O(n^3 \cdot L)$ iterations, the MSB of the highest discrepancy decreases by at least one. Given that there are $O(n \log n)$ possible different MSBs, the algorithm must terminate within $O(n^4 \log n \cdot L)$ iterations. Dropping constants gives the bound in the theorem statement.
\end{compactitem}
Full proof is available in Appendix~\ref{app:thm:rmc:iterations}.
\end{proofsketch}

Since the number of iterations required by the \texttt{RMC-PI} algorithm (Algorithm \ref{algo:RMCPI}) is polynomial, we can conclude that \texttt{RMC-PI} is a strongly polynomial algorithm.

\begin{corollary} \phantomsection\label{corollary:rmc-strong-poly}
\texttt{RMC-PI} is strongly polynomial given an RMC with $n$ states and constant discount factor $\discountfactor$. %Specifically, its runtime is $\mathcal{O}\left(n^6 \log^2 n \cdot \frac{\log\left(\frac{1 - \discountfactor}{n^2}\right)}{\log{\discountfactor}} \right).$
\end{corollary}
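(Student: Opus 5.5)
The plan is to combine Theorem~\ref{thm:rmc:iterations} with the per-iteration cost analysis already sketched before Definition~\ref{def:potential_function}. Strong polynomiality needs two things. The number of arithmetic operations must be polynomial in $n$ alone. The bit-size of every intermediate number must stay polynomial in the input size. Since $\discountfactor$ is a constant, the factor $\log\!\left(\frac{1-\discountfactor}{n^2}\right)/\log\discountfactor$ in Theorem~\ref{thm:rmc:iterations} is $\mathcal{O}(\log n)$. So the number of iterations is $T = \mathcal{O}(n^4 \log^2 n)$, which is independent of $\cost$, $\nominal$ and $\radius$.

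Next I bound the arithmetic cost of one iteration. Policy evaluation computes $\val^t = (\I - \discount \uncert^{\envpol^t})^{-1}\cost$, which takes $\mathcal{O}(n^3)$ operations by Gaussian elimination. Policy improvement calls Algorithm~\ref{algo:homotopy} once per state. Each call sorts at most $n$ successors and runs a two-pointer loop in which every step advances $hi$ or $lo$, so it costs $\mathcal{O}(n\log n)$. The termination test compares two policies in $\mathcal{O}(n^2)$. In total the algorithm uses $\mathcal{O}(T n^3) = \mathcal{O}(n^7\log^2 n)$ arithmetic operations.

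For the bit-size condition I use the structural description of homotopic policies. Every entry of $\uncert^{\envpol^t}(s)$ has one of the forms $\min(\nominal_{s,s'}+\radius(s),1)$, $\nominal_{s,s'}-\radius(s)$, $0$, or $1-\sum_{s''\neq s'} \pvector_{s''}$ for the single incomplete state. Each of these is a signed sum of at most $\mathcal{O}(n)$ input numbers with small integer coefficients, so its bit-size is polynomial in the input size, uniformly in $t$. Cramer's rule then shows that $\val^t$ has entries whose numerators and denominators are determinants of $n\times n$ matrices over such numbers (and $\discount$). By Hadamard's bound these have polynomial bit-size. Inside the homotopy routine, the quantities $b_{hi}$, $b_{lo}$, $t$ and $\pvector$ are likewise of the same signed-sum form, and the sort only compares entries of $\val^t$.

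The main subtlety is making sure bit-sizes do not accumulate across iterations. This is settled by the observation that each $\val^t$ is recomputed from scratch from $\uncert^{\envpol^t}$, never updated from $\val^{t-1}$. The policy entries themselves are drawn from a fixed, input-determined family of signed sums, so the bound holds uniformly over all iterations.
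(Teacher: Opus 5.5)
Your proposal is correct and follows the paper's own argument. The paper combines the polynomial iteration bound of Theorem~\ref{thm:rmc:iterations} (where $L=\mathcal{O}(\log n)$ for constant $\discountfactor$) with polynomial work per iteration, and gets uniform bit-size bounds because each $\val^t$ is recomputed from scratch from a transition matrix whose entries come from a fixed, input-determined family; you supply more detail than the paper does (explicit operation counts, the Cramer/Hadamard bound, and the intermediate quantities of the homotopy routine), and the only point left implicit in both is the standard fact that Gaussian elimination itself keeps its intermediate numbers polynomially sized (Edmonds/Bareiss).
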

% \begin{proof}
%     Theorem~\ref{thm:rmc:iterations} shows that the algorithm runs for $O(n^4 \log n \cdot L)$ iterations. In each iteration, it calls the homotopy method of Algorithm~\ref{algo:homotopy} at most $n$ times, which requires $O(n^2 \log n)$ arithmetic operations. Hence, the total runtime complexity of \texttt{RMC-PI} is $\mathcal{O}\left(n^6 \log^2 n \cdot L \right)$.
% \end{proof}

%% file: rmdp.tex
\section{Robust Markov Decision Processes} \label{sec:rmdp}

In this section, we analyze \texttt{RMDP-PI} (Algorithm~\ref{algo:RMDPPI}) and its convergence to the optimal policy. Let $\rmdp = (\states,\cost,\actions,\succ,\uncert)$ be the input RMDP where $|\states|=n$, $|\actions|=m$ and $\uncert$ is an $\linf$ uncertainty set. We follow a pipeline similar to our analysis of \texttt{RMC-PI} in Section~\ref{sec:rmc}: We first recall the Bellman operator and its properties (Lemma~\ref{lemma:contractive_value_iteration_rmdp}), which imply the exponential convergence rate of \texttt{RMDP-PI} (Lemma~\ref{lemma:consecutive_policy_iteration_bound_rmdp}). We then define a potential function and show a lower and an upper bound on the value error of policies based on their potentials (Lemma~\ref{lemma:lowerbound_potential_rmdp}). Finally, we use the obtained bounds to show that \texttt{RMDP-PI} terminates after a polynomial number of iterations (Theorem~\ref{thm:rmdp:iterations}). 
% the properties of RMDPs and the convergence of \texttt{RMDP-PI} (Algorithm~\ref{algo:RMDPPI}). 

The Bellman optimality operator $\Bellman$ for $\rmdp$ is defined as:
\[
    (\Bellman \valvector)_s = \costvector_s + \discountfactor \; \min_{a \in \actions} \; \max_{\pvector \in \uncertaintyset_{s,a}} {\pvector^\top \valvector}.
\]

We first establish that $\Bellman$ is a contraction mapping with a unique fixed point. This result is standard in the literature of robust dynamic programming (\cite{iyengar2005robust}). The full proof of Lemma \ref{lemma:contractive_value_iteration_rmdp} is available in Appendix \ref{app:lemma:contractive_value_iteration_rmdp}.

\begin{lemma} \label{lemma:contractive_value_iteration_rmdp} 
    The following statements hold:
    \begin{compactitem}
        \item For all $\vvector, \uvector \in \R^n$, $\infnorm{\Bellman \uvector - \Bellman \vvector} \leq \discountfactor \, \infnorm{\uvector - \vvector}$.
        \item For every agent policy $\agentpol$ with value vector $\valvector^\agentpol$, $\Bellman \valvector^\agentpol \preccurlyeq \valvector^\agentpol$.
        \item There exists a unique $\valopt\in \R^n$ such that $\Bellman\valopt= \valopt$.
    \end{compactitem}
\end{lemma}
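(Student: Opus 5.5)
We prove the three items of Lemma~\ref{lemma:contractive_value_iteration_rmdp} in order, following the RMC case (Lemma~\ref{lemma:contractive_value_iteration}). The only new ingredient is the outer $\min$ over actions.

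\textbf{Contraction.} Fix $s$ and write $g_{a}(\vvector) := \max_{\pvector \in \uncertaintyset_{s,a}} \pvector^\top \vvector$. For any $\pvector \in \Delta(\states)$ we have $|\pvector^\top \uvector - \pvector^\top \vvector| \le \infnorm{\uvector - \vvector}$. A maximum of functions that are each $1$-Lipschitz in $\infnorm{\cdot}$ is again $1$-Lipschitz, so $|g_a(\uvector) - g_a(\vvector)| \le \infnorm{\uvector-\vvector}$. The same holds for a minimum over the finite set $\actions$, hence $|\min_a g_a(\uvector) - \min_a g_a(\vvector)| \le \infnorm{\uvector - \vvector}$. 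Multiplying by $\discountfactor$ (the cost terms cancel) and taking the maximum over $s$ gives the first item. This step is routine.

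\textbf{Improvement inequality.} Let $\agentpol$ be an agent policy. By definition $\valvector^\agentpol = \valopt_{\rmdp^\agentpol}$ is the optimal value of the RMC $\rmdp^\agentpol$. By the third item of Lemma~\ref{lemma:contractive_value_iteration} applied to that RMC, $\valvector^\agentpol$ is the fixed point of the RMC Bellman operator. We must check that the RMC fixed point coincides with the optimal value over stationary environment policies; this follows from the convergence of \texttt{RMC-PI} in Lemma~\ref{lemma:consecutive_policy_iteration_bound}. Hence, for every $s$,
\[
\valvector^\agentpol_s = \costvector_s + \discountfactor \max_{\pvector \in \uncertaintyset_{s,\agentpol(s)}} \pvector^\top \valvector^\agentpol \ \ge\ \costvector_s + \discountfactor \min_{a\in\actions}\max_{\pvector \in \uncertaintyset_{s,a}} \pvector^\top \valvector^\agentpol = (\Bellman \valvector^\agentpol)_s ,
\]
since $\agentpol(s)$ is one of the candidates in the minimum. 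This is the second item. The one point needing care is exactly this identification of $\valvector^\agentpol$ with the RMC Bellman fixed point, which I take from the RMC lemmas.

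\textbf{Unique fixed point.} Since $\R^n$ with $\infnorm{\cdot}$ is complete and $\discountfactor<1$, the first item and the Banach fixed point theorem give a unique $\valopt$ with $\Bellman \valopt = \valopt$. Nothing else is needed. I expect no real obstacle here. The most delicate step is the second item, because it relies on the earlier RMC results rather than being self-contained.
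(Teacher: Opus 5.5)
Your proposal is correct and matches the paper's proof. For contraction, the paper fixes the minimizing action $a^*$ for $\vvector$ and the maximizer $\pvector'$ for $\uvector$ in $\uncertaintyset_{s,a^*}$, which is your Lipschitz argument unpacked; it also gets the second item by restricting the $\min$ to $\agentpol(s)$ and the third by Banach. The only difference is that the paper simply cites ``definition of $\valvector^\agentpol$'' for $\valvector^\agentpol_s = \costvector_s + \discountfactor \max_{\pvector \in \uncertaintyset_{s,\agentpol(s)}} \pvector^\top \valvector^\agentpol$, whereas you rightly flag that this needs the RMC optimal value to equal the RMC Bellman fixed point. That fact is obtained more cleanly by a direct argument than from the \texttt{RMC-PI} convergence lemma, whose proof tacitly uses it: each $\valvector^{\agentpol,\envpol}$ is a sub-solution, so iterating the monotone RMC operator bounds it by the fixed point, and the greedy environment policy for the fixed point attains it.
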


Next, we prove that the sequence of value functions generated by Policy Iteration is non-increasing and exponentially converging to the optimal value. This monotonicity ensures that the algorithm consistently improves the policy. The full proof of Lemma \ref{lemma:consecutive_policy_iteration_bound_rmdp} is available in Appendix \ref{app:lemma:consecutive_policy_iteration_bound_rmdp}.

\begin{lemma}\label{lemma:consecutive_policy_iteration_bound_rmdp}
    Let $\agentpol^{t}$ and $\agentpol^{t+1}$ be two consecutive policies in the sequence generated by \texttt{RMDP-PI} (Algorithm~\ref{algo:RMDPPI}) and let $\val^t$ be the value of $\agentpol^t$. Then the following statements hold for each $t$:
    \begin{compactitem}
        \item $\valvector^{t+1} \preccurlyeq  \valvector^{t}$
        % \item $\valvector^{t+1} \preccurlyeq \Bellman \valvector^t$
        \item $\infnorm{\valvector^t - \valvector^*} \leq \gamma^t \infnorm{\valvector^0 - \valvector^*}$
    \end{compactitem} 
\end{lemma}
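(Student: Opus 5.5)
The plan is to copy the RMC argument (Lemma~\ref{lemma:consecutive_policy_iteration_bound}), with the inequalities reversed because the agent minimizes. Both items follow from the three properties in Lemma~\ref{lemma:contractive_value_iteration_rmdp}.

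\textbf{Monotonicity.} In iteration $t$, \texttt{RMC-PI} returns the optimal environment response $\envpol^t$. Its value $\valvector^t$ is therefore the fixed point of the RMC Bellman operator $\Bellman^{\agentpol^t}$ of the induced RMC $\rmdp^{\agentpol^t}$. The improvement step picks, in every state $s$, the action $\agentpol^{t+1}(s)$ that attains the outer minimum in $\Bellman\valvector^t$, so $\Bellman^{\agentpol^{t+1}}\valvector^t = \Bellman\valvector^t \preccurlyeq \valvector^t$, where the last inequality is the second item of Lemma~\ref{lemma:contractive_value_iteration_rmdp}. The operator $\Bellman^{\agentpol^{t+1}}$ is monotone, since each coordinate is a maximum over $\pvector$ of $\cost(s)+\discountfactor\pvector^\top\valvector$ with $\pvector\ge 0$. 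It is also a $\discountfactor$-contraction, by the first item of Lemma~\ref{lemma:contractive_value_iteration} applied to $\rmdp^{\agentpol^{t+1}}$. Iterating it from $\valvector^t$ gives a non-increasing sequence that converges to its fixed point $\valvector^{t+1}$. Hence $\valvector^{t+1} \preccurlyeq \Bellman\valvector^t \preccurlyeq \valvector^t$, and we keep the intermediate inequality $\valvector^{t+1}\preccurlyeq \Bellman\valvector^t$ for the next step.

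\textbf{Exponential rate.} For every agent policy, $\valvector^* \preccurlyeq \valvector^{\agentpol}$, since $\valvector^*$ is the minimum over agent policies by \eqref{eq:value_definition_rmdp}. This identification of the fixed point of $\Bellman$ with the min-max value is the standard result of \cite{iyengar2005robust}. Combining with the previous step gives
\[
\valvector^* \preccurlyeq \valvector^{t+1} \preccurlyeq \Bellman\valvector^t .
\]
Subtracting $\valvector^*=\Bellman\valvector^*$ yields $0 \preccurlyeq \valvector^{t+1}-\valvector^* \preccurlyeq \Bellman\valvector^t-\Bellman\valvector^*$. Taking sup norms and applying the contraction property, $\infnorm{\valvector^{t+1}-\valvector^*}\le\discountfactor\infnorm{\valvector^t-\valvector^*}$, and induction on $t$ gives the claimed bound.

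The only delicate point is the monotonicity of $\Bellman^{\agentpol^{t+1}}$ and the chain $\valvector^{t+1}\preccurlyeq\Bellman\valvector^t$. This matters because the maximum over the uncertainty set is a convex rather than linear function of $\valvector$. Positivity of the admissible $\pvector$ still gives monotonicity, so I expect no real obstacle.
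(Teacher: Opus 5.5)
Your proof is correct. Your treatment of the second item is exactly the paper's: the sandwich $\valvector^*\preccurlyeq\valvector^{t+1}\preccurlyeq\Bellman\valvector^t$, then subtracting $\Bellman\valvector^*$, contracting, and inducting.

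The difference is in the first item. The paper fixes the environment's best response $\envpol^{t+1}$ to $\agentpol^{t+1}$, with matrix $\trans^{t+1}$, and linearizes: $\valvector^t-\valvector^{t+1}=(\identitymatrix-\discountfactor\trans^{t+1})^{-1}\bigl[\valvector^t-(\costvector+\discountfactor\trans^{t+1}\valvector^t)\bigr]$. The bracket is at least $\valvector^t-\Bellman\valvector^t\succcurlyeq\mathbf{0}$, because $\trans^{t+1}_s$ is feasible in $\uncertaintyset_{s,\agentpol^{t+1}(s)}$ and $\agentpol^{t+1}$ is greedy. The resolvent is entrywise nonnegative by the Neumann series. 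The paper then derives $\valvector^{t+1}\preccurlyeq\Bellman\valvector^t$ in a separate step, from $\Bellman\valvector^t-\valvector^{t+1}\succcurlyeq\discountfactor\trans^{t+1}(\valvector^t-\valvector^{t+1})$, and that step needs the first item already proved.

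You never select an environment policy for $\agentpol^{t+1}$. Instead you use monotonicity and contraction of the nonlinear operator $\Bellman^{\agentpol^{t+1}}$. Both $\valvector^{t+1}\preccurlyeq\Bellman\valvector^t$ and $\Bellman\valvector^t\preccurlyeq\valvector^t$ then come from a single non-increasing iteration, together rather than in sequence.

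What each route buys: the paper's reuses the matrix algebra of the RMC case and only needs positivity of one fixed linear operator. Yours is more operator-theoretic and a bit shorter, at the cost of invoking Banach convergence of the iterates. Both rely equally on $\valvector^t$ being the fixed point of $\Bellman^{\agentpol^t}$, i.e.\ on \texttt{RMC-PI} returning an optimal response.

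One small improvement: you need not cite \cite{iyengar2005robust} for $\valvector^*\preccurlyeq\valvector^{\agentpol}$ when $\valvector^*$ is taken to be the fixed point of $\Bellman$. Your own trick gives it: $\Bellman$ is monotone and $\Bellman\valvector^{\agentpol}\preccurlyeq\valvector^{\agentpol}$, so iterating $\Bellman$ from $\valvector^{\agentpol}$ yields a non-increasing sequence converging to $\valvector^*$.
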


We proceed by defining the potential function. Intuitively, the potential of an action $a$ in state $s$ tracks the additional cost incurred by deviating from the optimal policy $\agentpolopt$ and choosing $a$ at $s$ instead of $\agentpolopt(s)$. 

\begin{definition}[\textbf{Potential function}] \label{def:potential_function_rmdp}
    Given a state $s\in\States$ and an action $a \in \Actions$ the potential function $f(s,a)$ is defined as follows:
    $$
    f(s,a) := \max_{\pvector \in \uncertaintyset_{s,a}}{\pvector^\top \valvector^*} - \max_{\pvector \in \uncertaintyset_{s,\agentpol^*(s)}}{\pvector^\top \valvector^*}
    $$
\end{definition}

The function $f(s,a)$ is the cost-minimization analogue of the standard advantage function in MDPs, taken with respect to the optimal policy $\agentpolopt$.

Next, similar to Lemma~\ref{lemma:lowerbound_potential} and Lemma~\ref{lemma:upperbound_potential}, we establish a lower-bound and an upper-bound on the difference of a policy value vector and the optimal value in terms of the potential function. The full proof of Lemma \ref{lemma:lowerbound_potential_rmdp} is available in Appendix \ref{app:lemma:lowerbound_potential_rmdp}.

\begin{lemma} \phantomsection\label{lemma:lowerbound_potential_rmdp} 
    Let $\agentpol$ be an arbitrary policy. The following holds:
    \begin{compactitem}
        \item \textbf{(Lower-bound)} If $\agentpol(s) = a$ for some state $s \in \States$, then $\valvector^{\agentpol}_s - \valvector^{*}_s \geq \discountfactor f(s,a)$.
        \item \textbf{(Upper-bound)} Suppose $\hat{s} := \argmax_{s \in \States} f(s,\agentpol(s))$. Then, $\infnorm{\valvector^\agentpol - \valvector^*} \leq \frac{\discountfactor}{1 - \discountfactor} f(\hat{s},\agentpol(\hat{s}))$.
    \end{compactitem}
\end{lemma}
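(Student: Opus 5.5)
We prove the two bounds of Lemma~\ref{lemma:lowerbound_potential_rmdp} separately. Both rest on the identity $\valvector^\agentpol = \Bellman^\agentpol \valvector^\agentpol$, where $(\Bellman^\agentpol \uvector)_s = \costvector_s + \discountfactor \max_{\pvector \in \uncertaintyset_{s,\agentpol(s)}} \pvector^\top \uvector$ is the robust evaluation operator of the induced RMC. We also use that $\valopt$ is the fixed point of $\Bellman$ (Lemma~\ref{lemma:contractive_value_iteration_rmdp}) and that it is attained by $\agentpolopt$. Hence $\valopt_s = \costvector_s + \discountfactor \max_{\pvector \in \uncertaintyset_{s,\agentpolopt(s)}} \pvector^\top \valopt$.

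\textbf{Lower bound.} First we note that $\valvector^\agentpol \succcurlyeq \valopt$ componentwise, since $\valopt$ is the minimum over agent policies. Fix $s$ with $\agentpol(s)=a$. The map $\uvector \mapsto \max_{\pvector \in \uncertaintyset_{s,a}} \pvector^\top \uvector$ is monotone, because each $\pvector$ is nonnegative. Therefore
\[
\valvector^\agentpol_s = \costvector_s + \discountfactor \max_{\pvector \in \uncertaintyset_{s,a}} \pvector^\top \valvector^\agentpol \ge \costvector_s + \discountfactor \max_{\pvector \in \uncertaintyset_{s,a}} \pvector^\top \valopt .
\]
Subtracting the fixed-point expression for $\valopt_s$ leaves exactly $\discountfactor f(s,a)$.

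\textbf{Upper bound.} We compare $\valvector^\agentpol$ and $\valopt$ through the robust Bellman operator of the fixed policy. Write $\valvector^\agentpol - \valopt = \Bellman^\agentpol \valvector^\agentpol - \Bellman^\agentpol \valopt + (\Bellman^\agentpol \valopt - \valopt)$. The last term at coordinate $s$ equals $\discountfactor f(s,\agentpol(s))$, by the fixed-point expression for $\valopt$. That term is at most $\discountfactor f(\hat s,\agentpol(\hat s))$ by the choice of $\hat s$, and it is nonnegative because $\agentpolopt$ achieves the minimum over actions in the Bellman equation. For the first difference we use the standard fact that a max of linear functionals over probability vectors is $1$-Lipschitz in $\infnorm{\cdot}$. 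This gives $\infnorm{\Bellman^\agentpol \valvector^\agentpol - \Bellman^\agentpol \valopt} \le \discountfactor \infnorm{\valvector^\agentpol - \valopt}$. Taking norms yields
\[
\infnorm{\valvector^\agentpol - \valopt} \le \discountfactor \infnorm{\valvector^\agentpol - \valopt} + \discountfactor f(\hat s,\agentpol(\hat s)),
\]
and rearranging gives the claimed $\frac{\discountfactor}{1-\discountfactor} f(\hat s,\agentpol(\hat s))$.

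\textbf{Main obstacle.} The main point of care is the sign bookkeeping. We need $f \ge 0$, so that the max over coordinates equals the $\infnorm{\cdot}$ of the defect vector. We also need to justify that $\valopt$ equals $\Bellman^{\agentpolopt}\valopt$ with the $\argmin$ action $\agentpolopt(s)$, so that $f$ is measured against a best action; this is where the optimality of $\agentpolopt$ for $\Bellman$ enters. The rest is the routine contraction argument already used in Lemma~\ref{lemma:contractive_value_iteration_rmdp}.
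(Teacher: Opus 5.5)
Your proof is correct. The lower bound is the same argument as the paper's: the Bellman equation of the induced RMC at $s$, monotonicity of $\uvector \mapsto \max_{\pvector} \pvector^\top \uvector$, and $\valvector^\agentpol \succcurlyeq \valopt$.

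For the upper bound you take a different route. The paper fixes the environment's best response $\envpol$ to $\agentpol$ and writes the exact resolvent identity $\valvector^\agentpol - \valopt = \discountfactor (\identitymatrix - \discountfactor \trans^{\agentpol,\envpol})^{-1}(\trans^{\agentpol,\envpol} - \trans^*)\valopt$. It bounds $(\trans^{\agentpol,\envpol} - \trans^*)\valopt$ row by row by the vector $\mathbf{f}^\agentpol$ with entries $f(s,\agentpol(s))$. It then uses entrywise nonnegativity of the Neumann series, together with $\valvector^\agentpol \succcurlyeq \valopt$, to pass to norms. You instead treat $\valvector^\agentpol$ as the fixed point of the nonlinear $\discountfactor$-contraction $\Bellman^\agentpol$. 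You then read the bound off the one-step defect $\Bellman^\agentpol\valopt - \valopt = \discountfactor \mathbf{f}^\agentpol$, which is a standard fixed-point perturbation estimate.

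Your version is shorter. It needs neither the best response $\envpol$, nor any matrix inversion, nor sign information on $\valvector^\agentpol - \valopt$. It also states explicitly that $f \ge 0$, which follows from $\agentpolopt$ being greedy for $\valopt$. The paper uses this only tacitly, when it identifies $\infnorm{\mathbf{f}^\agentpol}$ with $\max_s f(s,\agentpol(s))$.

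The paper's route gives something in exchange: the finer componentwise sandwich $\mathbf{0} \preccurlyeq \valvector^\agentpol - \valopt \preccurlyeq \discountfactor(\identitymatrix - \discountfactor\trans^{\agentpol,\envpol})^{-1}\mathbf{f}^\agentpol$, a performance-difference-type statement. Only its norm consequence is used later, in Lemma~\ref{lemma:lower_upper_combination_rmdp}.
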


We now combine the lower and upper bounds to derive a relationship between the value errors of two different policies. Specifically, we show that if the action chosen at the state with maximum potential remains unchanged between two policies, the value error cannot decrease by more than a factor of $1-\gamma$.

% \begin{lemma} \label{lemma:lower_upper_combination_rmdp}
% Let $\agentpol'$ and $\agentpol$ be two arbitrary policies such that $\valvector^{\agentpol'} \geq \valvector^{\agentpol}$. Let $\hat{s} = \argmax_{s \in \States}{f(s,\agentpol'(s))}$. Also assume that $\agentpol(\hat{s}) = \agentpol'(\hat{s})$. Then we have that:
% $$
%     \infnorm{\valvector^\agentpol - \valvector^*} \geq (1- \discountfactor) \cdot \infnorm{\valvector^{\agentpol'} - \valvector^*}
% $$
% \end{lemma}

\begin{lemma} \label{lemma:lower_upper_combination_rmdp}
Let $\agentpol$ and $\agentpol'$ be two arbitrary policies. Let $\hat{s} = \argmax_{s \in \States}{f(s,\agentpol(s))}$. Also assume that $\agentpol'(\hat{s}) = \agentpol(\hat{s})$. Then we have that:
$$
    \infnorm{\valvector^{\agentpol'} - \valvector^*} \geq (1- \discountfactor) \cdot \infnorm{\valvector^{\agentpol} - \valvector^*}
$$
\end{lemma}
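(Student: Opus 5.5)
We chain the two halves of Lemma~\ref{lemma:lowerbound_potential_rmdp}, in the same way that Lemma~\ref{lemma:lower_upper_combination} was obtained for RMCs.

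First, we apply the upper bound to $\agentpol$. Since $\hat{s}$ maximizes $f(s,\agentpol(s))$, we get
\[
\infnorm{\valvector^{\agentpol} - \valvector^*} \le \frac{\discountfactor}{1-\discountfactor}\, f(\hat{s},\agentpol(\hat{s})).
\]
Second, we apply the lower bound to $\agentpol'$ at the state $\hat{s}$. Because $\agentpol'(\hat{s}) = \agentpol(\hat{s}) =: a$, it gives
\[
\valvector^{\agentpol'}_{\hat{s}} - \valvector^*_{\hat{s}} \ge \discountfactor f(\hat{s},a).
\]
Third, we pass to the norm:
\[
\infnorm{\valvector^{\agentpol'} - \valvector^*} \ge \valvector^{\agentpol'}_{\hat{s}} - \valvector^*_{\hat{s}} \ge \discountfactor f(\hat{s},a) \ge (1-\discountfactor)\,\infnorm{\valvector^{\agentpol} - \valvector^*},
\]
where the last inequality rearranges the first bound as $\discountfactor f(\hat{s},a) \ge (1-\discountfactor)\infnorm{\valvector^{\agentpol}-\valvector^*}$.

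The only point needing care is sign bookkeeping. We must check that $f(\hat{s},\agentpol(\hat{s}))\ge 0$, so that both bounds are meaningful and the chain does not silently rely on a negative quantity. This holds because $\agentpolopt(s)$ attains the minimum over actions in the Bellman equation for $\valvector^*$ (Lemma~\ref{lemma:contractive_value_iteration_rmdp}), so $f(s,a)\ge 0$ for every pair $(s,a)$. If $f(\hat{s},\agentpol(\hat{s}))=0$, the upper bound forces $\valvector^{\agentpol}=\valvector^*$ and the claim is trivial.

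We also need $\valvector^{\agentpol'}\succcurlyeq\valvector^*$ so that $\infnorm{\cdot}$ dominates the signed coordinate difference. This is not actually needed, since $|x|\ge x$ already suffices; the inequality nonetheless holds by optimality of $\valvector^*$. We expect no real obstacle here: the work lies in Lemma~\ref{lemma:lowerbound_potential_rmdp}, which we take as given.
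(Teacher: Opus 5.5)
Your proposal is correct and is exactly the paper's proof: you apply the lower bound of Lemma~\ref{lemma:lowerbound_potential_rmdp} to $\agentpol'$ at $\hat{s}$, use $\agentpol'(\hat{s}) = \agentpol(\hat{s})$ to swap the potential, and then rearrange the upper bound for $\agentpol$. Your extra check that $f \geq 0$ is harmless but not needed, since the chain of inequalities holds whatever the sign of $f(\hat{s},\agentpol(\hat{s}))$.
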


\begin{proof}
    $\infnorm{\valvector^{\agentpol'} - \valvector^*}  \geq (\valvector^{\agentpol'}_{\hat{s}} - \valvector^*_{\hat{s}})
         \geq \discountfactor f(\hat{s},\agentpol'(\hat{s})) %&& (\text{Based on Lemma \ref{lemma:lowerbound_potential_rmdp}}) \\
        = \discountfactor f(\hat{s},\agentpol(\hat{s}))% && (\text{Since } \agentpol'(\hat{s}) = \agentpol(\hat{s})) \\
        \geq (1 - \discountfactor) \infnorm{\valvector^{\agentpol} - \valvector^*}% && (\text{Based on Lemma \ref{lemma:upperbound_potential_rmdp}})
    $\\
    where the second and the last inequalities are due to the bounds in Lemma~\ref{lemma:lowerbound_potential_rmdp}.
\end{proof}

% \begin{proof}
%     \begin{align*}
%         \infnorm{\valvector^\agentpol - \valvector^*} & \geq (\valvector^\agentpol_{\hat{s}} - \valvector^*_{\hat{s}}) && (\text{By definition of } \ell_\infty \text{ norm}) \\
%         & \geq \discountfactor f(\hat{s},\agentpol(\hat{s})) && (\text{Based on Lemma \ref{lemma:lowerbound_potential_rmdp}}) \\
%         & = \discountfactor f(\hat{s},\agentpol'(\hat{s})) && (\text{Since } \agentpol(\hat{s}) = \agentpol'(\hat{s})) \\
%         & \geq (1 - \discountfactor) \infnorm{\valvector^{\agentpol'} - \valvector^*} && (\text{Based on Lemma \ref{lemma:upperbound_potential_rmdp}})
%     \end{align*}
%     This completes the proof.
% \end{proof}

We now utilize the exponential convergence rate established in Lemma \ref{lemma:consecutive_policy_iteration_bound_rmdp} to derive that an action with maximum potential at some iteration cannot persist for more than $L$ iterations. The full proof of Lemma \ref{lemma:action_elimination} is available in Appendix \ref{app:lemma:action_elimination}.

\begin{lemma} \phantomsection\label{lemma:action_elimination}
    Let $\agentpol^{0}, \ldots, \agentpol^{T}$ be the sequence of agent policies generated by \texttt{RMDP-PI}. Let $L = \log_{\discountfactor} \left( 1  -\discountfactor \right)$ and for each policy $\agentpol^l$, let $\hat{s} = \argmax_{s} f(s,\agentpol^l(s))$. Then for all $k > l + L$, it holds that $\agentpol^k(\hat{s}) \neq \agentpol^l(\hat{s})$.
\end{lemma}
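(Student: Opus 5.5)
The proof is by contradiction, combining Lemma~\ref{lemma:lower_upper_combination_rmdp} with the exponential convergence of Lemma~\ref{lemma:consecutive_policy_iteration_bound_rmdp}. Fix $l$ and $\hat{s} = \argmax_s f(s,\agentpol^l(s))$, and suppose that for some $k > l+L$ we have $\agentpol^k(\hat{s}) = \agentpol^l(\hat{s})$.

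First I would dispose of the degenerate case $\valvector^l = \valvector^*$. By the monotonicity in Lemma~\ref{lemma:consecutive_policy_iteration_bound_rmdp}, and since $\valvector^t \succcurlyeq \valvector^*$ for every $t$ ($\valvector^*$ is the minimal value), all later values are also optimal. The algorithm then terminates, so there is no $k>l+L$ in the generated sequence, or the claim is vacuous as intended. Hence we may assume $\infnorm{\valvector^l - \valvector^*} > 0$.

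Next, apply Lemma~\ref{lemma:lower_upper_combination_rmdp} with $\agentpol = \agentpol^l$ and $\agentpol' = \agentpol^k$. Its hypothesis $\agentpol'(\hat{s}) = \agentpol(\hat{s})$ is exactly our assumption, so
\[
\infnorm{\valvector^k - \valvector^*} \ge (1-\discountfactor)\infnorm{\valvector^l - \valvector^*}.
\]
On the other hand, the contraction bound must be restarted at iteration $l$. Since \texttt{RMDP-PI} is Markovian in its state (the next policy depends only on the current one), running it from $\agentpol^l$ reproduces the tail of the sequence. The second item of Lemma~\ref{lemma:consecutive_policy_iteration_bound_rmdp} applied to this shifted run gives
\[
\infnorm{\valvector^k - \valvector^*} \le \discountfactor^{k-l}\infnorm{\valvector^l - \valvector^*}.
\]
As $k-l > L = \log_\discountfactor(1-\discountfactor)$ and $\discountfactor<1$, we get $\discountfactor^{k-l} < 1-\discountfactor$. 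Combining the two displays yields $(1-\discountfactor)\infnorm{\valvector^l-\valvector^*} < (1-\discountfactor)\infnorm{\valvector^l-\valvector^*}$, which is a contradiction.

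The main obstacle is justifying the shifted contraction bound. I would either invoke the tie-breaking determinism of the argmin so that the restart argument is valid, or reprove it directly: policy improvement gives $\valvector^{t+1} \preccurlyeq \Bellman\valvector^t$, and $\Bellman$ is a $\discountfactor$-contraction with fixed point $\valvector^*$ (Lemma~\ref{lemma:contractive_value_iteration_rmdp}). Together with $\valvector^{t+1}\succcurlyeq\valvector^*$, this gives $\infnorm{\valvector^{t+1}-\valvector^*}\le\discountfactor\infnorm{\valvector^t-\valvector^*}$ step by step, which is exactly what is needed.
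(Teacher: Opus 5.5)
Your argument is correct and is essentially the paper's proof (contradiction via Lemma~\ref{lemma:lower_upper_combination_rmdp} with $\agentpol=\agentpol^l$, $\agentpol'=\agentpol^k$, played against $\discountfactor^{k-l}<1-\discountfactor$ after excluding $\valvector^l=\valvector^*$), and your justification of the restarted bound via the one-step contraction $\infnorm{\valvector^{t+1}-\valvector^*}\le\discountfactor\infnorm{\valvector^t-\valvector^*}$ is a bit more careful than the paper, which cites Lemma~\ref{lemma:consecutive_policy_iteration_bound_rmdp} only as stated from iteration $0$. The one soft spot is the degenerate case, and the paper makes the same slip: when $\valvector^l=\valvector^*$ the algorithm still emits $\agentpol^{l+1}$ (e.g.\ the terminating duplicate $\agentpol^{l+1}=\agentpol^l$), so for $L<1$, i.e.\ $\discountfactor<1/2$, this case is a genuine exception to the literal statement rather than a vacuous one, though it costs only an additive constant in Theorem~\ref{thm:rmdp:iterations}.
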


Finally, we aggregate the elimination steps to provide a polynomial upper bound on the total number of iterations. Since an action is eliminated from the set of potential maximizers every $L$ steps, the algorithm must terminate within $O(n \cdot m \cdot L)$ iterations for $n=|\states|$ and $m=|\actions|$. The full proof of Theorem \ref{thm:rmdp:iterations} is available in Appendix \ref{app:thm:rmdp:iterations}.

\begin{theorem} \phantomsection\label{thm:rmdp:iterations}
    \texttt{RMDP-PI} terminates with an optimal policy in $\mathcal{O}\left(n \cdot m \cdot \frac{\log{(1-\discountfactor)}}{\log{\discountfactor}}\right)$ iterations.
\end{theorem}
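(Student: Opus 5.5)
We follow Ye/Hansen--Miltersen--Zwick style action elimination, using Lemma~\ref{lemma:action_elimination} as the engine. The idea is to show that every state-action pair $(s,a)$ with $a \neq \agentpolopt(s)$, or more precisely every pair with $f(s,a)>0$, can be ``used'' by the algorithm only within a bounded window of iterations. After that window it never appears again in any later policy.

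\textbf{Step 1.} First we would handle the trivial case $f(s,a) \le 0$ (only $f=0$ occurs, by optimality of $\valopt$). Note that $f(s,\agentpol(s)) \ge 0$ always, since $\agentpolopt$ attains the minimum in the Bellman equation. If $\max_s f(s,\agentpol^l(s)) = 0$, then the upper bound of Lemma~\ref{lemma:lowerbound_potential_rmdp} gives $\val^{\agentpol^l}=\valopt$. At that point the greedy improvement stabilizes at the next iteration, so the run terminates. Hence at every non-terminal iteration $l$ the maximizer $\hat s_l$ has $f(\hat s_l,\agentpol^l(\hat s_l))>0$.

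\textbf{Step 2.} Next we would strengthen Lemma~\ref{lemma:action_elimination} to say that the pair $(\hat s_l,\agentpol^l(\hat s_l))$ never appears in any $\agentpol^k$ with $k>l+L$. This is exactly what the lemma states, for all $k>l+L$. Define $\mathrm{last}(s,a)$ to be the last iteration in which $\agentpol^k(s)=a$. The lemma then says $l \ge \mathrm{last}(\hat s_l,\agentpol^l(\hat s_l)) - L$.

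\textbf{Step 3 (counting).} Partition the iterations $0,\dots,T-1$ into consecutive blocks of length $L+1$. Pick the first iteration $l$ of each block and record the pair $e_l=(\hat s_l,\agentpol^l(\hat s_l))$. Suppose two blocks starting at $l<l'$ with $l'\ge l+L+1$ recorded the same pair. Then $\agentpol^{l'}(\hat s_l)=\agentpol^l(\hat s_l)$ would contradict Step 2. So all recorded pairs are distinct, and there are at most $nm$ of them. Therefore $T \le (nm+1)(L+1)$.

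Since $L=\log_\discountfactor(1-\discountfactor)=\log(1-\discountfactor)/\log\discountfactor$, this gives the stated $\mathcal O(nm\cdot \log(1-\discountfactor)/\log\discountfactor)$ bound. Optimality at termination follows from the standard argument: $\agentpol^t=\agentpol^{t-1}$ means $\val^{t-1}$ is a fixed point of $\Bellman$, and Lemma~\ref{lemma:contractive_value_iteration_rmdp} identifies it as $\valopt$.

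\textbf{Main obstacle.} The delicate point is the rigorous validity of Step 2, i.e.\ the proof of Lemma~\ref{lemma:action_elimination}, which we may assume. Its argument is that if the action persisted at $k>l+L$, then Lemma~\ref{lemma:lower_upper_combination_rmdp} gives $\infnorm{\val^k-\valopt}\ge(1-\discountfactor)\infnorm{\val^l-\valopt}$. Lemma~\ref{lemma:consecutive_policy_iteration_bound_rmdp}, restarted from $l$, gives $\infnorm{\val^k-\valopt}\le \discountfactor^{k-l}\infnorm{\val^l-\valopt}<(1-\discountfactor)\infnorm{\val^l-\valopt}$. These two bounds are contradictory unless $\val^l=\valopt$, which is excluded by Step 1.

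A second technical detail is the tie-breaking in termination. We would argue that once $\val^t=\valopt$, the greedy choice is fixed under a consistent tie-breaking rule, adding at most one more iteration.
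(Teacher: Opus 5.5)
Your proof is correct and follows essentially the paper's argument: both use Lemma~\ref{lemma:action_elimination} to pigeonhole the at most $nm$ critical pairs $(\hat s_l,\agentpol^l(\hat s_l))$. The paper observes that each pair can be critical only within a window of length about $L$, whereas you record the critical pair at the start of each block of length $L+1$, which is the same counting up to constants. Your explicit treatment of iterations with $\val^l=\valopt$ and of tie-breaking at termination is slightly more careful than the paper's, which simply asserts that the algorithm stops once $\val^l=\valopt$.
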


The policy evaluation step of \texttt{RMDP-PI} is strongly polynomial due to Corollary~\ref{corollary:rmc-strong-poly}. The policy improvement is also trivially strongly polynomial, implying the following corollary immediately:

\begin{corollary}
    \texttt{RMDP-PI} is strongly polynomial given an RMDP $\rmdp$ with $n$ states, $m$ actions and constant discount factor $\discountfactor$.
\end{corollary}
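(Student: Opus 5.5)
The plan is to combine the iteration bound of Theorem~\ref{thm:rmdp:iterations} with a polynomial bound on the work done inside each iteration of \texttt{RMDP-PI}. We then check the two conditions in the paper's definition of a strongly polynomial algorithm: polynomially many arithmetic operations, and polynomially bounded bit-size of every intermediate number.

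\textbf{Number of iterations.} Since $\discountfactor$ is a fixed constant, $\frac{\log(1-\discountfactor)}{\log\discountfactor}$ is a constant independent of the input. Hence Theorem~\ref{thm:rmdp:iterations} bounds the number of outer iterations by $\mathcal{O}(nm)$, with no dependence on $\costvector$, $\nominal$ or $\radius$.

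\textbf{Cost of one iteration.} Each iteration performs three steps.
\begin{compactitem}
\item \emph{Inner call.} It runs \texttt{RMC-PI} on the RMC $\rmdp^{\agentpol^t}$, which has $n$ states and the same constant $\discountfactor$. It is still an $\linf$ RMC, since its uncertainty sets are the $\uncert(s,\agentpol^t(s))$. By Corollary~\ref{corollary:rmc-strong-poly}, this call uses $\mathrm{poly}(n)$ arithmetic operations on numbers of polynomial bit-size.
\item \emph{Evaluation.} It solves the linear system $(\identitymatrix-\discountfactor\uncert^{\agentpol^t,\envpol^t})\val^t=\costvector$, which takes $\mathcal{O}(n^3)$ operations.
\item \emph{Improvement.} For each of the $n$ states and each of the $m$ actions, it runs Algorithm~\ref{algo:homotopy} once, costing $\mathcal{O}(n\log n)$ operations, and then takes a minimum over $m$ values. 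This totals $\mathcal{O}(n^2 m\log n)$ operations.
\end{compactitem}
Multiplying by $\mathcal{O}(nm)$ iterations gives a polynomial in $n$ and $m$ overall.

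\textbf{Bit-size.} This is the step needing the most care, but it mirrors the argument given before Definition~\ref{def:potential_function}. Every transition matrix the algorithm ever forms has rows produced by the homotopy algorithm. By the $\receiver/\donor/\zero/\incomplete$ structure, each entry is one of $\nominal_{s,a,s'}\pm\radius_{s,a}$, $0$, $1$, or $1$ minus a sum of such terms. Each of these has bit-size polynomial in the input encoding.

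Every $\val^t$ is recomputed from scratch by a single matrix inversion, using Cramer's rule or Gaussian elimination (e.g.\ Edmonds' bound). So its bit-size is polynomial in the input size and $\mathit{size}(\discountfactor)$, uniformly in $t$. No quantity is accumulated across iterations. The comparisons, sorting and minima in the improvement step do not increase bit-size beyond forming $\costvector_s+\discountfactor\pvector^\top\val^{t-1}$, which again has polynomial size.

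Hence both conditions of strong polynomiality hold, which proves the corollary.
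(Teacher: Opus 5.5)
Your proposal is correct and follows the same route as the paper. It combines the $\mathcal{O}(nm)$ iteration bound from Theorem~\ref{thm:rmdp:iterations} for constant $\discountfactor$, strongly polynomial evaluation via Corollary~\ref{corollary:rmc-strong-poly}, and a trivially polynomial improvement step, and your bit-size paragraph spells out the same from-scratch recomputation argument the paper gives for \texttt{RMC-PI} (with one harmless caveat, which the paper shares: the arbitrary initial environment policy in each inner call should be chosen with polynomial bit-size, e.g.\ the nominal transitions).
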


% \begin{remark}
%     The complexity result of Theorem~\ref{thm:rmdp:iterations} also applies to interval MDPs, a similar notion studied in the formal methods literature. RMDPs with $L_\infty$ uncertainty sets can be viewed as a special case of interval MDPs. Since the proof follows similarly with minor changes, details are provided in Appendix~\ref{app:interval}.
% \end{remark}

%% file: conclu.tex
\section{Conclusion}
In this paper, we studied discounted $(s,a)$-rectangular RMDPs with $\linf$ uncertainty sets, a model that captures classical MDPs and turn-based stochastic games. Our main contribution is to resolve a fundamental algorithmic open problem for this setting. We showed that a robust policy iteration algorithm terminates in strongly polynomial time when the discount factor is fixed. Several directions remain open and relevant. While the present work establishes the theoretical
foundations, it is an interesting future direction to complement the worst-case analysis with empirical studies of robust policy iteration on data-driven uncertainty sets, and to explore whether the potential-based arguments suggest improved practical variants or stopping criteria. Another related major question is whether policy iteration for MDPs admits polynomial or strongly polynomial bounds. \cite{DBLP:conf/icalp/Fearnley10} showed that one variant can take exponentially many iterations for discounted-sum MDPs, leaving open whether other variants run in polynomial time.

%% file: appendix.tex
\section{Reduction from Turn Based Stochastic Games to $\linf$ RMDPs} \label{app:stochastic-games}
A turn based stochastic game $G$ is defined as a tuple $(\states_1,\states_2,\states_r,\cost, \succ, \trans)$, where 
\begin{compactitem}
    \item $\states_1, \states_2$ and $\states_r$ are disjoint sets. $\states_1$ is a set of player-1 states, $\states_2$ is a set of player-2 states, and $\states_r$ is a set of randomized states. Let $\states = \states_1 \uplus \states_2 \uplus \states_r$
    \item $\cost \colon \states \to \R$ is a cost function
    % \item $\actions_1$ is a finite set of player-1 actions, $\actions_2$ is a finite set of player-2 actions 
    \item $\succ \colon \states_1 \cup \states_2 \to 2^\states$ specifies the set of possible successors from each $\states_1$ and $\states_2$ state. 
    \item $\trans \colon \states_r \to \Delta(\states)$ is the randomized transition function from $\states_r$ states. 
\end{compactitem}
The semantics are defined by player strategies as follows: a randomized strategy $\agentpol_i$ of player-$i$ is a function $\agentpol_i \colon \states_i \to \Delta(\states)$ where $\textit{support}(\pi_i(s)) \subseteq \succ(s)$. A run of $G$ is then an infinite sequence $\pi_0, \pi_1 \dots $ where if $\pi_j \in \states_i$ for some $i \in \{1,2\}$, then $\Pr[\pi_{j+1}|\pi_j] = \agentpol_i(\pi_{j})[\pi_{j+1}]$, and if $\pi_j \in \states_r$, then $\Pr[\pi_{j+1}|\pi_j] = \trans(\pi_j)[\pi_{j+1}]$. The cylinder construction of \cite{DBLP:books/daglib/0020348} synthesizes the full probability distribution over infinite sequences of states in $G$. For a discount factor $\discountfactor$, the discounted sum of costs over a run $\pi$ is then defined as
\[
c(\pi) = \sum_{j=0}^\infty \discountfactor^j \cost(\pi_j)
\]
The goal of player-1 is to minimize the expected cost of the induced run while player-2 wants to maximize it. Hence the value of the game is defined as 
\[
\val_G(s) = \min_{\agentpol_1} \max_{\agentpol_2} \E_{\pi\sim(\agentpol_1,\agentpol_2)}[c(\pi)]
\]

We show that any game $G=(\states^G_1,\states^G_2,\states^G_r,\cost^G, \succ^G, \trans^G)$ is equivalent to an RMDP $\rmdp=(\states^\rmdp,\cost^\rmdp,\actions^\rmdp,\succ^\rmdp,\uncert^\rmdp)$ where $\uncert^\rmdp$ is an $\linf$ uncertainty set specified by the nominal transition probabilities $\nominal^\rmdp(s,a)$ and the uncertainty radii $\radius(s,a)$ for each state-action pair $(s,a)$. The RMDP $\rmdp$ is constructed as follows:
\begin{compactitem}
    \item $\states^\rmdp = \states^G_1 \cup \states^G_2 \cup \states^G_r$
    \item $\cost^\rmdp = \cost^G$
    \item For each $s \in \states^G_1$ and $s' \in \succ^G(s)$, there is an action $a \in \actions^\rmdp$ where $\succ^\rmdp(s,a)=\{s'\}$, $\nominal^\rmdp(s,a)[s']=1$ and $\radius(s,a)=0$
    \item For each $s \in \states^G_r$, there is an action $a \in \actions^\rmdp$ where $\succ^\rmdp(s,a)=\textit{support}(\trans^G(s))$, $\nominal^\rmdp(s,a)[s']=\trans^G(s)[s']$ and $\radius^\rmdp(s,a)=0$. 
    \item For each $s \in \states^G_2$, there is an action $a \in \actions^\rmdp$ where $\succ^\rmdp(s,a)=\succ^G(s)$, $\nominal^\rmdp(s,a)$ is an arbitrary distribution in $\Delta(\succ^G(s))$, and $\radius(s,a)=1$. 
\end{compactitem}
We show that every strategy profile in $G$ has an equivalent policy profile in $\rmdp$ and vice versa. 

\begin{compactenum}
    \item Let $(\agentpol_1,\agentpol_2)$ be a strategy profile in $G$. We construct a policy profile $(\agentpol,\envpol)$ in $\rmdp$ that generates the same probability distribution over the set of infinite sequences of states in $\states^\rmdp$ simply because $\Pr_{(\agentpol_1,\agentpol_2)}(s)[s'] = \Pr_{(\agentpol,\envpol)}[s](s')$. Specifically, $\agentpol(s) = \agentpol_1(s)$ for all $s \in \states_1 \cup \states_r$. Note that for $s \in \states_2$, $|\actions^\rmdp(s)|=1$, hence $\agentpol(s)$ is defined in the trivial manner. For $\envpol$, the environment has a single choice in all $s \in \states_1 \cup \states_r$. For $s \in \states_2$, we let $\envpol(s)=\agentpol_2(s)$. This is specifically a valid choice for the environment because $\uncert^\rmdp(s) = \Delta(\succ(s))$. By construction, $\Pr_{(\agentpol_1,\agentpol_2)}(s)[s'] = \Pr_{(\agentpol,\envpol)}[s](s')$ for all states $s,s' \in \states^\rmdp$. 
    \item Let $(\agentpol,\envpol)$ be a policy profile in $\rmdp$. We construct $(\agentpol_1,\agentpol_2)$ to be a strategy profile in $G$ which induces the same transition function as $(\agentpol,\envpol)$. Given the construction of $\rmdp$, it is possible to put $\agentpol_1(s) = \agentpol(s)$ for all $s \in \states_1$. For $\agentpol_2$, let $s \in \states_2$ be arbitrary. We define $\agentpol_2(s)=\envpol(s,a)$ for the unique action $a$ defined at $s$ in $\actions^\rmdp$. Again, by construction, $\Pr_{(\agentpol_1,\agentpol_2)}(s)[s'] = \Pr_{(\agentpol,\envpol)}[s](s')$ for all states $s,s' \in \states^\rmdp$.
\end{compactenum}

\section{Proof of Lemma \ref{lemma:contractive_value_iteration}} \label{app:lemma:bellman}
\begin{proof}
    \begin{compactitem}
        \item  Fix $s \in \States$. Without loss of generality, we assume $(\Bellman \uvector)_s \geq (\Bellman \vvector)_s$. Let $\pvector'$ and $\pvector''$ denote the maximizers in $\uncertaintyset(s)$ for $\uvector$ and $\valvector$, respectively. We write the difference as follows:
    \begin{align*}
        (\Bellman\uvector)_s - (\Bellman\vvector)_s
        &= \discountfactor (\pvector' \uvector - \pvector'' \vvector) && \text{(Definition of $\Bellman$)} \\
        &\leq \discountfactor (\pvector' \uvector - \pvector' \vvector) && \text{(Optimality of $\pvector''$)} \\
        &= \discountfactor \pvector' (\uvector - \vvector) && \text{(Linearity)} \\
        &\leq \discountfactor \infnorm{\uvector - \vvector}. && \text{($\pvector'$ is a distribution)}
    \end{align*}
    Since this bound holds for an arbitrary state $s$, the result follows.
    \item For any state $s \in \States$, we have:
    \begin{align*}
        (\Bellman \valvector^{\envpol})_s 
        & = \costvector_s + \discountfactor \max_{\pvector \in \uncertaintyset(s)} \pvector^\top \valvector^{\envpol} && \text{(Definition of $\Bellman$)} \\
        & \geq \costvector_s + \discountfactor \uncert^{\envpol}_s \cdot \valvector^{\envpol} && \text{(Since $\uncert^{\envpol}_s \in \uncertaintyset(s)$)} \\
        & = \valvector^{\envpol} && \text{Definition of $\valvector^\envpol$}
    \end{align*}
    Since this inequality holds for all states, the result follows.
    \item Direct result of the Banach fixed-point theorem.
    \end{compactitem}
\end{proof}

\section{Proof of Lemma \ref{lemma:consecutive_policy_iteration_bound}} \label{app:lemma:convergence}
\begin{proof}
    \begin{compactitem}
        \item Let $\uncert^t$ denote the transition matrix corresponding to the policy in iteration $t$. We analyze the difference between the value vectors:
    \begin{align*}
        \valvector^{t+1} - \valvector^{t} 
        & = (\identitymatrix - \gamma \uncert^{t+1})^{-1} \costvector - \valvector^t && \text{(By definition of value)} \\
        & = (\identitymatrix - \gamma \uncert^{t+1})^{-1} (\identitymatrix - \gamma \uncert^{t+1}) \left[ (\identitymatrix - \gamma \uncert^{t+1})^{-1} \costvector - \valvector^t \right] && \text{(Multiplying by $\identitymatrix$)} \\
        & = (\identitymatrix - \gamma \uncert^{t+1})^{-1} \left[ \costvector - (\identitymatrix - \gamma \uncert^{t+1}) \valvector^t \right] && \text{(Rearranging)} \\
        & = (\identitymatrix - \gamma \uncert^{t+1})^{-1} \left[ (\costvector + \gamma \uncert^{t+1} \valvector^t) - \valvector^t \right] && \text{(Expanding inner terms)} \\
        & = (\identitymatrix - \gamma \uncert^{t+1})^{-1} \left[ \Bellman \valvector^t - \valvector^t \right] && \text{(Definition of $\Bellman$ and $\envpol^{t+1}$)}
    \end{align*}
    By Lemma \ref{lemma:contractive_value_iteration}, $\Bellman \valvector^t - \valvector^t \succcurlyeq \mathbf{0}$. We express the inverse matrix using the Neumann series:
    \[
        (\I - \discountfactor \uncert^{t+1})^{-1} = \sum_{i=0}^\infty (\discountfactor \uncert^{t+1})^i.
    \]
    Since $\discountfactor$ and all entries of $\uncert^{t+1}$ are non-negative, the sum consists of non-negative terms. Consequently, $(\I - \discountfactor \uncert^{t+1})^{-1}$ is component-wise non-negative. This implies $\valvector^{t+1} - \valvector^t \succcurlyeq \mathbf{0}$.
    \item First note that it can be easily shown that $\valvector^{t+1} \succcurlyeq \Bellman \valvector^{t}$ as follows:
    \begin{align*}
        \valvector^{t+1} - \Bellman \valvector^t 
        & = \valvector^{t+1} - (\costvector + \gamma \uncert^{t+1} \valvector^t) && \text{(Definition of $\Bellman$ and $\envpol^{t+1}$)} \\
        & = \valvector^{t+1} - \left( (\identitymatrix - \gamma \uncert^{t+1})\valvector^{t+1} + \gamma \uncert^{t+1} \valvector^t \right) && \text{(Replacing $\costvector$ using Equation \ref{Eq:RMCValue})} \\
        & = \valvector^{t+1} - (\valvector^{t+1} - \gamma \uncert^{t+1} \valvector^{t+1} + \gamma \uncert^{t+1} \valvector^t) && \text{(Expanding)} \\
        & = \gamma \uncert^{t+1} (\valvector^{t+1} - \valvector^t) && \text{(Simplifying)}
    \end{align*}
    Since $\discountfactor > 0$, $\uncert^{t+1} \geq 0$ entrywise, and by the first bullet point $\valvector^{t+1} - \valvector^t \succcurlyeq \mathbf{0}$, we conclude $\valvector^{t+1} \succcurlyeq \Bellman \valvector^{t}$.

    We now proceed by proving the second statement of the Lemma:

    We use induction on $t$. The base case $t=0$ is trivial. For $t > 0$:
    \begin{align*}
        \infnorm{\valvector^t - \valvector^*} 
        & \leq \infnorm{\Bellman \valvector^{t-1} - \valvector^*} && \text{(Since $\valvector^* \succcurlyeq \valvector^t \succcurlyeq \Bellman \valvector^{t-1}$ as above)} \\
        & = \infnorm{\Bellman \valvector^{t-1} - \Bellman \valvector^*} && \text{(Fixed point property by Lemma \ref{lemma:contractive_value_iteration})} \\
        & \leq \gamma \infnorm{\valvector^{t-1} - \valvector^*} && \text{(Contraction of $\Bellman$ by Lemma \ref{lemma:contractive_value_iteration})} \\
        & \leq \gamma^t \infnorm{\valvector^0 - \valvector^*} && \text{(By induction)}
    \end{align*}
    \end{compactitem}
\end{proof}

\section{Proof of Lemma \ref{lemma:lowerbound_potential}} \label{app:lemma:lowerbound_potential}
\begin{proof}
    \begin{align*}
        \valvector^*_s - \valvector^\envpol_s 
        &= \left( \costvector + \discountfactor \uncert^* \valvector^* \right)_s 
         - \left( \costvector + \gamma \uncert^\envpol \valvector^\envpol \right)_s 
        \\
        &\geq \left( \costvector + \discountfactor \uncert^* \valvector^* \right)_s 
         - \left( \costvector + \gamma \uncert^\envpol \valvector^* \right)_s 
        && \text{(Since $\valvector^* \succcurlyeq \valvector^\envpol$)} \\
        &= \gamma \left( \uncert^*_{s} - \uncert^\envpol_{s} \right) \valvector^*
        && \text{(Simplifying)}
    \end{align*}
    Let $\epsilon := \min(\uncert^*_{s,s'} - \uncert^\envpol_{s,s'}, \uncert^\envpol_{s,s''} - \uncert^*_{s,s''})$. By the symmetry of $f$ under $s',s''$ and the fact that $\epsilon \leq 0$ cannot produce the argmax, we can assume without loss of generality that $\epsilon \geq 0$. Now if $f_\envpol(s,s',s'') \leq 0$ the bound is trivial since $\valvector^*_s \geq \valvector^\envpol_s$. Now, consider perturbing $\uncert^\envpol_s$ by increasing $\uncert^\envpol_{s,s'}$ by $\epsilon$ and decreasing $\uncert^\envpol_{s,s''}$ by $\epsilon$. Let $\uncert^{\envpol'}_s$ denote the resulting vector. By construction $0 \le \epsilon \le \uncert^*_{s,s'} - \uncert^\tau_{s,s'}$, so $\uncert^\tau_{s,s'} + \epsilon \le \uncert^*_{s,s'} \le \hat{P}_{s,s'} + \delta_s$. Similarly $\uncert^\tau_{s,s''} - \epsilon \ge \uncert^*_{s,s''} \ge \hat{P}_{s,s''} - \delta_s$. The row sum is preserved. The perturbation satisfies:
    \[
         \uncert^{\envpol'}_s \valvector^* - \uncert^\envpol_s \valvector^* = \epsilon \left( \valvector^*_{s'} - \valvector^*_{s''} \right) = f_\envpol(s,s',s'') .
    \]
    We now derive the bound using the optimality of $\uncert^*$:
    \begin{align*}
        & \uncert^*_s \valvector^* \geq \uncert^{\envpol'}_s \valvector^* && \text{(Optimality of $\uncert^*$)} \\
        \implies \quad & \uncert^*_s \valvector^* - \uncert^\envpol_s \valvector^* \geq \uncert^{\envpol'}_s \valvector^* - \uncert^\envpol_s \valvector^* && \text{(Subtracting $\uncert^\envpol_s \valvector^*$)} \\
        \implies \quad & \left( \uncert^*_s - \uncert^\envpol_s \right) \valvector^* \geq \epsilon \left( \valvector^*_{s'} - \valvector^*_{s''} \right) 
        && \text{(Substitution)} \\
        \implies \quad & \left( \uncert^*_s - \uncert^\envpol_s \right) \valvector^* \geq f_\envpol(s,s',s'') 
        && \text{(Definition of $f_\envpol$)}
    \end{align*}
    Combining this result with the first inequality in the proof, we obtain:
    \[
        \valvector^*_s - \valvector^\envpol_s \geq \gamma f_\envpol(s,s',s'').
    \]
\end{proof}

\section{Proof of Lemma \ref{lemma:upperbound_potential}} \label{app:lemma:upperbound_potential}

\begin{proof}
    First, we relate the value difference to the transition difference using the Bellman equation.
    \begin{align*}
        \valvector^* - \valvector^\envpol
        &= (\identitymatrix - \discountfactor\uncert^*)^{-1} \costvector - (\identitymatrix - \discountfactor\uncert^\envpol)^{-1} \costvector 
        && \text{(By Equation~\ref{Eq:RMCValue})} \\
        &= (\identitymatrix - \discountfactor\uncert^\envpol)^{-1} \left[ (\identitymatrix - \discountfactor\uncert^\envpol) - (\identitymatrix - \discountfactor\uncert^*) \right] (\identitymatrix - \discountfactor\uncert^*)^{-1} \costvector
        && \text{(Using $A^{-1} - B^{-1} = B^{-1}(B - A)A^{-1}$)} \\
        &= (\identitymatrix - \discountfactor\uncert^\envpol)^{-1} \left[ \discountfactor (\uncert^* - \uncert^\envpol) \right] \valvector^*
        && \text{(Simplifying terms)}
    \end{align*}
    We now take the infinity norm of both sides:
    \begin{align*}
        \infnorm{\valvector^* - \valvector^\envpol} 
        &= \discountfactor \infnorm{(\identitymatrix - \discountfactor\uncert^\envpol)^{-1} (\uncert^* - \uncert^\envpol) \valvector^*} \\
        &\leq \discountfactor \infnorm{(\identitymatrix - \discountfactor\uncert^\envpol)^{-1}} \infnorm{(\uncert^* - \uncert^\envpol) \valvector^*} 
        && \text{(Submultiplicativity of $\linf$ norm)}
    \end{align*}
    We analyze the term $\infnorm{(\identitymatrix - \discountfactor\uncert^\envpol)^{-1}}$ using the Neumann series. Since $\uncert^\envpol$ is a transition matrix, it is row-stochastic (all rows sum to 1 and entries are non-negative). Consequently, its induced infinity norm is $\infnorm{\uncert^\envpol} = 1$. Furthermore, the product of stochastic matrices is stochastic, implying $\infnorm{(\uncert^\envpol)^t} = 1$ for all $t \geq 0$.
    \begin{align*}
        \infnorm{(\identitymatrix - \discountfactor\uncert^\envpol)^{-1}} 
        &= \infnorm{\sum_{t=0}^\infty (\discountfactor \uncert^\envpol)^t} 
        && \text{(Neumann Series expansion)} \\
        &\leq \sum_{t=0}^\infty \discountfactor^t \infnorm{(\uncert^\envpol)^t} 
        && \text{(Triangle inequality)} \\
        &= \sum_{t=0}^\infty \discountfactor^t \cdot 1
        && \text{(Since $(\uncert^\envpol)^t$ is stochastic)} \\
        &= \frac{1}{1 - \discountfactor} 
        && \text{(Geometric series sum)}
    \end{align*}
    Substituting this back into the previous inequality:
    \[
        \infnorm{\valvector^* - \valvector^\envpol} \leq \frac{\discountfactor}{1 - \discountfactor} \infnorm{(\uncert^* - \uncert^\envpol) \valvector^*}.
    \]

    It remains to bound the term $\infnorm{(\uncert^* - \uncert^\envpol) \valvector^*}$. We do this by interpreting the difference matrix as a collection of mass transfers.
    
    \noindent\textbf{Mass transfer interpretation.}  
    We express the difference $\uncert^* - \uncert^\envpol$ as a sequence of flows between coordinates. Define a tensor $m \in \mathbb{R}^{n \times n \times n}$ where $m(i,j,k)$ represents the probability mass moved to state $j$ (where $\envpol$ has a deficit) from state $k$ (where $\envpol$ has excess) in row $i$.
    Since both $\uncert^\envpol$ and $\uncert^*$ lie within the same convex $\linf$-ball, the vector connecting them ($\uncert^* - \uncert^\envpol$) can be decomposed into a finite sum of \emph{feasible} elementary redistributions. For $i \in \states$ Suppose $K_i \subseteq \states \times \states$ is such that 
    $\uncert^\envpol_i + \sum\limits_{(j,k) \in K_i} m(i,j,k)(e_j - e_k) = \uncert^*_i$, where $e_j$ is the $j$-th standard basis vector. We may further assume the decomposition satisfies $\valvector^*_j \geq \valvector^*_k$ whenever $m(i,j,k) > 0$: otherwise, by the optimality of $\uncert^*_i$, swapping such a pair back would yield a feasible distribution with strictly larger dot product against $\valvector^*$. This also gives $(\uncert^*_i - \uncert^\envpol_i)\valvector^* \geq 0$ in every row, so the absolute values below can be dropped.

    % \[
    % We choose $m(i,j,k)$ such that:
    %     m(i,j,k) \leq \max\!\left( \min\!\left( \uncert^\envpol_{i,j} - \uncert^*_{i,j}, \; \uncert^*_{i,k} - \uncert^\envpol_{i,k} \right), \; 0 \right).
    % \]
    % This construction ensures that we strictly account for the difference between the two transition matrices.

    \noindent\textbf{Bounding the total change.}  
    We analyze the row $i$ that maximizes the difference:
    \begin{align*}
        \infnorm{(\uncert^* - \uncert^\envpol) \valvector^*} 
        &= \max_{i} (\uncert^*_i - \uncert^\envpol_i) \valvector^*
        && \text{(Optimality of $\uncert^*$)} \\
        &= \max_{i} \sum_{(j,k) \in K_i} m(i,j,k) (\valvector^*_j - \valvector^*_k)
        && \text{(Decomposing into transfers)}
    \end{align*}
    Note that for any specific triplet $(i, j, k)$, the contribution to the sum $m(i,j,k) \left| \valvector^*_j - \valvector^*_k \right|$ is bounded by the potential function $f_\envpol(i, j,k)$ which is upper-bounded by $f_\envpol(s, s', s'')$:
    \begin{align*}
        \max_{i} \sum_{(j,k) \in K_i} m(i,j,k)  (\valvector^*_j - \valvector^*_k)  
        &\leq \max_{i} \sum_{(j,k) \in K_i} f_\envpol(i, j, k) 
        && \text{(Definition of $f_\envpol$)} \\
        &\leq \sum_{(j,k) \in K_i} f_\envpol(s, s', s'')  
        && \text{(Bounding by global max)} \\
        &\leq n^2 f_\envpol(s, s', s'') 
        && \text{(Summing over at most $n^2$ pairs)}
    \end{align*}
    Substituting this back into our earlier bound:
    \[
        \infnorm{\valvector^* - \valvector^\envpol} \leq \frac{\discountfactor}{1 - \discountfactor} \left( n^2 f_\envpol(s, s', s'') \right).
    \]
    This completes the proof.
\end{proof}

\section{Proof of Lemma \ref{lemma:lower_upper_combination}} \label{app:lemma:lower_upper_combination}

\begin{proof}
    We chain the upper bound of Lemma~\ref{lemma:upperbound_potential} (for $\envpol$) and the lower bound of Lemma~\ref{lemma:lowerbound_potential} (for $\envpol'$) to derive the result. First, we apply the upper bound to the policy $\envpol$:
    \begin{align*}
        \infnorm{\valvector^* - \valvector^\envpol} 
        & \leq \frac{n^2 \discountfactor}{1 - \discountfactor} f_\envpol(s,s',s'') 
        && \text{(By Lemma \ref{lemma:upperbound_potential})}\\
        % & = \frac{n^2 \discountfactor}{1 - \discountfactor} \min\left( \uncert^*_{s,s'} - \uncert^\envpol_{s,s'}, \; \uncert^\envpol_{s,s''} - \uncert^*_{s,s''} \right) (\valvector^*_{s'} - \valvector^*_{s''}) 
        % && \text{(By Definition of $f_\envpol$ (\ref{def:potential_function}) )}
        & \leq \frac{n^2 \discountfactor}{1 - \discountfactor} \cdot 2 f_{\envpol'}(s,s',s'') 
        && \text{(By definition of $f_{\envpol'}$ and assumption)}
    \end{align*}

    The last inequality is true because $\valvector^*_{s'} \geq \valvector^*_{s''}$ since we are considering the maximum potential. Finally, we link this to the value error of $\envpol'$ using the lower bound:
    \[
        \infnorm{\valvector^* - \valvector^\envpol} 
        \leq \frac{2 n^2}{1 - \discountfactor} \left( \discountfactor f_{\envpol'}(s,s',s'') \right) \leq \frac{2 n^2}{1 - \discountfactor} \left( \valvector^*_{s} - \valvector^{\envpol'}_{s} \right) \leq \frac{2 n^2}{1 - \discountfactor} \infnorm{\valvector^* - \valvector^{\envpol'}}  \\
    \]
    Where the second inequality comes from Lemma~\ref{lemma:lowerbound_potential} and the third one is due to the definition of $\linf$ norm. Rearranging the terms gives the claimed bound:
    $
        \infnorm{\valvector^* - \valvector^{\envpol'}} \geq \frac{1 - \discountfactor}{2n^2} \infnorm{\valvector^* - \valvector^\envpol}.
    $
\end{proof}

\section{Proof of Lemma \ref{lemma:subaction_not_repeat}} \label{app:lemma:subaction_not_repeat}
\begin{proof}
    Let $s,s',s''$ be the critical triplet defined in the lemma.  
    Assume, for the sake of contradiction, that there exists some step $l > t + L$ such that:
    \begin{equation*}
        \min\!\left( \uncert^*_{s,s'} - \uncert^{\envpol^{l}}_{s,s'}, \;
                     \uncert^{\envpol^{l}}_{s,s''} - \uncert^*_{s,s''} \right)
        > \frac{1}{2} 
        \min\!\left( \uncert^*_{s,s'} - \uncert^{\envpol^{t}}_{s,s'}, \;
                     \uncert^{\envpol^{t}}_{s,s''} - \uncert^*_{s,s''} \right).
    \end{equation*}

    This assumption allows us to invoke Lemma~\ref{lemma:lower_upper_combination}, providing a lower bound on the error at step $l$. Simultaneously, standard Policy Iteration convergence provides an upper bound. We combine these as follows:
    \begin{align*}
        \infnorm{\valvector^{\envpol^{l}} - \valvector^*} 
        &\leq \discountfactor^{\,l - t} \infnorm{\valvector^{\envpol^{t}} - \valvector^*}
        && \text{(By Lemma 
        \ref{lemma:policy_iteration_exponential_upperbound})} \\[0.5em]
        \infnorm{\valvector^{\envpol^{l}} - \valvector^*}
        &\geq \frac{1 - \discountfactor}{2n^2} \infnorm{\valvector^{\envpol^{t}} - \valvector^*}
        && \text{(By Lemma~\ref{lemma:lower_upper_combination} and assumption)}
    \end{align*}
    
    Combining these two inequalities implies:
    \begin{align*}
        \implies \quad
        \discountfactor^{\,l - t} &\geq \frac{1 - \discountfactor}{2n^2} \\[0.5em]
        \implies \quad
        l - t &\leq \log_{\discountfactor} \left( \frac{1 - \discountfactor}{2n^2} \right)
        && \text{(Taking $\log_\discountfactor$, inequality flips since $\discountfactor < 1$)}.
    \end{align*}

    This conclusion ($l - t \leq L$) directly contradicts the premise that $l > t + L$.  
    Thus, the assumption is false, and the difference must have decreased by at least half.
\end{proof}

\section{Proof of Lemma \ref{lemma:combinatorial_lemma}} \label{app:lemma:combinatorial_lemma}
\begin{proof}
    This result follows immediately from Theorem~\ref{thm: main1}. This theorem bounds the number of distinct scales (powers of 2) that can be generated by linear combinations with small integer coefficients. 
\end{proof}

    Let us begin with the definition. 
\begin{definition}
We call a \emph{dyadic interval} any interval either of the form $[2^i,2^{i+1})$ or of the form $[-2^{i+1},-2^i)$ for some integer $i$. $[0,0]$ is treated as a separate dyadic interval. 
\end{definition}
Observe that the collection of dyadic intervals offers a countable partition of $\mathbb{R}$. For $i \in \mathbb{Z}$, we say that a real number has degree $i$ if it belongs to the component $[2^i, 2^{i+1})$ or to the component $[-2^{i+1},-2^i)$.

For a finite subset $X \subseteq \mathbb{R}$ of the real numbers, and a positive real $C$ we denote by $A(X,C)$ the set of real numbers of the form 
$$\sum_{x \in X}f(x) \cdot x,
$$
where $f: X \to \mathbb{Z}$ is any function with $\infnorm{f} \leq C$. For a subset $Y$ of $\mathbb{R}$, we denote by
$$\text{Dyad}(Y),
$$
the collection of dyadic intervals containing elements of $Y$.
\begin{theorem} \phantomsection\label{thm: main1}
Let $C$ be in $\mathbb{R}_{>0}$. Then for all $X$ finite subset of $\mathbb{R}$ we have that 
$$|\textup{Dyad}(A(X,C))|=O_C(|X| \cdot \textup{log}(|X|)).
$$
\end{theorem}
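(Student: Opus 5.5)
The plan is to prove Theorem~\ref{thm: main1} by induction on $n=|X|$. The inductive step will peel off one element of $X$ at a time, and each peeling should introduce only $O_C(\log n)$ new dyadic intervals; summing over the $n$ steps gives $O_C(n\log n)$. Lemma~\ref{lemma:combinatorial_lemma} then follows by taking $C=c$ and passing to absolute values, since $\lfloor \log_2 |y|\rfloor$ is determined by the dyadic interval containing $y$.

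\textbf{Step 1: large sums.} Sort $X$ so that $|x_1|\ge |x_2|\ge\dots\ge|x_n|$, and put $M=|x_1|$. Every $s\in A(X,C)$ satisfies $|s|\le CnM$. Hence the sums with $|s|\ge M/(4Cn)$ occupy only $O(\log(Cn))$ dyadic intervals. The content of the theorem is therefore in the \emph{small} sums $|s|<M/(4Cn)$, which can only arise through cancellation. For example, $x_1=1$ and $x_2=1+\varepsilon$ give $x_2-x_1=\varepsilon$ at an arbitrarily small scale.

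\textbf{Step 2: isolate the cancellation.} For a small sum $s=\sum f(x)x$, group the sorted elements into clusters separated by ratio gaps larger than $4Cn$. Terms in clusters strictly below the top cluster that has a nonzero coefficient are negligible at that cluster's scale. So $s$ is either at the scale of a lower cluster, which is handled recursively, or it is a genuine cancellation $\sum_{x\in \text{cluster}} f(x)x$ that is tiny relative to that cluster.

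For such cancellations I would pick, among all combinations with bounded coefficients that involve the largest element $x_1$ of the cluster with a nonzero coefficient, one of minimal absolute value, say $z=\sum g(x)x$ with $g(x_1)\neq 0$. Any $s$ with $f(x_1)\ne 0$ can be rewritten as
\[
g(x_1)\, s \;=\; f(x_1)\, z+\sum_{x\neq x_1}\bigl(g(x_1)f(x)-f(x_1)g(x)\bigr)x .
\]
This expresses $g(x_1)s$ as a multiple of $z$ plus a sum over the smaller set $X'=(X\setminus\{x_1\})\cup\{z\}$. Since $|g(x_1)|\le C$, multiplying by it shifts scales by at most $O_C(1)$ intervals. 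I would then compare the scale of $s$ with the scales of $f(x_1)z$ and of the remaining sum.

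\textbf{Step 3 (the main obstacle): coefficient blow-up.} The difficulty is that the substitution inflates the coefficient bound from $C$ to about $2C^2$. Naively iterating it would lose control of $O_C$. I would first try to avoid iteration entirely. The idea is to reinterpret each bounded-coefficient sum as a point of the lattice $\mathbb{Z}^n$ mapped linearly to $\mathbb{R}$ by $f\mapsto\sum f(x)x$. One then shows that the set of dyadic scales attained by points of the box $\{\infnorm{f}\le C\}$ is controlled by a \emph{flag} of at most $n$ successive ``near-kernel'' sublattices. Each level of the flag contributes only the $O_C(\log n)$ scales coming from the size estimate in Step~1 applied within that level. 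Concretely, I would define the $k$-th level as the set of combinations whose value is below $M_k/(4Cn)^{k}$ for suitable thresholds $M_k$. I would then argue by a pigeonhole/Minkowski-type count that the rank drops by at least one between consecutive levels that are both populated. This yields at most $n$ populated levels, each costing $O_C(\log n)$ intervals, for a total of $O_C(n\log n)$.

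Making the rank-drop claim rigorous, with constants depending only on $C$, is the step I expect to require the most care.
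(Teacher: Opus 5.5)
\textbf{There is a genuine gap.} Your Step~3 carries all of the content, and it rests on a rank-drop claim that is false as stated. Step~2 does not supply an induction to fall back on. There the set $X'=(X\setminus\{x_1\})\cup\{z\}$ is not smaller than $X$, and the coefficients grow to $2C^2$. More fundamentally, the scales of $f(x_1)z$ and of the remainder do not determine the scale of $g(x_1)s$, because those two pieces can cancel, and that cancellation is exactly what the theorem must control.

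For Step~3, take an integer $C\ge 2$ and $X=\{1,C,\dots,C^{n-1}\}$. Consider $e_1$ and $v_j=(1-C)(e_1+\dots+e_{j-1})+e_j$ for $2\le j\le n$. These vectors have entries of absolute value at most $C$, each has value exactly $1$, and together they form a unitriangular matrix. So already at the smallest nonzero scale, the box points span a lattice of rank $n$. Meanwhile, base-$C$ expansion shows that $A(X,C)$ contains every integer of absolute value below $C^n$, so it meets about $n\log_2 C$ dyadic intervals. Any partition into levels carrying $O_C(\log n)$ scales each therefore has on the order of $n\log C/\log(Cn)\to\infty$ populated levels, and every nested level containing the value-$1$ vectors already has full rank $n$. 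So there is no rank drop at all.

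The obstruction is quantitative. A linear dependency among $n+1$ coefficient vectors in $\mathbb{Z}^n$ obtained by linear algebra (Cramer's rule) has height up to an $n\times n$ minor, which can be as large as $n^{n/2}C^n$. A polynomial gap such as $4Cn$ between scales therefore cannot force independence.

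\textbf{Missing idea.} Your instinct toward a pigeonhole or Minkowski count is the right family, but it should be aimed at producing a \emph{small-height relation among many separated values}, not a rank drop. If $y_1,\dots,y_N\in A(X,C)$, then $(y_1,\dots,y_N)$ is the image of $(x)_{x\in X}$ under an $N\times|X|$ integer matrix with entries bounded by $C$. Siegel's lemma then gives a nonzero $c\in\mathbb{Z}^N$ in its left kernel, so $\sum_i c_iy_i=0$, with $\infnorm{c}\le (NC)^{|X|/(N-|X|)}$. Conversely, if the degrees of the $y_i$ are pairwise separated enough relative to $\infnorm{c}$, the term with the largest index $i$ having $c_i\neq 0$ dominates the others, so no such relation can exist.

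The paper takes $N\approx|X|\log|X|$, which makes the height $O(C)$, so a constant degree spacing $D=D(C)$ suffices. A greedy selection from at least $100D|X|\log|X|$ dyadic intervals then gives the contradiction. Alternatively, taking $N=2|X|$ gives height $2|X|C$. This shows that any sequence in $A(X,C)$ with consecutive ratios at least $2|X|C+1$ has length at most $2|X|-1$. That is the correct version of your ``few levels of $O(\log n)$ scales each'' picture: $2n-1$ windows of $O(\log(nC))$ dyadic intervals each, rather than at most $n$ levels.

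Your Step~1 and your deduction of Lemma~\ref{lemma:combinatorial_lemma} from the theorem are fine.
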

Shortly after and independently, an effective strengthening of Theorem \ref{thm: main1} was discovered and established by a custom mathematics research agent, named \emph{Aletheia}, built upon Gemini Deep Think at Google DeepMind under the lead of Tony Feng, see \cite{Aletheia}.

\emph{Aletheia}'s theorem goes as follows. 
\begin{theorem} \label{thm: main2}
Let $C$ be in $\mathbb{R}_{>0}$. Then for all $X$ finite subset of $\mathbb{R}$ we have that    
$$|\textup{Dyad}(A(X,C))| \le 2 \cdot (2|X|-1)(\lfloor \log_2(2|X|C+1) \rfloor + 2)+1.$$
\end{theorem}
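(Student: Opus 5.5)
The plan is to reduce the statement to an \emph{anchor} lemma. We will find a set $\mathcal{A}(X)$ of at most $2|X|-1$ positive reals with the following property. For every nonzero $y \in A(X,C)$ there is an anchor $a \in \mathcal{A}(X)$ with
\[
\frac{a}{2|X|C+1} \;\lesssim\; |y| \;\le\; 2a .
\]
Given such an anchor set, each $a$ accounts for at most $\lfloor \log_2(2|X|C+1)\rfloor + 2$ dyadic scales of $|y|$. Multiplying by $2$ for the two signs (positive and negative dyadic intervals) and adding $1$ for $\{0\}$ gives exactly the bound of Theorem~\ref{thm: main2}. Lemma~\ref{lemma:combinatorial_lemma} then follows by taking $C=c$ and passing to absolute values, which is where the $O(|X|\log|X|)$ in Theorem~\ref{thm: main1} comes from.

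As preparation, we replace each $x$ by $|x|$, absorbing the sign into the coefficient function $f$. We discard zeros and sort the elements as $x_1 \ge x_2 \ge \dots \ge x_n > 0$. The natural candidate for $\mathcal{A}(X)$ has exactly $2n-1$ elements: the $n$ elements $x_i$ themselves, together with a family of $n-1$ ``gap'' quantities. Fix $y = \sum f_i x_i \neq 0$ and let $D=|y|$. The upper side is easy: if $i_0$ is the smallest index with $f_{i_0}\ne 0$, then $D \le nC\,x_{i_0}$. So some element is never much smaller than $D$, and the whole difficulty is the lower side. For the lower side we split the elements at threshold $\varepsilon = D/(2nC)$. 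Elements below $\varepsilon$ contribute less than $D/2$ in total, so they can be discarded at the cost of a factor $2$. The remaining elements are grouped into clusters by single linkage on consecutive gaps.

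If some element or gap already lies in $[\varepsilon, 2D]$, it is the required anchor. Otherwise every surviving element is much larger than $D$ while their combination has size about $D$, so there is heavy cancellation among the large elements. This forces a small integer relation among the cluster representatives, and that relation must itself be recorded as an anchor.

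The main obstacle is exactly this cancellation step. Consecutive differences alone do not suffice: for instance $3\cdot 1 - 2\cdot 1.49$ is tiny although neither element nor their gap is. So the $n-1$ secondary anchors must be chosen more cleverly than consecutive gaps. I would build them by induction on $n$, in a Euclid-like way. At each step, remove the largest element $x_1$. Among combinations in which $x_1$ appears, consider the smallest nonzero value of $|f_1x_1 + z|$ with $z \in A(X\setminus\{x_1\},C)$ that is not within the scale of the anchors already obtained for $X\setminus\{x_1\}$. Record it, together with $x_1$, as the two new anchors for this step. Verifying that a single such ``minimal cancellation'' value per step covers all remaining scales, while keeping the window width at $\log_2(2nC+1)$ rather than something larger, is the delicate part. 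If it fails, I would fall back to the weaker $O_C(n\log n)$ statement of Theorem~\ref{thm: main1}, which already suffices for Lemma~\ref{lemma:combinatorial_lemma}. I would obtain it by letting the clustering threshold absorb a factor $n$, at the cost of a larger logarithm per anchor.
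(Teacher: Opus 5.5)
There is a genuine gap, and it is the step you flag as ``delicate'': bounding how many scales cancellation can produce \emph{is} the theorem, and the proposal gives no argument for it.

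Your outer bookkeeping matches the paper's final count: absolute values, a factor $2$ for signs, $+1$ for $\{0\}$, and a cover of the nonzero $|y|$ by at most $2|X|-1$ windows. But the existence of such a cover is exactly what has to be proved. Several pieces of the plan do not deliver it.
\begin{itemize}
\item The explicit candidate (elements plus $n-1$ gaps) already fails on the side you call easy. The bound $D\le nCx_{i_0}$ only gives $x_{i_0}\ge D/(nC)$, whereas your window $|y|\le 2a$ needs an anchor $a\ge D/2$. Take $n$ nearly equal elements near $1$: then $A(X,C)$ reaches scale about $nC$, above every element and every gap.
\item The per-$y$ analysis (``this forces a small integer relation among the cluster representatives'') yields a relation that depends on $y$. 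Nothing bounds how many distinct scales these relations generate as $y$ varies.
\item The Euclid-like induction is unverified and doubtful. To show a second new cancellation scale is already covered by the anchors of $X\setminus\{x_1\}$, you would eliminate $x_1$ between $f_1x_1+z$ and $g_1x_1+w$. That gives $g_1z-f_1w\in A(X\setminus\{x_1\},2C^2)$, not $A(X\setminus\{x_1\},C)$. So the inductive hypothesis does not apply, and iterating inflates the coefficient bound.
\item The fallback to Theorem~\ref{thm: main1} by widening the clustering threshold has the same hole. Wider windows do not limit how many far-apart scales cancellation can reach.
\item Even granting anchors with $|y|\in[a/R,2a]$, a closed window of ratio $2R$ can meet $\lfloor\log_2R\rfloor+3$ dyadic intervals. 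So the stated constant would not come out exactly.
\end{itemize}

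The missing idea is a global counting argument over many elements of $A(X,C)$ at once, which the paper gets from Siegel's lemma. Let $k=|X|$ and $R=2kC+1$, and suppose $0<z_1<\dots<z_{2k}$ lie in $A(X,C)$ with $z_{i+1}\ge Rz_i$.
\begin{itemize}
\item Their coefficient vectors form a $k\times 2k$ integer matrix with entries at most $C$. Siegel gives a nonzero integer kernel vector $c$ with $\|c\|_\infty\le(2kC)^{k/k}=2kC$, hence $\sum_i c_iz_i=0$.
\item Let $J$ be the largest index with $c_J\neq0$; the case $J=1$ is immediate. For $J\ge2$,
\[
|c_J|Rz_{J-1}\le|c_J|z_J\le 2kC\sum_{i<J}z_i<2kC\,z_{J-1}\tfrac{R}{R-1},
\]
which forces $|c_J|<1$, a contradiction.
\item So every $R$-separated sequence in $A(X,C)$ has length at most $2k-1$.
\end{itemize}
The anchors are then chosen greedily from $A(X,C)$ itself, not from $X$. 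Set $s_1$ to be the least positive element and $s_{j+1}$ the least element $\ge Rs_j$. This stops after at most $2k-1$ steps, the positive part of $A(X,C)$ lies in $\bigcup_j[s_j,Rs_j)$, and each half-open window meets at most $\lfloor\log_2R\rfloor+2$ dyadic intervals. The argument never needs to locate where cancellation occurs, which is precisely what your local, per-$y$ approach cannot control.
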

The prompt that the last author of the present paper fed to \emph{Aletheia} was as follows:\\
\textbf{Let $C$ be a positive integer. For 
$X$ a finite set of real numbers, denote with 
$A(X,C)$ the set of real numbers obtained by taking integer linear combinations of elements of $X$, with coefficients bounded by $C$. Find an upper bound, which is polynomial in $|X|$, for the number of intervals of the form  $[2^n,2^{n+1})$, with $n$ nonnegative integer, that intersect an element of $A(X,C)$. If possible find an asymptotically sharp upper bound.}

At the center of both arguments to prove Theorem \ref{thm: main1} and Theorem \ref{thm: main2}, there is an application of Siegel's Lemma. However, this is implemented quite differently in the two proofs.

\paragraph{Overview of proof of Theorem \ref{thm: main1}}
Our idea to prove Theorem \ref{thm: main1} is inspired by a standard move in modern number theory, which is to examine a function field analogue first. In this case the dyadic subdivision corresponds simply to the degree partition of the set of polynomials \footnote{We shall not make the analogy entirely strict by making $X$ to be a subset of the completion of the function field with respect to the degree valuation, as the purpose of the function field analogy is to simply illustrate the proof in an idealized setting.}. The analogue for function fields of Theorem \ref{thm: main1} is the following simple proposition that we shall prove right away. The proof will provide us with a clear road map for the proof of Theorem \ref{thm: main1}. 
\begin{proposition} \label{prop: function fields}
Let $\mathbb{F}$ be a finite field and fix $m$ a nonnegative integer. Let $X$ be a subset of the polynomial ring $\mathbb{F}[T]$. Then one sees at most $(m + 1) \cdot |X|$ degrees as one runs in the set of non-zero polynomials of the form
$$\sum_{p \in X}f(p) \cdot p,
$$
where $\textup{deg}(f(p)) \leq m$ for each $p$ in $X$. 
\begin{proof}
The set described is the $\mathbb{F}$-vector space $V$ generated by the elements in $X \cdot \{1, \ldots, t^{m}\}$. Since it is $\mathbb{F}$-spanned by $(m + 1) \cdot |X|$ elements, this vector space $V$ has $\mathbb{F}$-dimension at most $(m + 1) \cdot |X|$. However, $(m + 1) \cdot |X| + 1$  non-zero polynomials of pairwise distinct degree always form, in particular, a linearly independent set. But then $V$ cannot contain a linearly independent subset exceeding its dimension. Thus, there are no such $(m + 1) \cdot |X| + 1$ non-zero polynomials to be seen in $V$, yielding the desired conclusion. 
\end{proof}
\end{proposition} 
With the proof of Proposition \ref{prop: function fields} in our hands, we can now briefly overview the main ideas of the proof of Theorem \ref{thm: main1}. To mimic the proof, we proceed as follows: \\
$(1)$ In the context of Theorem \ref{thm: main1}, the field $\mathbb{F}$ is replaced by $[-C_1,C_1] \cap \mathbb{Z}$, which does not have any standard algebraic structure. Nevertheless, we can mimic the notion of linear independence of $N$ real numbers: we say that $N$ reals are ``$C_1$-linearly independent" in case they don't have a non-trivial linear dependency using only integers of absolute value going up to $C_1$.  \\
$(2)$ We now observe that for any positive real $C_1$, there exists a positive real $D$ such that if we have any collection of non-zero real numbers whose degrees are pairwise spaced by $D$, then
the collection is $C_1$-linearly independent. This is a very simple fact, and it is Proposition \ref{prop: linear independence} below. \\
$(3)$ On the other hand, by construction, our $N$ reals in $A(X,C)$ are the image of an integer matrix of size at most $C$. Siegel's lemma from the geometry of numbers shows that they must satisfy a non-trivial linear dependency of size about (at most) $(CN)^{\frac{n}{N-n}}$, where $n:=|X|$. For $N$ of the size $n \cdot \text{log}(n)$, this bound on the size of the linear dependency is no more than $C \cdot C_0$, as $n$ runs over all positive integers, for $C_0$ some universal positive constant. \\
$(4)$ Let us now take $D$ as in part $(2)$ relative to $C_1:=C \cdot C_0$. If now we witness (say) at least $100 \cdot D \cdot |X| \cdot \text{log}(|X|)$ dyadic intervals in $A(X,C)$, then by a greedy search, we can find about at least $|X| \cdot \text{log}(|X|)$ non-zero values in $A(X,C)$ pairwise spaced by about $D$. Now by step $(1)$ they are $C \cdot C_0$-linearly independent. On the other hand, in $(3)$ we have witnessed a non-zero $C \cdot C_0$-linear dependency between these numbers, which is a contradiction. 

\paragraph{Overview of the proof of Theorem \ref{thm: main2}}
The following overview is directly extracted from \emph{Aletheia}'s output, which gives a remarkably clear high level view of the steps. Here is \emph{Aletheia}'s summary:\\
\emph{Method Sketch:
Let $k=|X|$. We aim to bound $N(X,C)$, the number of intervals $[2^n, 2^{n+1})$ ($n \ge 0$) intersecting $A(X,C)$. We focus on the set $Y = A(X,C) \cap [1, \infty)$.\\
The upper bound is derived using a Gap Principle established via Siegel's Lemma. We define $R=2kC+1$. We prove that any sequence in $Y$ where consecutive terms have a ratio of at least $R$ (an $R$-separated sequence) has length at most $2k-1$. This is shown by contradiction: assuming a sequence of length $2k$, Siegel's Lemma implies a non-trivial integer linear relation among the elements with coefficients bounded by $2kC$. However, the $R$-separation property with $R>2kC$ forbids such a relation.\\
We then cover $Y$ by a union of at most $2k-1$ intervals, each of the form $[s, Rs)$. The number of dyadic intervals intersecting such an interval is bounded by $\lfloor \log_2 R \rfloor + 2$. This yields the upper bound $N(X,C) \le (2k-1)(\lfloor \log_2(2kC+1) \rfloor + 2)$. This bound is $O(k(\log k + \log C))$.}

We remark that Siegel's lemma is used in a different regime in the two proofs of Theorem \ref{thm: main1} and Theorem \ref{thm: main2}, respectively. In the former it directly controls combinations of uniformly bounded size, while in the latter it allows combinations of size $|X|$.

\paragraph{Further remarks}
The main difference between Proposition \ref{prop: function fields} and Theorem \ref{thm: main1} is the presence of the term $\text{log}(|X|)$. This leads us to the following question.  \\
\textbf{Question:} Can the bound in Theorem \ref{thm: main1} be improved to $|\text{Dyad}(A(X))|=O(|X|)$? 

It would be interesting to establish an asymptotically sharp bound. On the direction of improving the upper bound, we remark that, shortly after learning the agent's proof, Naser Sardari communicated to the authors that he can improve Theorem \ref{thm: main2} with the upper bound
$$|\textup{Dyad}(A(X,C))| \le |X|\text{log}(|X|)+ 2 \cdot |X|\cdot \text{log}(C)+1,
$$
using a different method. This will appear in a forthcoming preprint of his. 
  
\paragraph{Proof of Theorem \ref{thm: main1}}
Let us formalize the notion of linear independence explained above. 
\begin{definition}
Let $N$ be a positive integer and $C_1$ a positive real number. We say that a collection $y_1, \ldots, y_N$ of real numbers is $C_1$-linearly independent in case 
$$(y_1, \ldots, y_{N}) \cdot \underline{x} \neq 0,
$$
for all non-zero vectors $\underline{x}$ in $\mathbb{Z}^{N}$ such that
$$\infnorm{\underline{x}} \leq C_1.$$ 
They are said $C_1$-linearly dependent if they are not $C_1$-linearly independent. 
\end{definition}
The following fact can be viewed as a criterion for ``linear independence" of a set of real numbers over a set of coefficients with bounded size: it is the analogue of the observation that over $\mathbb{F}[T]$ a set of non-zero polynomials of different degrees must be linearly independent. 
\begin{proposition} \label{prop: linear independence}
Let $C_1$ be a positive real. Then there exists a positive integer $D$ such that the following holds. Let $N \geq 2$ be an integer. Let $(y_1, \ldots, y_{N})$ in $\mathbb{R}^{N}$ such that $y_1 \neq 0$ and for each $1 \leq i <N$, we have that $\textup{deg}(y_{i+1}) > D + \textup{deg}(y_i)$.

Then we have that $y_1, \ldots, y_{N}$ is $C_1$-linear independent. 
\begin{proof}
Let $\underline{x}$ be in $\mathbb{Z}^N$ such that  
$$(y_1, \ldots, y_{N}) \cdot \underline{x}=0.
$$
It will suffice to show that $x_N=0$, and then proceed by induction on $N$, which will give that $x_i=0$ for each $i \geq 2$. At that point, it must follow that $x_1=0$, since $y_1 \neq 0$ by assumption. 

To this end, notice that, as soon as $D$ is sufficiently large, we have the preliminary bound
$$|y_{i+1}| > 2 \cdot |y_i|,
$$
for all $1 \leq i <N$. This implies in particular that
$$|y_{N-1}| > \sum_{1 \leq i \leq N-2} |y_i|.
$$
On the other hand, when $D$ is sufficiently large, we also have the bound
$$|y_N| > 2 \cdot C_1 \cdot |y_{N-1}|.
$$
Thus, we deduce that, if $x_N$ is non-zero, since it is an integer, it must be that $|x_N \cdot y_N| \geq |y_N|$ and therefore
$$|x_N||y_N| \geq |y_N| > 2C_1 \cdot |y_{N-1}| \geq  C_1 \cdot (|y_{N-1}|+ \ldots +|y_1|) \geq |\sum_{1 \leq i \leq N-1} x_iy_i|,
$$
which directly contradicts
$$(y_1, \ldots, y_{N}) \cdot \underline{x}=0.
$$
This contradiction stemmed from the assumption that $x_N \neq 0$, and hence it must be that $x_N=0$. Now we can proceed inductively on the rest of the coefficients and find that they are all $0$, where in the base case we use that $y_1 \neq 0$ by assumption, thus yielding that $\underline{x}=0$. 
\end{proof}
\end{proposition}
We need the following application of Siegel's lemma. 
\begin{proposition} \label{prop: Siegel}
There exists an absolute constant $C_0>0$ such that the following holds. For all positive integers $n$ and any real number $C$ at least $1$, and any matrix $A$ with coefficients in $\mathbb{Z}$ with $\infnorm{A} \leq C$, $n$ columns and $N$ rows, with $n \cdot \textup{log}(N) \geq N \geq n \cdot \textup{log}(n) - 1$, one has a non-zero vector $\underline{x} \in \mathbb{Z}^{N}$ in the left-kernel of $A$ with
$$\infnorm{\underline{x}} \leq C_0 \cdot C. 
$$
\begin{proof}
Siegel's lemma tells us that there is a vector $\underline{x}$ of norm at most 
$$(\infnorm{A}N)^{\frac{n}{N-n}} \leq (C \cdot (n \cdot  \text{log}(n)))^{\frac{1}{\text{log}(n) - 1 - \frac{1}{n}}} \leq C \cdot (n \cdot  \text{log}(n))^{\frac{1}{\text{log}(n) - 1 - \frac{1}{n}}}.
$$
The last quantity in the product converges to $e$ as $n$ goes to $\infty$, in particular, it stays bounded, yielding a $C_0$ as desired. 
\end{proof}
\end{proposition}
We are now in a position to prove the following. 

\begin{proof}[Proof of Theorem~\ref{thm: main1}]
Let $C_0$ be as in Proposition \ref{prop: Siegel}. Take now any choice of $D$ coming from Proposition \ref{prop: linear independence}, where we place $C_1:=C_0 \cdot C$. Suppose now that there is a non-empty set $X$ of positive reals with $|\text{Dyad}(A(X,C))| \geq 100 \cdot D \cdot |X| \cdot \text{log}(|X|)$. That means, by a greedy search, that we can arrange to have a positive integer $|X| \cdot \text{log}(|X|) \geq N \geq |X| \cdot \text{log}(|X|) - 1$ and a vector
$$(y_1, \ldots, y_{N}) \in A(X,C)^N,
$$
such that $y_1 \neq 0$ and for each $1 \leq i <N$, we have that $\textup{deg}(y_{i+1}) > D + \textup{deg}(y_i)$. In virtue of our choice of $D$ and of Proposition \ref{prop: linear independence}, we see that $\underline{y}$ cannot be orthogonal to any non-zero integer vector of $\infnorm{\cdot}$-norm at most $C_0 \cdot C$. On the other hand, by construction, $\underline{y}$ is in the image of an integer matrix $A$ having $|X|$ columns, $|X| \text{log}(|X|) \geq N \geq |X| \text{log}(|X|) -  1$ rows and $\infnorm{A} \leq C$. Hence, the element provided by Proposition \ref{prop: Siegel} in the left kernel is a non-zero integer vector of norm at most $C \cdot C_0$, which is orthogonal to anything in the image of $A$, since the kernel of the transpose equals the orthogonal to the image. This gives a contradiction. Therefore, it must be that
$$|\text{Dyad}(A(X, C))| \leq 100 \cdot D \cdot |X| \cdot \text{log}(|X|),
$$
which is the desired conclusion. 
\end{proof}

\paragraph{Proof of Theorem \ref{thm: main2}}
\emph{Aletheia} was prompted to search for elements in $A(X,C) \cap [1, \infty)$. Clearly whatever upper bound $B$ one finds for the number of dyadic intervals touched by $A(X,C) \cap [1,\infty)$, then provides a bound of the form $2B+1$ for $\text{Dyad}(A(X,C))$. 

The text that follows is directly extracted from the output of \emph{Aletheia}. 

We utilize Siegel's Lemma to establish a Gap Principle.

\begin{lemma}[Siegel's Lemma]\label{lem:siegel}
Let $B$ be an $M \times N$ matrix with integer entries, with $N>M>0$. Let $H \ge 1$ be an upper bound for the absolute values of the entries of $B$. Then there exists a non-zero integer vector $c \in \mathbb{Z}^N \setminus \{0\}$ such that $Bc=0$ and
$$ \|c\|_\infty \le (N H)^{M/(N-M)}.$$
\end{lemma}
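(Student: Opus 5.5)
We prove Lemma~\ref{lem:siegel} by the classical Thue--Siegel box (pigeonhole) principle. Fix a nonnegative integer $Z$, to be chosen later, and consider the $(Z+1)^N$ integer vectors $c \in \{0,1,\dots,Z\}^N$. We count the possible values of $Bc$, show that there are fewer of them than vectors $c$, and take the difference of two colliding vectors.

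\textbf{Counting the images.} Fix a row $i$ of $B$. Let $P_i$ be the sum of its positive entries and $-Q_i$ the sum of its negative entries. Since every entry has absolute value at most $H$, we have $P_i + Q_i \le NH$. For $c \in \{0,\dots,Z\}^N$, the coordinate $(Bc)_i$ is an integer in the interval $[-Q_i Z,\, P_i Z]$. This interval contains at most $(P_i+Q_i)Z + 1 \le NHZ + 1$ integers. Hence $Bc$ takes at most $(NHZ+1)^M$ distinct values as $c$ ranges over the box.

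\textbf{Choosing $Z$ and concluding.} Using $NH \ge 1$, we get $NHZ + 1 \le NH(Z+1)$. So a collision is guaranteed as soon as
\[
(Z+1)^N > (NH)^M (Z+1)^M,
\quad\text{i.e.}\quad
(Z+1)^{N-M} > (NH)^M .
\]
Since $N > M$, this is equivalent to $Z + 1 > (NH)^{M/(N-M)}$. We therefore choose $Z := \lfloor (NH)^{M/(N-M)} \rfloor$, which satisfies this strict inequality. By pigeonhole there are distinct $c' \neq c''$ in the box with $Bc' = Bc''$. Their difference $c := c' - c''$ is a nonzero integer vector with $Bc = 0$. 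Each of its coordinates lies in $[-Z, Z]$, so
\[
\|c\|_\infty \le Z \le (NH)^{M/(N-M)} .
\]

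The only delicate point is the counting step. Bounding the range of each coordinate by $2NHZ+1$ (symmetric in sign) would cost an extra factor of $2^M$ and break the clean exponent. Splitting each row into positive and negative parts avoids this and keeps the range length at $NHZ$. The rest is a routine inequality check.
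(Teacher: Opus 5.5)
Your pigeonhole argument is correct and complete. Splitting each row into positive and negative parts keeps the image count at $(NHZ+1)^M$, and the choice $Z=\lfloor (NH)^{M/(N-M)}\rfloor$ then forces a collision. The paper does not prove this lemma itself but cites Hindry--Silverman (Part D, Lemma 4.1), whose proof is this same Thue--Siegel box-principle argument, so your approach matches the intended one.
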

\begin{proof}
See, for example, M. Hindry and J. H. Silverman, Diophantine Geometry: An Introduction, Springer, 2000, Part D, Lemma 4.1.
\end{proof}

A sequence of positive real numbers $(z_i)$ is called $R$-separated if $z_{i+1}/z_i \ge R$ for all $i$.

\begin{lemma}[Gap Principle]\label{lem:gap}
Let $k \ge 1, C \ge 1$. Let $R = 2kC+1$. Any $R$-separated sequence of distinct positive elements in $A(X,C)$ has length $m \le 2k-1$.
\end{lemma}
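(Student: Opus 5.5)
Suppose, for contradiction, that $z_1 < z_2 < \dots < z_{2k}$ is an $R$-separated sequence of $2k$ positive elements of $A(X,C)$, where $k=|X|$. (If a longer sequence exists, we keep its first $2k$ terms.) Write $X=\{x_1,\dots,x_k\}$. By definition of $A(X,C)$, each $z_i=\sum_{j=1}^k b_{ji}x_j$ with integers $|b_{ji}|\le C$. Hence the row vector $(z_1,\dots,z_{2k})$ equals $(x_1,\dots,x_k)B$ for a $k\times 2k$ integer matrix $B$ with entries bounded by $H=C$.

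We apply Lemma~\ref{lem:siegel} with $M=k$ and $N=2k$, so $N>M>0$. It gives a non-zero integer vector $c\in\mathbb{Z}^{2k}$ with $Bc=0$ and $\|c\|_\infty\le (2kC)^{k/k}=2kC$. Then
$$\sum_{i=1}^{2k} c_i z_i=(x_1,\dots,x_k)Bc=0,$$
a non-trivial integer relation among the $z_i$ whose coefficients have absolute value at most $2kC=R-1$.

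Next we show that $R$-separation forbids such a relation, by a dominant-term argument in the spirit of Proposition~\ref{prop: linear independence}. Let $p$ be the largest index with $c_p\ne 0$. Then $|c_p z_p|\ge z_p$. On the other hand, $z_i\le R^{-(p-i)}z_p$ for $i<p$, so
$$\Bigl|\sum_{i<p}c_iz_i\Bigr|\le (R-1)z_p\sum_{j\ge1}R^{-j}=(R-1)z_p\cdot\frac{1}{R-1}=z_p .$$
This only gives $\le$, so strictness must come from somewhere else. The terms are positive and the geometric tail is truncated at $j=p-1$, so the sum is strictly less than $z_p$ when $p\ge 2$: it is at most $z_p(1-R^{-(p-1)})<z_p$. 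If $p=1$, the relation reads $c_1z_1=0$, which is impossible since $z_1>0$. In either case $\sum_i c_iz_i\neq 0$, contradicting the relation obtained from Siegel's lemma. Therefore the length of any such sequence is at most $2k-1$.

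The main point needing care is this last step: the choice $R=2kC+1$ is exactly calibrated so that the Siegel bound $2kC$ equals $R-1$, and the geometric tail then matches the leading term up to the finite truncation. So the strict inequality must be extracted from the truncation rather than from the bound on the coefficients, and the case $p=1$ must be handled separately using $z_1>0$. We also use $C\ge1$ so that $H\ge1$, as Lemma~\ref{lem:siegel} requires.
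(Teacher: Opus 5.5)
Your proof is correct and follows the same route as the paper. You apply Siegel's lemma with $M=k$, $N=2k$, $H=C$ to get a relation $\sum_i c_i z_i=0$ with $\|c\|_\infty\le 2kC=R-1$, and you rule it out at the largest index with $c_p\neq 0$ via the strict, truncated geometric tail bound. The only differences are cosmetic: the paper normalizes by $z_{J-1}$ to reach $|c_J|<1$ while you normalize by $z_p$, and you handle the case $p=1$ explicitly, which the paper's write-up passes over.
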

\begin{proof}
Let $C' = 2kC$. Note that $R=C'+1$. Suppose for contradiction that there exists an $R$-separated sequence $0 < z_1 < z_2 < \dots < z_m$ in $A(X,C)$ with $m=2k$.

For each $i=1, \dots, m$, $z_i \in A(X,C)$, so $z_i = \sum_{j=1}^k a_{ij} x_j$ with $a_{ij} \in \mathbb{Z}$ and $|a_{ij}| \le C$. Let $v_i=(a_{i1}, \dots, a_{ik})$.

Let $B$ be the $k \times m$ matrix whose columns are the vectors $v_i$. The entries of $B$ are bounded by $H=C$. We apply Lemma \ref{lem:siegel} with $M=k$ and $N=m=2k$. Since $N>M$, there exists a non-zero integer vector $c=(c_1, \dots, c_m) \in \mathbb{Z}^m$ such that $B c = 0$. The bound on the coefficients is
$$ \|c\|_\infty \le (N H)^{M/(N-M)} = (2kC)^{k/(2k-k)} = 2kC = C'. $$
The condition $B c = 0$ implies a linear relation among the elements $z_i$:
$$ \sum_{i=1}^m c_i z_i = \sum_{i=1}^m c_i \sum_{j=1}^k a_{ij} x_j = \sum_{j=1}^k x_j \left(\sum_{i=1}^m c_i a_{ij}\right) = 0. $$

Let $J$ be the largest index such that $c_J \neq 0$. Then $c_J z_J = - \sum_{i=1}^{J-1} c_i z_i$. Taking absolute values (and noting $z_i>0$):
$$ |c_J| z_J = \left| \sum_{i=1}^{J-1} c_i z_i \right| \le \sum_{i=1}^{J-1} |c_i| z_i \le C' \sum_{i=1}^{J-1} z_i. $$
Since the sequence $(z_i)$ is $R$-separated and increasing, $z_i \le R^{-(J-1-i)} z_{J-1}$ for $i \le J-1$.
$$ \sum_{i=1}^{J-1} z_i \le z_{J-1} \sum_{l=0}^{J-2} R^{-l} < z_{J-1} \sum_{l=0}^{\infty} R^{-l} = z_{J-1} \frac{R}{R-1}. $$
Also, by the separation property, $z_J \ge R z_{J-1}$. Combining these inequalities:
$$ |c_J| R z_{J-1} \le |c_J| z_J < C' z_{J-1} \frac{R}{R-1}. $$
Since $z_{J-1}>0$ and $R>0$, we can divide by $R z_{J-1}$:
$$ |c_J| < \frac{C'}{R-1}. $$
We chose $R=C'+1$, so $R-1=C'$.
$$ |c_J| < \frac{C'}{C'} = 1. $$
This contradicts the fact that $c_J$ is a non-zero integer. Therefore, the length of the sequence must be $m \le 2k-1$.
\end{proof}

\begin{lemma}\label{lem:interval_count}
Let $I=[a, b)$ be an interval with $1 \le a < b$. The number of dyadic intervals $[2^n, 2^{n+1})$ with $n \ge 0$ that intersect $I$, denoted by $D(I)$, is bounded by
$$D(I) \le \lfloor \log_2(b/a) \rfloor + 2.$$
\end{lemma}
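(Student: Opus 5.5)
The plan is to reduce the count to the integer exponents that a point of $I$ can have. A dyadic interval $[2^n,2^{n+1})$ with $n\ge 0$ meets $I=[a,b)$ exactly when it contains some $y$ with $a\le y<b$. The integer $n$ is then $\lfloor \log_2 y\rfloor$. So $D(I)$ equals the number of distinct values $\lfloor \log_2 y\rfloor$ for $y\in[a,b)$, and these values automatically satisfy $n\ge 0$ because $y\ge a\ge 1$.

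Next, bound the range of these exponents. The function $y\mapsto\lfloor\log_2 y\rfloor$ is nondecreasing. For $y\in[a,b)$ it therefore takes values in the integer interval
\[
\lfloor \log_2 a\rfloor \;\le\; n \;\le\; \lfloor \log_2 y\rfloor \;\le\; \lceil \log_2 b\rceil - 1,
\]
since $y<b$ forces $\log_2 y<\log_2 b$. The upper end can be written either as $\lceil\log_2 b\rceil-1$ or, more loosely, as $\lfloor \log_2 b\rfloor$. I will use the crude bound $n\le \lfloor \log_2 b\rfloor$. This gives
\[
D(I)\le \lfloor \log_2 b\rfloor-\lfloor \log_2 a\rfloor+1 .
\]

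The last step compares this difference of floors with $\lfloor\log_2(b/a)\rfloor$. For reals $u\ge v$ we have $\lfloor u\rfloor-\lfloor v\rfloor\le \lfloor u-v\rfloor+1$. To see this, write $u=\lfloor u\rfloor+\{u\}$ and $v=\lfloor v\rfloor+\{v\}$. Then $\lfloor u\rfloor-\lfloor v\rfloor=u-v-\{u\}+\{v\}<u-v+1$. Since the left side is an integer strictly less than $u-v+1$, it is at most $\lfloor u-v+1\rfloor=\lfloor u-v\rfloor+1$. Applying this with $u=\log_2 b$ and $v=\log_2 a$ yields $D(I)\le\lfloor\log_2(b/a)\rfloor+2$.

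The only delicate point is the off-by-one bookkeeping in the floor inequality. No real obstacle is expected, but I will double-check the boundary case where $b$ is a power of $2$; the crude bound $n\le\lfloor\log_2 b\rfloor$ over-counts by one there, which only helps the inequality.
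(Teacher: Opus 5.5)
Your proof is correct and follows essentially the same route as the paper: both reduce $D(I)$ to counting the integer exponents from $\lfloor \log_2 a\rfloor$ up to the top exponent, and then finish with floor/fractional-part arithmetic. The only difference is cosmetic. The paper uses the exact top exponent $\lceil \log_2 b\rceil - 1$ and splits $\log_2 a$ and $\log_2(b/a)$ into integer and fractional parts, whereas you use the looser $\lfloor \log_2 b\rfloor$ together with $\lfloor u\rfloor - \lfloor v\rfloor \le \lfloor u-v\rfloor + 1$, which gives the same final bound.
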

\begin{proof}
Since $a \ge 1$, the exponents $n$ must be non-negative. The smallest exponent $n_{min}$ is $\lfloor \log_2 a \rfloor$. The largest exponent $n_{max}$ must satisfy $2^{n_{max}} < b$, so $n_{max} = \lceil \log_2 b \rceil - 1$.
The count is $D(I) = n_{max}-n_{min}+1 = \lceil \log_2 b \rceil - \lfloor \log_2 a \rfloor$.
Let $R=b/a$. Let $r = \log_2 R$ and $x = \log_2 a$. $D(I) = \lceil r+x \rceil - \lfloor x \rfloor$.
Let $I_r = \lfloor r \rfloor, f_r = \{r\}$, and $I_x = \lfloor x \rfloor, f_x = \{x\}$.
$D(I) = \lceil I_r+f_r+I_x+f_x \rceil - I_x = I_r + \lceil f_r+f_x \rceil$.
Since $0 \le f_r, f_x < 1$, we have $0 \le f_r+f_x < 2$. Thus $\lceil f_r+f_x \rceil \in \{0, 1, 2\}$.
Therefore, $D(I) \le I_r + 2 = \lfloor \log_2 R \rfloor + 2$.
\end{proof}

We now prove the upper bound in Theorem \ref{thm: main2}.

\begin{proof}[Proof of the Upper Bound]
Let $Y = A(X,C) \cap [1, \infty)$. If $Y$ is empty, $N(X,C)=0$. Assume $Y$ is non-empty. $Y$ is finite.
Let $R = 2kC+1$. We construct a maximal $R$-separated subsequence of $Y$ greedily.
Let $s_1 = \min Y$. Since $Y \subset [1, \infty)$, $s_1 \ge 1$.
For $j \ge 1$, if $s_j$ is defined, let $Y_j = \{y \in Y : y \ge R s_j\}$. If $Y_j$ is non-empty, define $s_{j+1} = \min Y_j$. Otherwise, the sequence terminates.
Let the sequence be $S = \{s_1, \dots, s_m\}$. By construction, $s_{j+1} \ge R s_j$. By Lemma \ref{lem:gap}, $m \le 2k-1$.

We show that $Y$ is covered by the union of the intervals $J_j = [s_j, R s_j)$ for $j=1, \dots, m$.
Let $y \in Y$. Since $s_1=\min Y$, $y \ge s_1$. Let $J$ be the largest index such that $s_J \le y$.
If $J=m$. The sequence terminated because $Y_m$ is empty. Thus, any $y' \in Y$ with $y' \ge s_m$ must satisfy $y' < R s_m$. So $y \in J_m$.
If $J<m$. Then $s_{J+1}$ exists, and $y < s_{J+1}$ by the maximality of $J$. If we had $y \ge R s_J$, then $y \in Y_J$. By definition, $s_{J+1} = \min Y_J \le y$, a contradiction. Thus $y < R s_J$. So $y \in J_J$.
Therefore, $Y \subset \bigcup_{j=1}^m J_j$.

The total number of dyadic intervals intersecting $Y$ is bounded by the sum of the counts for each $J_j$. By Lemma \ref{lem:interval_count}, since $s_j \ge 1$ and the ratio for $J_j$ is $R$, the number of dyadic intervals intersecting $J_j$ is at most $\lfloor \log_2 R \rfloor + 2$.
$$ N(X,C) \le \sum_{j=1}^m D(J_j) \le m (\lfloor \log_2 R \rfloor + 2) \le (2k-1) (\lfloor \log_2(2kC+1) \rfloor + 2). $$
\end{proof}

This upper bound is $O(k \log(kC)) = O(k(\log k + \log C))$.

\section{Proof of Theorem \ref{thm:rmc:iterations}} \label{app:thm:rmc:iterations}
\begin{proof}
    Let $\envpol^{0}, \envpol^{1}, \ldots, \envpol^{T}$ be the sequence of policies generated by the algorithm.
    
    \noindent\textbf{1. Finite Discrepancy Set.}
    For a state $s$ with nominal transition vector $\nominal_{s}$ and uncertainty radius $\radius_s$, the realized transition probability $P=\uncert^\envpol_{s}$ in any policy $\envpol$ results from redistributing mass under the $\linf$ constraint, according to Algorithm \ref{algo:homotopy}. Given the properties of such policies, discussed in Section~\ref{sec:algo}, we define $X_s$ as follows so that $\uncert^\envpol_{s,s'} \in A(X_s)$ for all $s,s' \in \states$:

    \[
    X_s = \{\nominal_{s,s'} \;\mid\; s' \in \States\} \;\cup\; \{1\} \; \cup \; \left\{ \radius(s), 2\cdot\radius(s),\cdots,n\cdot\radius(s)\right\}.
    \]
    Given the properties of homotopic policies generated by Algorithm~\ref{algo:homotopy}, it can be verified that all possible values of $\uncert^*_{s,s'} - \uncert^{\envpol}_{s,s'}$ and $\uncert^\envpol_{s,s'} - \uncert^{*}_{s,s'}$ are included in $A(X_s)$ when we set the constant $c=2$. Note that $|X_s| =2n+1$. Lemma~\ref{lemma:combinatorial_lemma} implies that $A(X_s)$ has polynomially many different MSBs:
    \[
        \mathrm{Deg}(A(X_s)) \in \mathcal{O}(|X_s| \log |X_s|) = \mathcal{O}(n \log n).
    \]
    So, $\uncert^*_{s,s'} - \uncert^{\envpol}_{s,s'}$ and $\uncert^\envpol_{s,s'} - \uncert^{*}_{s,s'}$ also can take polynomially many MSBs.
    
    \noindent\textbf{2. Convergence Rate.}
    Let 
    \[
    f_t(s,s',s'') = \min\left( \uncert^*_{s,s'} - \uncert^{\envpol^{t}}_{s,s'}, \;
                     \uncert^{\envpol^{t}}_{s,s''} - \uncert^*_{s,s''} \right) (\valvector^*_{s'} - \valvector^*_{s''})
    \]
    be the potential value at iteration $t$. Let $(s_t,s'_t,s''_t) = \argmax_{s,s',s''} f_{\envpol^{t}}(s,s',s'')$ be the maximizing triple. Lemma~\ref{lemma:subaction_not_repeat} states that for $L=\log_\discountfactor\left(\frac{1 - \discountfactor}{2n^2}\right)$, the mass transfer quantity associated with this triple decreases by a factor of at least 2 after $L$ steps. In terms of logarithmic degree, the most significant bit of this quantity decreases by at least 1 every $L$ steps.

    \textbf{3. Total Complexity.}
    There are at most $n^3$ distinct triples $(s,s',s'')$. For any specific triple, the mass transfer value can take at most $\mathcal{O}(n \log n)$ different logarithmic scales (from Step 1) before it vanishes or the triple is no longer active. Since each scale reduction requires at most $L$ steps, we bound the total number of steps by:
    \[
         \text{Total Iterations} \leq n^3 \times \mathcal{O}(n \log n) \times L = \mathcal{O}(n^4 \log n \cdot L).
    \]

    We substitute the definition of $L$ and omit constant factors to obtain:
    \[
        \mathcal{O}\left(n^4 \log n \cdot L\right)
        = \mathcal{O}\left(n^4 \log n \cdot \frac{\log\left(\frac{1 - \discountfactor}{n^2}\right)}{\log{\discountfactor}} \right).
    \]
\end{proof}

\section{Proof of Lemma \ref{lemma:contractive_value_iteration_rmdp}} \label{app:lemma:contractive_value_iteration_rmdp}
\begin{proof}
    \begin{compactitem}
        \item To prove contraction, fix $s \in \States$. Let $a^* = \argmin_{a \in \Actions}\;{\max_{\pvector \in \uncertaintyset_{s,a}}{\pvector^\top \valvector}}$ and $\pvector' = \argmax_{\pvector \in \uncertaintyset_{s,a^*} } \pvector^\top \uvector$. Then:
    \begin{align*}
        (\Bellman\uvector)_s - (\Bellman\valvector)_s
        &= \discountfactor \left( \min_{a \in \Actions} \max_{\pvector \in \uncertaintyset_{s,a}} \pvector^\top \uvector - \min_{a \in \Actions} \max_{\pvector \in \uncertaintyset_{s,a}} \pvector^\top \valvector \right) & (\text{By Definition}) \\
        &\leq \discountfactor \left( \max_{\pvector \in \uncertaintyset_{s,a^*}} \pvector^\top \uvector - \max_{\pvector \in \uncertaintyset_{s,a^*}} \pvector^\top \valvector \right) & (\text{By definition of } a^*) \\
        &\leq \discountfactor \pvector'^\top( \uvector - \valvector) & (\text{By definition of } \pvector') \\
        &\leq \discountfactor \infnorm{\uvector - \valvector} & (\text{Since } \pvector' \text{ is a distribution})
    \end{align*}
    Symmetry yields the absolute value bound. 
    \item \begin{align*}
        (\Bellman \valvector^{\agentpol})_s
        & = \costvector_s + \discountfactor \min_{a \in \Actions} \max_{p \in \uncertaintyset_{s,a}} p^\top \valvector^{\agentpol} & (\text{Definition of } \Bellman) \\
        & \leq \costvector_s + \discountfactor \max_{p \in \uncertaintyset_{s,\agentpol(s)}} p^\top \valvector^{\agentpol} & (\text{Restricting min to } \agentpol(s)) \\
        & = \valvector^{\agentpol}_s & (\text{Definition of } \valvector^{\agentpol})
    \end{align*}
    \item The fixed-point property follows directly from the Banach fixed-point theorem.
    \end{compactitem}
\end{proof}

\section{Proof of Lemma \ref{lemma:consecutive_policy_iteration_bound_rmdp}} \label{app:lemma:consecutive_policy_iteration_bound_rmdp}
\begin{proof}
    \begin{compactitem}
        \item \begin{align*}
        \valvector^{t} - \valvector^{t + 1}
        & = \valvector^t - (\identitymatrix - \gamma \trans^{t+1})^{-1} \costvector & (\text{By Equation \ref{eq:value_definition_rmdp}}) \\
        & = (\identitymatrix - \gamma \trans^{t+1})^{-1} (\identitymatrix - \gamma \trans^{t+1}) \left[ \valvector^t - (\identitymatrix - \gamma \trans^{t+1})^{-1} \costvector \right] & (\text{Multiplying by } \identitymatrix) \\
        & = (\identitymatrix - \gamma \trans^{t+1})^{-1} \left[ (\identitymatrix - \gamma \trans^{t+1}) \valvector^t - \costvector \right] & (\text{Expanding}) \\
        & = (\identitymatrix - \gamma \trans^{t+1})^{-1} \left[ \valvector^t - (\costvector + \gamma \trans^{t+1} \valvector^t) \right] & (\text{Rearranging terms}) \\
        & \succcurlyeq (\identitymatrix - \gamma \trans^{t+1})^{-1} \left[ \valvector^t - \Bellman \valvector^t \right] & (\text{By Definition of $\Bellman$})
    \end{align*}
    By Lemma \ref{lemma:contractive_value_iteration_rmdp}, $\valvector^t -\Bellman \valvector^t \succcurlyeq \mathbf{0}$. Since $(\identitymatrix - \gamma \trans^{t+1})^{-1} \succcurlyeq \mathbf{0}$, the result follows.
    \item We first observe that $\costvector + \gamma \trans^{t+1} \valvector^t \preccurlyeq \Bellman \valvector^t$: for each state $s$, since $\trans^{t+1}_s \in \mathcal{P}(s, \agentpol^{t+1}(s))$ is one feasible point,
\begin{equation*}
    \trans^{t+1}_s \valvector^t \;\leq\; \max_{\mathbf{p} \in \mathcal{P}(s,\, \agentpol^{t+1}(s))} \mathbf{p}^\top \valvector^t \;=\; \min_{a \in \mathcal{A}} \max_{\mathbf{p} \in \mathcal{P}(s, a)} \mathbf{p}^\top \valvector^t,
\end{equation*}
where the equality uses that $\agentpol^{t+1}(s)$ is the greedy improvement w.r.t.\ $\valvector^t$. Multiplying by $\gamma$ and adding $\costvector_s$ yields the claim.

We now show $\valvector^{t+1} \preccurlyeq \Bellman \valvector^t$ as follows:
    \begin{align*}
        \Bellman \valvector^t - \valvector^{t+1}
        & \succcurlyeq \costvector + \gamma \trans^{t+1} \valvector^t - \valvector^{t+1} & (\costvector + \gamma \trans^{t+1} \valvector^t \preccurlyeq \Bellman \valvector^t) \\
        & = (\identitymatrix - \gamma \trans^{t+1})\valvector^{t+1} + \gamma \trans^{t+1} \valvector^t - \valvector^{t+1} & (\text{Substituting } \costvector) \\
        & = \valvector^{t+1} - \gamma \trans^{t+1} \valvector^{t+1} + \gamma \trans^{t+1} \valvector^t - \valvector^{t+1} & (\text{Expanding}) \\
        & = \gamma \trans^{t+1} (\valvector^t - \valvector^{t+1}) & (\text{Simplifying})
    \end{align*}
    Since $\discountfactor > 0$, $\trans^{t+1} \geq 0$, and $\valvector^t \succcurlyeq \valvector^{t+1}$ (proved above), we conclude $\valvector^{t+1} \preccurlyeq \Bellman \valvector^{t}$.

    Now, to prove the second statement of the lemma we use induction on $t$. The base case $t=0$ holds trivially. For $t > 0$:
    \begin{align*}
        \infnorm{\valvector^t - \valvector^*}
        & \leq \infnorm{\Bellman \valvector^{t-1} - \valvector^*} & ( \valvector^* \preccurlyeq \valvector^t \preccurlyeq \Bellman\valvector^{t-1}) \\
        & = \infnorm{\Bellman \valvector^{t-1} - \Bellman \valvector^*} & (\text{Since } \valvector^* \text{ is the fixed point}) \\
        & \leq \gamma \infnorm{\valvector^{t-1} - \valvector^*} & (\text{By contraction in Lemma \ref{lemma:contractive_value_iteration_rmdp}}) \\
        & \leq \gamma^t \infnorm{\valvector^0 - \valvector^*} & (\text{By induction})
    \end{align*}
    \end{compactitem}
\end{proof}

\section{Proof of Lemma \ref{lemma:lowerbound_potential_rmdp}} \label{app:lemma:lowerbound_potential_rmdp}
\begin{proof}
    \begin{compactitem}
        \item \textbf{(Lower-bound)}
    \begin{align*}
        \valvector^\agentpol_s - \valvector^*_s
        &= \left( \costvector_s + \discountfactor \max_{\pvector \in \uncertaintyset_{s,a}}{ \pvector^\top \valvector^\agentpol} \right)
         - \left( \costvector_s + \discountfactor \max_{\pvector \in \uncertaintyset_{s,\agentpol^*(s)}}{ \pvector^\top \valvector^*} \right)
        && (\text{By Equation~\ref{eq:value_definition_rmdp}})
        \\
        & = \discountfactor \left(\max_{\pvector \in \uncertaintyset_{s,a}}{\pvector^\top \valvector^\agentpol} - \max_{\pvector \in \uncertaintyset_{s,\agentpol^*(s)}}{\pvector^\top \valvector^*}\right) & &(\text{Simplification})
        \\
        &\geq \discountfactor \left(\max_{\pvector \in \uncertaintyset_{s,a}}{\pvector^\top \valvector^*} - \max_{\pvector \in \uncertaintyset_{s,\agentpol^*(s)}}{\pvector^\top \valvector^*}\right)
        && (\text{Since } \valvector^\agentpol \succcurlyeq \valvector^*) \\
        &= \gamma f(s,a)
        && (\text{By Definition})
    \end{align*}
    \item \textbf{(Upper-bound)} Assume $\envpol$ is environment's best response to $\agentpol$ and $\envpol^*$ is environment's best response to $\agentpol^*$. Denote the transition matrix when fixing policies by $\trans^{\agentpol,\envpol}$ and $\trans^{*}$. We begin by expanding the value difference using the value definition from Equation \ref{eq:value_definition_rmdp}:

    \begin{align*}
        \valvector^\agentpol - \valvector^*
        &= (\identitymatrix - \discountfactor\trans^{\agentpol,\envpol})^{-1} \costvector - (\identitymatrix - \discountfactor\trans^*)^{-1} \costvector \\
        &= \bigg( (\identitymatrix - \discountfactor\trans^{\agentpol,\envpol})^{-1} - (\identitymatrix - \discountfactor\trans^*)^{-1} \bigg) \costvector
    \end{align*}

    We apply the matrix identity $A^{-1} - B^{-1} = A^{-1}(B - A)B^{-1}$ to the term inside the norm:

    \begin{align*}
        \valvector^\agentpol - \valvector^* &= (\identitymatrix - \discountfactor\trans^{\agentpol,\envpol})^{-1} \bigg( (\identitymatrix - \discountfactor\trans^*) - (\identitymatrix - \discountfactor\trans^{\agentpol,\envpol}) \bigg) (\identitymatrix - \discountfactor\trans^*)^{-1} \costvector \\
        &= \discountfactor \, (\identitymatrix - \discountfactor\trans^{\agentpol,\envpol})^{-1} (\trans^{\agentpol,\envpol} - \trans^*) (\identitymatrix - \discountfactor\trans^*)^{-1} \costvector
    \end{align*}

    Replacing $(\identitymatrix - \discountfactor\trans^*)^{-1} \costvector$ by $\valvector^*$ (based on Equation~\ref{eq:value_definition_rmdp}) we have the following equation:
\[
    \valvector^\agentpol - \valvector^* \;=\; \discountfactor (\identitymatrix - \discountfactor\trans^{\agentpol,\envpol})^{-1} (\trans^{\agentpol,\envpol} - \trans^*) \valvector^*.
\]
We first bound the vector $(\trans^{\agentpol,\envpol} - \trans^*)\valvector^*$ component-wise. Fix $s \in \States$. Since $\envpol$ is the environment's best response to $\agentpol$, we have $\trans^{\agentpol,\envpol}_s \in \uncertaintyset_{s,\agentpol(s)}$, so:
\begin{align*}
    \bigl( (\trans^{\agentpol,\envpol} - \trans^*) \valvector^* \bigr)_s
    &= \trans^{\agentpol,\envpol}_s \valvector^* - \trans^*_s \valvector^*
    && (\text{Definition of $\trans^{\agentpol,\envpol}_s$ and $\trans^*_s$}) \\
    &\leq \max_{\pvector \in \uncertaintyset_{s,\agentpol(s)}} \pvector^\top \valvector^* - \max_{\pvector \in \uncertaintyset_{s,\agentpol^*(s)}} \pvector^\top \valvector^*
    && (\text{Taking the max over $\uncertaintyset_{s,\agentpol(s)}$}) \\
    &= f(s, \agentpol(s)).
    && (\text{Definition of $f$})
\end{align*}
Define the vector $\mathbf{f}^\agentpol \in \R^n$ by $\mathbf{f}^\agentpol_s := f(s, \agentpol(s))$. The componentwise bound above gives
\[
    (\trans^{\agentpol,\envpol} - \trans^*) \valvector^* \;\preccurlyeq\; \mathbf{f}^\agentpol.
\]
Since $\trans^{\agentpol,\envpol}$ is stochastic, the matrix $(\identitymatrix - \discountfactor\trans^{\agentpol,\envpol})^{-1} = \sum_{i \geq 0} (\discountfactor \trans^{\agentpol,\envpol})^i$ has only nonnegative entries, so multiplying both sides preserves the inequality. Combined with $\valvector^\agentpol \succcurlyeq \valvector^*$, we obtain the nonnegative bounding chain
\[
    \mathbf{0} \;\preccurlyeq\; \valvector^\agentpol - \valvector^* \;\preccurlyeq\; \discountfactor (\identitymatrix - \discountfactor\trans^{\agentpol,\envpol})^{-1} \mathbf{f}^\agentpol.
\]
Taking $\infnorm{\cdot}$ preserves inequalities between nonnegative vectors, so:
\begin{align*}
    \infnorm{\valvector^\agentpol - \valvector^*}
    &\leq \discountfactor \, \infnorm{(\identitymatrix - \discountfactor\trans^{\agentpol,\envpol})^{-1} \mathbf{f}^\agentpol}
    && (\text{Bounding chain above}) \\
    &\leq \discountfactor \, \infnorm{(\identitymatrix - \discountfactor\trans^{\agentpol,\envpol})^{-1}} \cdot \infnorm{\mathbf{f}^\agentpol}
    && (\text{Submultiplicativity of $\linf$ norm}) \\
    &\leq \frac{\discountfactor}{1 - \discountfactor} \cdot \infnorm{\mathbf{f}^\agentpol}
    && (\text{Neumann series; $\trans^{\agentpol,\envpol}$ stochastic}) \\
    &= \frac{\discountfactor}{1 - \discountfactor} \cdot \max_{s \in \States} f(s, \agentpol(s))
    && (\text{Definition of $\linf$ norm and $\mathbf{f}^\agentpol$}) \\
    &= \frac{\discountfactor}{1 - \discountfactor} \, f(\hat{s}, \agentpol(\hat{s})).
    && (\text{Definition of $\hat{s}$})
\end{align*}
This completes the proof.

    \end{compactitem}
\end{proof}

\section{Proof of Lemma \ref{lemma:action_elimination}} \label{app:lemma:action_elimination}
\begin{proof} 
    Assume for the sake of contradiction that there exists $k > l + L$ such that $\agentpol^k(\hat{s}) = \agentpol^l(\hat{s})$. Since the value vectors are non-increasing, $\valvector^{\agentpol^l} \ge \valvector^{\agentpol^k}$. Applying Lemma \ref{lemma:lower_upper_combination_rmdp}:
    \[
    \infnorm{\valvector^{\agentpol^k} - \valvector^*} \ge (1 - \gamma) \infnorm{\valvector^{\agentpol^l} - \valvector^*}.
    \]
    However, from the global convergence rate (Lemma \ref{lemma:consecutive_policy_iteration_bound_rmdp}), we know:
    \[
    \infnorm{\valvector^{\agentpol^k} - \valvector^*} \le \gamma^{k-l} \infnorm{\valvector^{\agentpol^l} - \valvector^*}.
    \]
    We note that if $\|\mathbf{v}^{\sigma^l} - \mathbf{v}^*\|_\infty = 0$, then $\sigma^l$ is optimal, and the algorithm terminates at iteration $l$. Since the sequence continues to $k > l$, we must have $\|\mathbf{v}^{\sigma^l} - \mathbf{v}^*\|_\infty > 0$. We can therefore divide both inequalities by this strictly positive term to obtain:
    % Combining these inequalities implies $1 - \gamma \le \gamma^{k-l}$. Taking the logarithm with base $\gamma$ (reversing the inequality as $\gamma < 1$) yields:
    \[
    \log_\gamma(1-\gamma) \ge k - l \implies L \ge k - l.
    \]
    This contradicts the assumption that $k > l + L$.
\end{proof}

\section{Proof of Theorem \ref{thm:rmdp:iterations}} \label{app:thm:rmdp:iterations}

\begin{proof}
    For each iteration $\ell$, let $\hat{s}_\ell := \argmax_{s \in \States} f(s, \agentpol^\ell(s))$ and call $C_\ell := (\hat{s}_\ell, \agentpol^\ell(\hat{s}_\ell))$ the \emph{critical pair} at $\ell$. We show that each pair $(s, a) \in \States \times \Actions$ can be critical only inside a window of length at most $L$.

    Suppose a fixed pair $(s, a)$ is critical at two iterations $\ell_1 < \ell_2$, i.e., $C_{\ell_1} = C_{\ell_2} = (s, a)$. Then $\hat{s}_{\ell_1} = s$ and $\agentpol^{\ell_1}(s) = \agentpol^{\ell_2}(s) = a$. Applying Lemma~\ref{lemma:action_elimination} at $\ell_1$ (so that $\hat{s}$ in the lemma equals $s$) gives $\agentpol^k(s) \neq a$ for all $k > \ell_1 + L$. Hence $\ell_2 \leq \ell_1 + L$.

    Since every iteration has a critical pair and there are at most $n \cdot m$ distinct pairs, each contributing a window of at most $L$ iterations, the total number of iterations $T$ is bounded by:
    \[
    T \leq n \cdot m \cdot L = n \cdot m \cdot \log_\gamma(1 - \gamma) = n \cdot m \cdot \frac{\log(1 - \discountfactor)}{\log \discountfactor}.
    \]
\end{proof}